\newtheorem{theorem}{Theorem}
\newtheorem{lemma}{Lemma}
\newtheorem{corollary}{Corollary}
\title{\bf The Powers of Precision: Structure-Informed Detection in Complex Systems \\ From Customer Churn to Seizure Onset}
\author{
    Augusto Santos\thanks{Augusto Santos (email: augusto.santos@lx.it.pt) is with the Instituto de Telecomunicações, Lisbon, Portugal.} , 
    Teresa Santos\thanks{Teresa Santos (email: teresafsantos21@hotmail.com) is with EY, Lisbon, Portugal.}, 
    Catarina Rodrigues\thanks{Catarina Rodrigues (email: catarinafrodrigues2002@gmail.com) is with Cegid, Braga, Portugal.}, 
    and José M. F. Moura\thanks{José M. F. Moura (email: moura@andrew.cmu.edu) is with the ECE Department at Carnegie Mellon University, Pittsburgh, PA, USA.}
}
\begin{document}

\maketitle
\thispagestyle{empty}
\pagestyle{plain}

\begin{abstract}
Emergent phenomena---onset of epileptic seizures, sudden customer churn, or pandemic outbreaks---often arise from hidden causal interactions in complex systems. We propose a machine learning method for their early detection that addresses a core challenge: unveiling and harnessing a system’s latent causal structure despite the data-generating process being unknown and partially observed. The method learns an optimal feature representation from a one-parameter family of estimators---powers of the empirical covariance or precision matrix---offering a principled way to \emph{tune in} to the underlying structure driving the emergence of critical events. A supervised learning module then classifies the learned representation. We prove structural consistency of the family and demonstrate the empirical soundness of our approach on seizure detection and churn prediction, attaining competitive results in both. Beyond prediction, and toward explainability, we ascertain that the optimal covariance power exhibits evidence of good \emph{identifiability} while capturing structural signatures, thus reconciling predictive performance with interpretable statistical structure.
\end{abstract}

\section{Introduction}
\label{Intro}

Complex systems---ranging from biological and contagion networks to economic markets and online communities---often exhibit emergent macroscopic phenomena that arise from intricate patterns of microscopic interactions~\cite{emergence_phenomena}.~These emergent events, such as epileptic seizures~\cite{Seizure}, the onset of pandemics~\cite{pandemics_complex_systems,Vesp_Complex,augustoasilo, ogura}, market crashes~\cite{Market_Crashes}, or rapid shifts in customer behavior~\cite{Churn_Emergence}, can have profound consequences; detecting them early is thus critical for timely intervention. Yet, this task is hindered by the fact that the \emph{structural laws} governing such systems are latent, domain-specific, and shaped by the statistical and dynamical nature of the data.

Recent advances in network science and statistical learning have underscored the central role of \emph{structure}---the hidden causal and relational architecture of a system---in driving such emergent behavior~\cite{emergent_behavior_ML,paper:CDC}. Therefore, accurate recovery of this structure from observational data can greatly enhance early detection and classification in complex systems. However, such recovery is challenging: classical causal inference and structure-identification methods chiefly rely on strong and often contrived assumptions about the nature of the data-generating process---e.g., Gaussianity, (non)linearity, stationarity, network sparsity, or full observability---risking degraded performance when these assumptions are defied across domains. In particular, these methods tend to lose their technical guarantees of consistency or sample-complexity optimality when deployed across systems whose governing laws detour away from the specifics underlying their design, as in most real-world applications~\cite{SantosA}.

Herein, we introduce a \emph{structure-informed feature representation} built from a one-parameter family of estimators---powers of the empirical covariance matrix. We argue that \emph{structural features}---those entailing information about the causal graph of fundamental dependencies linking state variables---can serve as a \emph{universal signal} for detecting emergent phenomena, provided they adapt to the nature of the underlying data. We observe that the true data-generating law is typically unknown; yet distinct covariance powers yield consistent estimators of a system’s latent structure depending on the generative mechanism---Gaussian graphical models, Structural Equation Models, ferromagnetic Ising models, linear diffusion processes, etc. Selecting the optimal power thus becomes a problem of \emph{structural tuning}---learning, from data alone, which map best preserves the concealed architecture that governs critical events despite the uncertainty about the underlying dynamical law.

The importance of the causal graph structure can be vividly illustrated in two contrasting scenarios. In Networked Dynamical Systems (NDS) such as those modeled by the Kuramoto framework (a widely adopted representation for seizure dynamics~\cite{Kuramoto_Seizure}), whether a set of coupled oscillators synchronizes over time can fundamentally depend upon the structural subtleties of the underlying interaction network---sometimes the presence or absence of a single link seals the fate~\cite{synchronization_sensitivity}. Conversely, within the scope of Structural Causal Models~\cite{pearl_2009}, correlation without structure can mislead: ice cream sales and shark attacks tend to rise and fall together, yet both are driven by a common latent seasonal factor. Without identifying such hidden drivers or the causal structure of influence between variables thereof, predictions and interventions risk going astray---banning ice cream sales will certainly not render anyone safer in the water by averting shark attacks. These examples convey a central claim: only by unravelling the underlying causal mechanisms can we move from \emph{correlation to explanation, from patterns to principles, and from data to knowledge}---the overarching foundation behind our proposed structure-informed feature representation for early detection of emergent phenomena in complex systems.

A further challenge is that structure identification methods often rely on rich temporal or multivariate observations, an assumption that falls short in many real-world applications. For example, in customer churn prediction, only a single sample vector~$\mathbf{x}\in\mathbb{R}^{N}$ per customer is typically available. Thus, we introduce a novel \emph{training–testing adaptation} that preserves the feature design despite such data scarcity. During training, the model uses ensemble-based empirical covariances---each raised to the candidate power---as input features. At test time, the covariance is approximated by the rank-one matrix $\mathbf{x}\mathbf{x}^{\top}$---regularized to be positive definite---and then raised to the learned optimal power, enabling real-time or individual-level classification. 

We evaluate our structure-informed approach with the proposed family over two contrasting case studies: (i) EEG-based early seizure detection, where the method attempts to encode the underlying latent structural connectivity patterns from the multivariate time series, and (ii) customer churn prediction, whereby only a single sample vector is available per client at test time. The advantage of our approach lies in its \emph{universality}: the same computational and mathematical principles apply indelibly across these domains, despite their markedly discerning nature in data modalities.

We note that a practical impediment to supervised learning in both settings lies in their prevailing class imbalance: seizures and churn events are exceedingly rare\footnote{Churn prevalence varies across economic regimes. The dataset considered reflects a stable operational setting, wherein churn is rare yet consequential—a regime that renders prediction critical.}. We breach common practice by eschewing synthetic rebalancing methods like SMOTE~\cite{SMOTE}. Such procedures inexorably bias models toward artificial statistical artifacts, jeopardizing the very structural information our framework seeks to harness. Indeed, drawing synthetic data that is consistent with the latent dependencies would demand precise foreknowledge of the causal graph---the elusive element. Thus, we strive to safeguard the structural signatures at the core of our approach by preserving the natural data, albeit imbalanced, rather than risk engendering spurious associations and thereby forfeiting the structure-informed design.

Beyond predictive performance, we analyze the \emph{identifiability} of the learned features in their natural habitat: the manifold of symmetric positive-definite (SPD) matrices endowed with the Affine–Invariant Riemannian (AIR) metric~\cite{spd_manifold}. We observe that the mean pairwise AIR distance between feature matrices of the same class is consistently shorter than that between features of opposing classes. We further find that the variance of intra-class AIR distances is likewise smaller than its inter-class counterpart. These observations suggest that the learned representation yields well-separated class clusters, indicating that the structural view is not only predictive but also discriminative in a principled sense. This substantiates our main hypothesis that network structure drives emergence, yielding both competitive detection accuracy and interpretable indicators of criticality in complex systems.

The main contributions of this work are now summarized:
\begin{itemize}
\item \textbf{Structural Consistency under Partial Observability.}
We prove that, for graph-based Matérn random fields, suitable powers of the covariance matrix are \emph{structurally consistent estimators} of the latent interaction graph even when only a subset of nodes is observed. 
\item \textbf{Feature Design.} As a result, we introduce a \emph{structure-informed feature representation} based on a one-parameter family of estimators—powers of the empirical covariance matrix. These are fed as input features to supervised learning models for classification.
\item \textbf{Training Adaptation.} We propose a \emph{training–testing adaptation} for \emph{small-data} regimes, enabling one-shot, individual-level predictions whilst preserving structural informativeness learned during training.
\item \textbf{Feature Identifiability.} We conduct a simple empirical analysis on the manifold of symmetric positive-definite matrices (SPD) endowed with the AIR-metric, providing indicative evidence of non-trivial discriminative power of the proposed structure-informed features.
\item \textbf{Structural Signatures.} We link predictive performance to interpretable statistical structure by revealing that the learned optimal powers capture meaningful structural signatures underlying emergent phenomena.
\end{itemize}

\section{Related work}

A unifying theme across modern neuroscience, network science, and machine learning is that 
\emph{structure—rather than correlation—governs emergent behavior}. Complex systems are mediated by latent interaction graphs: 
inter-regional communication along structural connectome pathways in the brain 
\cite{science_brain_communi,brain_communic}, 
homophily or influence networks in customer ecosystems \cite{homofilia}, 
regulatory and interaction architectures in physiological and biological systems \cite{GRN_net}, 
and social learning in networked populations \cite{Javidi,tomo_socialicassp,social_learning_book}. 
In these settings, critical events such as seizure onset, pandemic outbreaks, disease progression, 
customer churn, or collective belief formation arise as macroscopic manifestations of hidden network interactions. 
Yet, in most applications, neither the interaction graph nor the data-generating law is known, fundamentally constraining feature design and inference.

\subsection{Representation Learning for Domain Applications}

Within this broader landscape, customer churn prediction has traditionally focused on improving predictive performance through increasingly sophisticated machine learning pipelines. Classical approaches combine standard classifiers such as Naïve Bayes, decision trees, and ensemble methods with optimization or evolutionary strategies to address automatic representation learning challenges \cite{amin2023adaptive_churn_nb_ec,wagh2024churn_telecom_ml,prabadevi2023customer_churn_ml}. More recently, churn prediction has evolved from traditional statistical models such as logistic regression and survival analysis \cite{verbeke2012new2,churn_pred_mutual} to high-capacity deep learning architectures \cite{keramati2014improved,manzoor2024review_churn}, including hybrid frameworks \cite{ouf2024hybrid_churn_telecom}, deep temporal models \cite{yang2025cnn_bilstm_attention_churn,liu2024hybrid_nn_churn}, and integrated segmentation--prediction pipelines \cite{wu2021integrated_churn_segmentation}. Recent work has also underscored the role of graph-based feature engineering, noting that certain customer interactions foster implicit network effects conducive to churn dynamics \cite{lee2025tempodegraphnet,nimma2025integrated_sentiment_gnn}. While these methods often achieve strong empirical performance, the resulting graph representations remain largely implicit and task-specific, offering limited interpretability. Across these approaches, the structural minutiae of relational dependencies—e.g., service bundles, billing behavior, or contract characteristics—are typically ignored or incorporated implicitly through task-driven representation learning, without guarantees of stability or consistency under partial observability or distributional shift.

On the other hand, in neuroscience, a growing body of literature demonstrates that brain function and learning are shaped by 
the topology of neural interaction graphs.
Foundational surveys \cite{Bassett_Network_Neuroscience} and recent empirical studies 
show that network organization governs signal flow, representational geometry, and learning efficiency 
\cite{kahn2025network_structure_representations,general_network_emergence_brain,general_brain_network}. In health sciences, in general, incorporating patient similarity networks and physiological interaction graphs has improved early disease detection \cite{rajkomar2018scalable,barabasi2011interactome,stam2014modern}.
These works highlight the importance of structural priors, but do not address structural recovery under diffusion and partial observability--a challenging problem that remains an active area of research~\cite{tomo_isit,PAMI_Partial,Geigeretal15,pmlr-v286-negro25a}.
These findings support the view that macroscopic neural activity patterns emerge from latent structural constraints,
motivating methods that aim to exploit or recover this structure from observed data.

Along this line, epileptic seizures are increasingly construed as emergent phenomena rather than local signal abnormalities.
Early work relied primarily on spectral or univariate EEG features \cite{mormann2007seizure}, spanning classical signal-processing pipelines and deep temporal architectures. Representative approaches include hybrid methods combining handcrafted time--frequency decompositions with conventional classifiers—such as Fourier- or empirical mode--based features coupled with support vector machines or gradient boosting \cite{saidi2021cnn_svm_seizure,amer2023pft_eeg,wu2020eemd_xgboost_seizure}—as well as purely data-driven convolutional and recurrent neural networks \cite{alharthi2022epileptic_eeg_detection,abdelhameed2021deep_seizure_children,wang2021one_dimensional_cnn_seizure,zhang2022bidirectional_gru_seizure,liu2024hybrid_cnn_lstm_seizure}. These methods constitute strong empirical baselines but largely treat EEG recordings as multivariate time series, with inter-channel dependencies either waved or absorbed as surrogate in the learning architecture.


Motivated by the evidence that seizure dynamics are shaped by network interactions, subsequent work advocated explicit representation for functional connectivity and network-level dynamics. This perspective led to graph-based learning approaches, including causality-informed pipelines that estimate directed functional networks prior to learning \cite{wang2020dtf_cnn_seizure} and graph neural networks that operate on learned or predefined EEG graphs to capture temporal and inter-channel dependencies \cite{truong2018convolutional,liu2020deep,covert2019temporal_gcn_seizure,yang2022eeg_gnn_seizure,razi2022efficient_gcn_seizure,quadri2024stacked_cnn_bilstm_seizure}. More recent models further incorporate attention mechanisms and dynamic connectivity estimation to capture spatiotemporal dependencies in EEG \cite{Automated_Seizure_Detection,Seizure_ICLR,Seizure_Volkan_ICML}. 

While modeling inter-channel interactions can improve predictive performance, existing approaches typically learn connectivity implicitly within high-capacity models, yielding limited interpretability and no guarantees of consistent recovery of the latent interaction structure under partial observability. In contrast, our work shifts the focus from architectural complexity to principled structure-informed feature design, achieving competitive performance with standard classifiers and without deploying artificial data rebalancing.

\subsection{Matérn Random Fields over Graphs}

A central difficulty in structural recovery--and, by extension, in structure-informed feature design--is that observed data typically reflect the outcome of a diffusion process already integrated over time. More concretely, due to limited temporal observability, each measurement often aggregates 
multiple unobserved propagation steps—effectively encoding multi-hop interactions and obscuring local network structure.\footnote{
For example, sampling a Markov process at coarse time intervals yields observations that conflate many intermediate transitions.}
Matérn random fields provide a minimal and principled mathematical framework for this setting.
In both continuous and graph-structured domains, a Matérn field is governed by a fractional diffusion operator of the form
$(\kappa^2 D + L)^{\alpha/2}$ \cite{borovitskiy2021graphmatern}, 
where the Laplacian $L$ encodes network structure and the exponent $\alpha\in\mathbb{R}$ controls the accumulation of multi-hop interactions.
This renders Matérn models particularly pertinent to scenarios of partial temporal observability,
where local interactions are smoothed or mixed by diffusion before being observed.

In the brain, the precise generative law governing large-scale activity remains unknown and is likely heterogeneous across spatial scales, brain states, and individuals. Nevertheless, Matérn random fields provide a principled mathematical framework that embodies a minimal ontological commitment: diffusion-driven propagation, stochastic forcing, and multi-scale or long-range spatial interactions due to partial temporal observability. Indeed, Matérn models are now widely used in spatial statistics \cite{lindgren2011spde} and have gained substantial traction in neuroimaging,
where they capture smooth yet heterogeneous spatial organization of neural activity 
\cite{mejia2020bayesian,spencer2022spatial,siden2021bayesian,lindgren2024tenyears}.
Extensions to graphs replace the Laplace--Beltrami operator with a graph Laplacian, yielding Matérn Gaussian processes over graphs 
\cite{borovitskiy2021graphmatern}.
However, existing work assumes the interaction graph is known \emph{a priori};
whether latent network structure can be consistently recovered from spatially partially observed Matérn random fields has remained unexplored. In this direction, empirical observations~\cite{brainaugusto} suggest that certain transforms of the covariance matrix may encode nontrivial structural information of brain activity—a phenomenon that we formalize and prove for Matérn models under partial observability.

We evaluate the proposed structure-informed representations on seizure detection and churn prediction—two structurally distinct systems in nature, arising from biological and socio-economic processes, respectively. These benchmarks are deliberately chosen to emphasize that our approach targets universal structural mechanisms underlying emergent phenomena in complex systems, rather than domain-specific generative models or signal characteristics.

\section{Problem formulation}

\label{sec:fractional_fields}

We consider a graph-based Matérn random field $\mathbf{y} = \left(y_1,\ldots,y_N\right)\in\mathbb{R}^N$ defined over the nodes of an underlying interaction graph, where $y_i$ is the field (or state) associated with node $i$. Specifically, $\mathbf{y}$ is generated as the solution to the stochastic equation (degree weighted generalization of the Matérn model in~\cite{borovitskiy2021graphmatern})
\begin{equation}\label{eq:GraphMatern}
\left(\kappa^2 D + L\right)^{\alpha/2}\mathbf{y} \;=\; \mathbf{x},
\end{equation}
where $L = D-A \in\mathbb{S}^{N\times N}$ denotes the graph Laplacian encoding the latent interaction structure, $A$ is the latent ground-truth weighted interaction matrix, $D={\sf diag}(A\mathbf{1})$ is the corresponding weighted degree matrix conveying the weighted degrees of the nodes across its main diagonal, $\kappa^2>0$ is a scale parameter, and $\alpha\in\mathbb{R}$ controls the smoothness of the field, or equivalently the accumulation of multi-hop interactions induced by diffusion on the graph. The driving noise $\mathbf{x}$ is a zero-mean random vector with homogeneous diagonal covariance $\Sigma = \mathbb{E}\!\left[\mathbf{x}\mathbf{x}^{\top}\right] = \sigma^2 I$.

In classical Matérn theory, $\alpha>0$ ensures a valid generative Gaussian random field. 
In this work, while the generative interpretation is anchored in this regime, 
we allow $\alpha\in\mathbb{R}$ when constructing covariance and precision powers as analytical estimators, 
motivated by both empirical evidence and our theoretical results showing that such extensions preserve structural information under partial observability.

The fractional-field model~\eqref{eq:GraphMatern} subsumes several widely used
statistical models on graphs as special or limiting cases, depending upon the parameters $\kappa$, $D$, $\alpha$ regimes, thereby providing
a unified parameterized framework. Namely, it entails Structural Equation Models (SEM), Gaussian Graphical Models (GGM), limiting distribution of diffusive limits, to cite a few examples~\cite{borovitskiy2021graphmatern}. Together, these correspondences show that the graph-based Matérn model
defines a unified family of graphical models interpolating between SEMs,
Gaussian graphical models and equilibrium diffusions with the parameter $\alpha$ critically governing the modeling regime by controlling how
local interactions accumulate across observations.

\paragraph{Powers of the precision matrix and structural recovery.}
In general, the interaction matrix $A$—and hence the Laplacian $L = D - A$—is unknown and must be inferred from observations of the random field $\mathbf{y}$.
From the defining equation of the graph-based Matérn random field in~\eqref{eq:GraphMatern}, we may write
\begin{equation}
\mathbf{y}
= \left(\kappa^2 D + L\right)^{-\alpha/2}\mathbf{x},
\end{equation}
where $\mathbf{x}\sim \mathcal{N}(0,\sigma^2 I)$. Therefore, the population covariance matrix $C \;=\; \mathbb{E}\!\left[\mathbf{y}\mathbf{y}^{\top}\right]$ is given by
\begin{equation}\label{eq:matern_cov}
\begin{aligned}
C
&= \left(\kappa^2 D + L\right)^{-\alpha/2}
\,\mathbb{E}\!\left[\mathbf{x}\mathbf{x}^{\top}\right]\,
\left(\kappa^2 D + L\right)^{-\alpha/2} \\
&= \sigma^2 \left(\kappa^2 D + L\right)^{-\alpha}.
\end{aligned}
\end{equation}

Equation~\eqref{eq:matern_cov} shows that the covariance of the observed field is a
fractional inverse power of the latent interaction operator
$\kappa^2 D + L$.
Equivalently, the corresponding precision matrix yields $C^{-1} = \sigma^{-2}\left(\kappa^2 D + L\right)^{\alpha}.$
More generally, taking arbitrary real powers of the covariance yields
\begin{equation}\label{eq:cov_powers}
C^{-\beta}
= \sigma^{-2\beta}\left(\kappa^2 D + L\right)^{\alpha\beta},
\qquad \beta\in\mathbb{R}.
\end{equation}

Equation~\eqref{eq:cov_powers} highlights that suitable powers of the covariance
(or precision) matrix act as analytic transforms of the latent interaction operator.
In particular, when $\beta = 1/\alpha$,
\begin{equation}\label{eq:structural_recovery}
C^{-1/\alpha}
= \sigma^{-2/\alpha}\left(\kappa^2 D + L\right),
\end{equation}
and we can recover the underlying ground-truth interaction network structure encoded in the off-diagonal entries of $L$.

This observation is central to our approach: although the interaction graph is
not directly observed, its structure is embedded in appropriate powers of the
population covariance. Since the covariance $C$ can be consistently estimated
from samples of $\mathbf{y}$, Equation~\eqref{eq:structural_recovery} shows that
covariance powers provide \emph{structurally consistent estimators} of the
underlying interaction operator under the Matérn generative model.

\section{Structural Consistency under Partial Observability}

In practice, the full state vector $\mathbf{y}\in\mathbb{R}^N$ is rarely observed.
Moreover, when deployed in learning pipelines, feature selection further
reduces the set of available observables.
Therefore, actual measurements are available only on a subset of nodes
$\mathcal{S}\subset\{1,\dots,N\}$.
Let $\mathbf{y}_{\mathcal{S}}$ denote the subvector of $\mathbf{y}$ indexed by $\mathcal{S}$. It is important to understand when structural information is still conveyed in the observables (despite the influence by confounders) or whether this information is fundamentally compromised. In this regard, we ask whether there exists a power of the covariance computed
solely from the observable nodes that still conveys structural information about
the latent interaction graph. That conforms to our main technical contribution stated in Theorem~\ref{thm:partial_struct_consistency}.
  
Let $\Pi_{\mathcal{S}}$ be the corresponding coordinate projection matrix.
The covariance of the observed field is then
\begin{equation}
C_{\mathcal{S}}
:= \mathbb{E}\!\left[\mathbf{y}_{\mathcal{S}}\mathbf{y}_{\mathcal{S}}^{\top}\right]
= \Pi_{\mathcal{S}}\, C \,\Pi_{\mathcal{S}}^{\top},
\end{equation}
that is, the principal submatrix of the population covariance $C$ supported on
$\mathcal{S}$.


Under full observability, structural recovery is achieved via suitable powers of the
covariance (or precision) matrix.
Under partial observability, however, one can only estimate powers of the observed
covariance $C_{\mathcal{S}}$ from samples of $\mathbf{y}_{\mathcal{S}}$.
A fundamental difficulty arises from the fact that projection and matrix
exponentiation do \emph{not} commute:
\begin{equation}\label{eq:noncomm}
\left(C_{\mathcal{S}}\right)^{-\beta}
\;\neq\;
\left[C^{-\beta}\right]_{\mathcal{S}},
\qquad \beta\in\mathbb{R},
\end{equation}
where the latter term~$\left[C^{-\beta}\right]_{\mathcal{S}}$ would be the desirable estimator to compute since it contains the structural information as discussed around equation~\eqref{eq:structural_recovery}. However, to compute this term, we would need information about all nodes.

To quantify this difference, we define the commutation error
\begin{equation}\label{eq:delta_alpha}
\Delta_{-\beta}
:= \left([C]_{\mathcal{S}}\right)^{-\beta}
- \left[C^{-\beta}\right]_{\mathcal{S}}.
\end{equation}
When $\Delta_{-\beta}=0$, powers of the partially observed covariance~$\left(C_{\mathcal{S}}\right)^{-\beta}$ inherit the same
structural information as the corresponding principal submatrix $\left[C^{-\beta}\right]_{\mathcal{S}}$. This holds, e.g., whenever
$C_{\mathcal{S}\mathcal{S}'}=0$, i.e., the observed and latent variables are
uncorrelated and the covariance is block diagonal--clearly not the general case.

The central question addressed in this section is therefore:
\emph{under what conditions does $\left(C_{\mathcal{S}}\right)^{-\beta}$ remain a
structurally consistent estimator of the latent interaction graph, despite the
presence of unobserved nodes?}

%

We answer this question for graph-based Matérn random fields by showing that,
under mild and interpretable conditions on the interaction operator and the
noise, suitable covariance powers preserve structural consistency even under
partial observability.


The following result characterizes when the commutation error $\Delta_{-\beta}$
is sufficiently small to preserve the sparsity pattern of the latent interaction
operator.

\begin{theorem}[Structural consistency under partial observability]
\label{thm:partial_struct_consistency}
Let $\mathbf{y}\in\mathbb{R}^N$ follow the graph-based Matérn model
\[
\mathbf{y} = \mathcal{L}^{-\alpha/2}\mathbf{x},
\qquad
\mathcal{L} := \kappa^2 D + L,
\qquad
\mathbf{x}\sim\mathcal{N}(0,\sigma^2 I),
\]
with $\alpha\neq 0$, where $L = D-A$, $A=A^\top\ge 0$, and 
\begin{equation}
\rho(I-\mathcal{L})<1,
\end{equation}
which grants invertibility of $\mathcal{L}$. If $\|A_{\mathcal S\mathcal S'}\| \le g(\mathcal{L},\alpha)$,
where $g(\mathcal{L},\alpha)>0$ is defined in Appendix~\ref{app:frac-consistency} and depends upon the regime ($\alpha>1$, or $\alpha>0$ or general $\alpha\neq 0$), then $(C_{\mathcal S})^{-1/\alpha}$ is structurally consistent: all off-diagonal entries of $(C_{\mathcal S})^{-1/\alpha}$ associated with disconnected pairs lie below the smallest entry across connected pairs.
\end{theorem}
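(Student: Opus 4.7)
The plan is to reduce the structural consistency claim to a quantitative entry-wise bound on the commutation error $\Delta_{-1/\alpha}$ defined in~\eqref{eq:delta_alpha}, and then to establish that bound via the functional calculus of the positive-definite operator $C = \sigma^2\mathcal{L}^{-\alpha}$.

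First I would decompose $(C_{\mathcal S})^{-1/\alpha} = [C^{-1/\alpha}]_{\mathcal S} + \Delta_{-1/\alpha}$ and invoke~\eqref{eq:structural_recovery} to identify the ideal term as $[C^{-1/\alpha}]_{\mathcal S} = \sigma^{-2/\alpha}\mathcal{L}_{\mathcal S\mathcal S} = \sigma^{-2/\alpha}(\kappa^2 D_{\mathcal S\mathcal S} - A_{\mathcal S\mathcal S})$. Its off-diagonal $(i,j)$ entry equals $0$ exactly on disconnected pairs inside $\mathcal S$ and $-\sigma^{-2/\alpha}A_{ij}$ otherwise, so structural consistency in the thresholding sense reduces to the entry-wise gap condition $\|\Delta_{-1/\alpha}\|_{\max} < \tfrac{1}{2}\sigma^{-2/\alpha}A_{\min}$, where $A_{\min}$ is the smallest positive weight on an edge with both endpoints in $\mathcal S$. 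The threshold function $g(\mathcal{L},\alpha)$ of the statement then emerges by back-propagating this requirement to an upper bound on $\|A_{\mathcal S\mathcal S'}\|$.

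To bound $\Delta_{-1/\alpha}$ itself, I would exploit that $\rho(I-\mathcal{L})<1$ forces $\mathcal{L}\succ 0$, so $C$ has spectrum in a compact subset of $(0,\infty)$ and every real power is well-defined. I would then split into the three regimes announced in the statement. For the special case $\alpha=1$ (covered by the $\alpha>0$ regime) the Schur complement yields the closed form $\Delta_{-1} = -\sigma^{-2}A_{\mathcal S\mathcal S'}\mathcal{L}_{\mathcal S'\mathcal S'}^{-1}A_{\mathcal S'\mathcal S}$, already quadratic in the leakage block since $\mathcal{L}_{\mathcal S\mathcal S'}=-A_{\mathcal S\mathcal S'}$ (the diagonal $D$ vanishes off block). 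For $\alpha>1$, so that $\beta:=1/\alpha\in(0,1)$, I would apply the Balakrishnan identity $M^{-\beta} = \tfrac{\sin(\pi\beta)}{\pi}\int_0^\infty t^{-\beta}(tI+M)^{-1}\,dt$ to both $M=C_{\mathcal S}$ and to the submatrix of $(tI+C)^{-1}$ on $\mathcal S$; a Schur-complement identity on $tI+C$ collapses the integrand to a term controlled by $A_{\mathcal S\mathcal S'}(tI+C_{\mathcal S'\mathcal S'})^{-1}A_{\mathcal S'\mathcal S}$, producing after integration a bound of the shape $\|\Delta_{-1/\alpha}\|\le c(\mathcal{L},\alpha)\|A_{\mathcal S\mathcal S'}\|^{2}$. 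For general $\alpha\neq 0$ I would switch to the Dunford--Cauchy representation $M^{-1/\alpha} = \tfrac{1}{2\pi i}\oint_\gamma z^{-1/\alpha}(zI-M)^{-1}\,dz$ on a contour $\gamma$ enclosing the spectrum of $C$, which accommodates both signs of the exponent and integer powers larger than one. Inverting the gap of the previous paragraph then produces $g(\mathcal{L},\alpha)$ in closed form, and its piecewise nature simply reflects the fact that $c(\mathcal{L},\alpha)$ degrades as $\beta$ leaves the well-behaved range $(0,1)$.

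The main obstacle will be the non-commutativity of submatrix extraction with fractional powers: although $[\,\cdot\,]_{\mathcal S}$ commutes with polynomials, the submatrix of a resolvent is \emph{not} the resolvent of the submatrix, and that gap is exactly what the Schur-complement step quantifies. Controlling $\|(tI+C_{\mathcal S'\mathcal S'})^{-1}\|$ uniformly in $t$---or along the Dunford contour---is delicate for small $|\alpha|$ and strongly anisotropic spectra, where the integrand tails dominate and would otherwise yield vacuous constants. The hypothesis $\rho(I-\mathcal{L})<1$ is precisely what renders these tails summable in closed form and underlies the explicit piecewise definition of $g(\mathcal{L},\alpha)$ deferred to Appendix~\ref{app:frac-consistency}.
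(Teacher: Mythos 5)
Your proposal follows essentially the same route as the paper's Appendix~\ref{app:frac-consistency}: reduce structural consistency to a half-gap bound on the commutation error $\Delta_{-1/\alpha}$, represent fractional powers via the Balakrishnan/Stieltjes integral for $\beta=1/\alpha\in(0,1)$ and the Dunford--Taylor contour for general $\beta$, and control the resolvent difference by a Schur-complement-plus-Neumann argument that is quadratic in the cross-block, finally inverting to obtain $g$. The only step you gloss over is that the Schur complement naturally produces $C_{\mathcal S\mathcal S'}$ rather than $A_{\mathcal S\mathcal S'}$, so one still needs the paper's binomial-series estimate $\|C_{\mathcal S\mathcal S'}\|\le \sigma^2\alpha(1-\rho)^{-\alpha-1}\|A_{\mathcal S\mathcal S'}\|$ (Lemmas~\ref{lem:spec-relation} and~\ref{lem:Ak-SSp}) to convert the bound into the stated condition on $\|A_{\mathcal S\mathcal S'}\|$.
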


The condition $\|A_{\mathcal S\mathcal S'}\| \le g_\alpha(A)$ formalizes a mild
latent-confounding regime, ensuring that unobserved nodes do not exert a
dominant \emph{coherent} influence on the observed subsystem. The proof to Theorem~\ref{thm:partial_struct_consistency} is provided in Appendix~\ref{app:frac-consistency}.

\textbf{Remark.} The proof to Theorem~\ref{thm:partial_struct_consistency} presented in Appendix~B is novel and may be of independent interest. We rely on the Dunford-Taylor integral characterization of the powers $\left(C_{\mathcal S}\right)^{-\beta}$ and $\left[C^{-\beta}\right]_{\mathcal S}$ to represent them as a transform of their fixed inverse powers. With this, we could resort to Schur complement to design proper bounds on the gap error $\Delta_{-\beta}$ and establish the conditions wherein $\left(C_{\mathcal S}\right)^{-\beta}$ is structurally consistent--or equivalently, the matrix projection and exponentiation commute in a structural sense.  

\section{Methodology}
\label{sec:method}

We leverage the fact that latent interaction structure is encoded in analytic
powers of the covariance matrix, and that this structural blueprint is stable against partial observability (Theorem~\ref{thm:partial_struct_consistency}).
Because the data-generating mechanism is unknown in practice, we treat covariance
power transforms as a \emph{model-agnostic} one-parameter family of structure-informed
representations and learn the exponent directly from data using validation
performance.

Remark that from a feature design perspective, this approach naturally fulfills a minimal set of competing 
desiderata: structural interpretability, robustness to partial observability, 
and adaptability to unknown generative mechanisms.

\subsection{Supervised learning pipeline}
\label{subsec:pipeline}

\begin{figure}[t]
	\centering
	\includegraphics[scale=0.55]{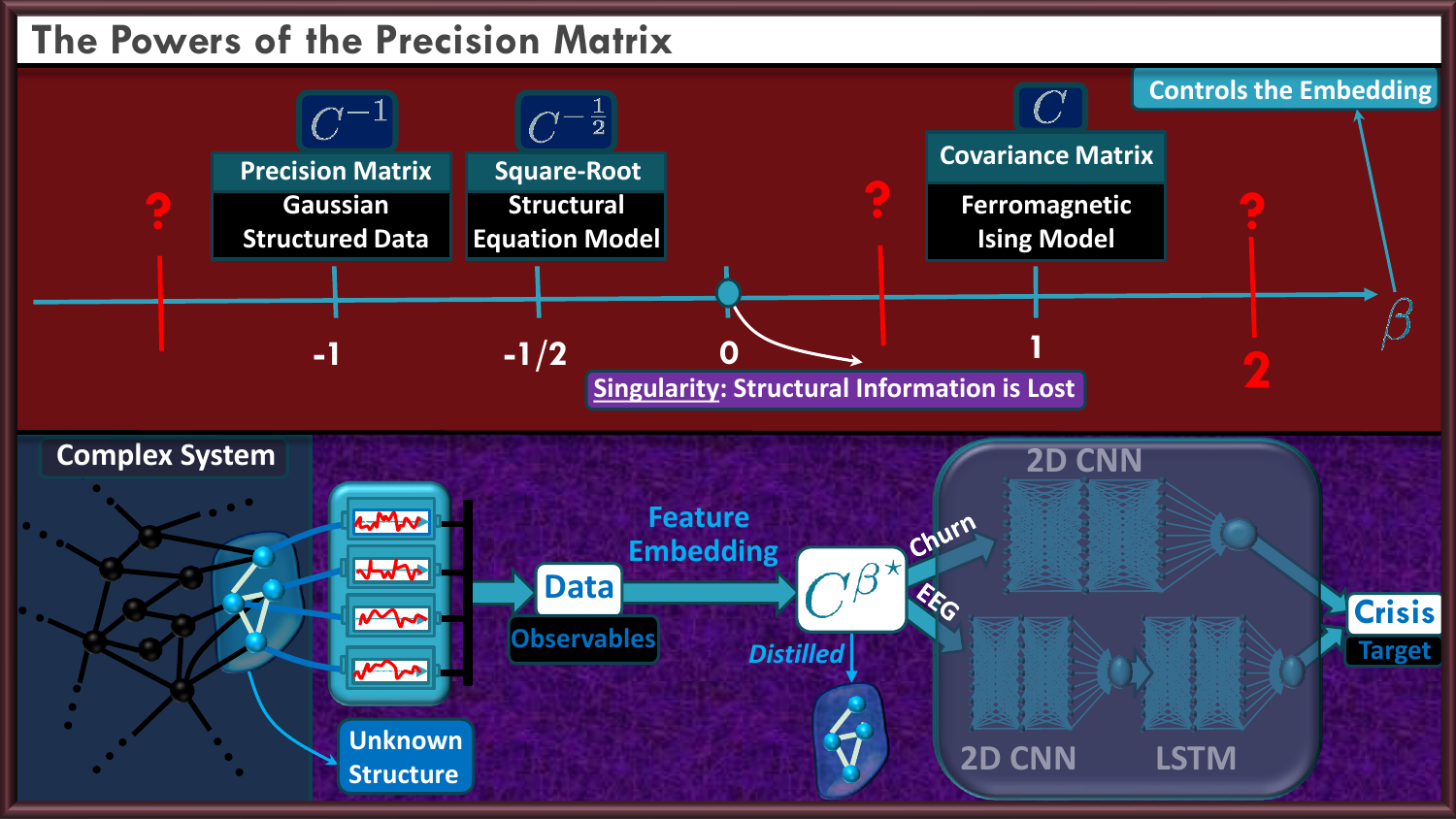}
	\caption{Proposed approach. \textbf{Top:} theoretical regimes in which specific
	covariance powers are structurally consistent. \textbf{Bottom:} pipeline:
	data $\mapsto$ covariance $\mapsto$ power transform $C^{\beta}$ $\mapsto$
	supervised classifier; $\beta^\star$ is selected on a train--validation split.}
	\label{fig:PowersPrecision1}
\end{figure}

Fig.~\ref{fig:PowersPrecision1} summarizes the method. Given samples from a system,
we form an empirical covariance $C$ (or a regularized surrogate thereof), apply a power
transform $C^\beta$ for candidate $\beta\in\mathbb{R}$, and use $C^\beta$ as a
structure-informed feature matrix for a downstream supervised classifier. Notably, specific transforms like $\beta = -1$ or $1/2$ can be estimated directly, bypassing explicit covariance computation. Different values of $\beta$ interpolate between classical structural estimators. In particular, the precision matrix ($\beta=-1$) is structurally consistent when
samples are \emph{i.i.d.} and Gaussian~\cite{prob_graph_modes};
the square root of the precision ($\beta=-\tfrac{1}{2}$) is structurally consistent
for data generated by linear SEM;
and the covariance matrix itself ($\beta=1$) is structurally consistent for
ferromagnetic Ising models~\cite{Bento2009}.
We therefore select the exponent \emph{empirically}: the learned $\beta^\star$
is the one that best supports generalization on validation data via a proposed performance score.

\paragraph{Representation learning via a composite score.}
To choose $\beta^\star$ (and other hyperparameters such as window length), we
maximize a proposed composite score on a train--validation split while keeping the test set fully held out. The score is defined as:
\begin{equation}
{\sf S_3} \;=\;
4\frac{{\sf Spec_T}\,{\sf Spec_V}\,{\sf Sen_T}\,{\sf Sen_V}}
{{\sf Spec_T}+{\sf Spec_V}+{\sf Sen_T}+{\sf Sen_V}},
\label{eq:S3}
\end{equation}
where ${\sf Spec}$ and ${\sf Sen}$ denote specificity and sensitivity, and
subscripts $T,V$ refer to train and validation, respectively.
The product in the numerator is intended to heavily penalize low or unstable performance and favors
exponents that are simultaneously strong on both splits.
In the seizure setting, heterogeneity across patients motivates selecting
$\beta^\star$ \emph{per patient} using the same criterion.

\subsection{Data preparation}
\label{subsec:data}

\paragraph{EEG seizure onset (CHB-MIT).}
We use the CHB-MIT scalp EEG dataset (23 bipolar channels, 256 Hz), containing
24 pediatric cases with 19 seizures.
To mitigate extreme class imbalance without synthesizing data, we remove files
with no seizures, which reduces the dominance of interictal periods while
preserving seizure events. We apply a Butterworth bandpass filter to attenuate
high-frequency artifacts (e.g., line noise and physiological contamination).
To reduce redundancy across channels and focus on informative sensors, we perform
patient-specific channel selection using mutual information. Further, for train-validation, the time series 
are segmented using sliding windows with 75\% overlap as illustrated in Fig.~\ref{fig:slidewindow}. For test, each window is
labeled by majority vote over its samples. Window length is selected jointly with the exponent
$\beta^{\star}$ using ${\sf S_3}$ in~\eqref{eq:S3}.
For classification we use a 2D CNN followed by an LSTM.

\paragraph{Customer churn (IBM Telco).}
We use the IBM Telco churn dataset (7043 customers, 21 variables; $\approx$27\%
churn). Unlike EEG, each customer provides a single feature vector, yielding a single-shot regime. During training we form covariances over cohorts
(ensemble covariances). At test, we approximate a customer-level covariance
by a regularized rank-one cross-product $\mathbf{x}\mathbf{x}^{\top}$ (details in Appendix~\ref{app:frac-consistency}) to
enable one-shot inference with the same learned exponent $\beta^\star$. A standalone 2D CNN
is used as the downstream classifier.

\begin{figure}[t]
	\centering
	\includegraphics[scale=0.45]{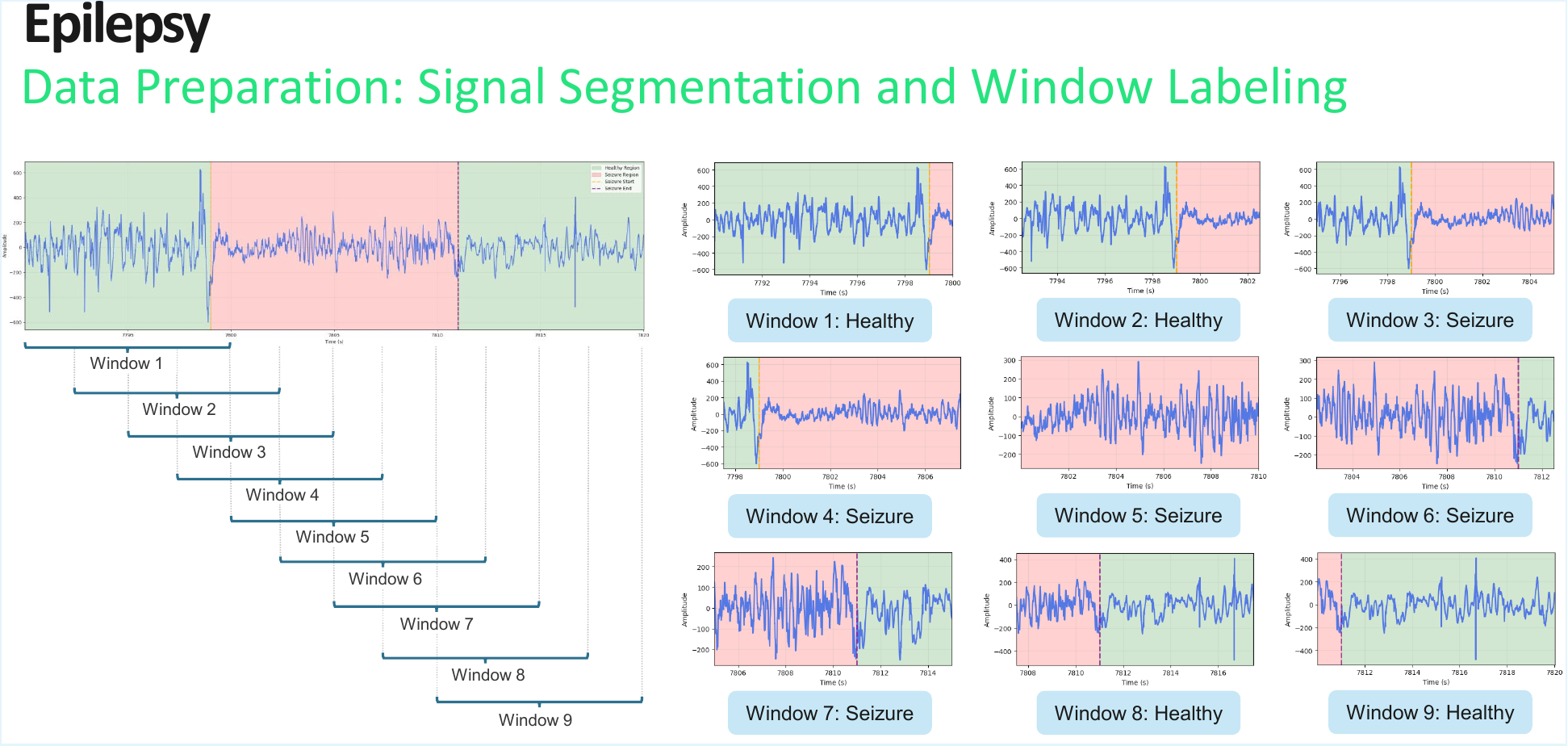}
	\caption{Sliding window to split the data for train-validation-test.}
	\label{fig:slidewindow}
\end{figure}

\section{Results}

We evaluate the proposed framework along three complementary axes: predictive performance, identifiability of the learned representations, and the potential presence of interpretable structural signatures. We select the main results and provide additional ones in the appendix~C.

\subsection{Predictive Performance}

Fig.~\ref{fig:benchmark1} reports seizure detection results
on the CHB-MIT dataset, benchmarking our approach against representative methods
from the literature that use the same data.

\begin{figure}[t]
	\centering
	\includegraphics[scale=0.45]{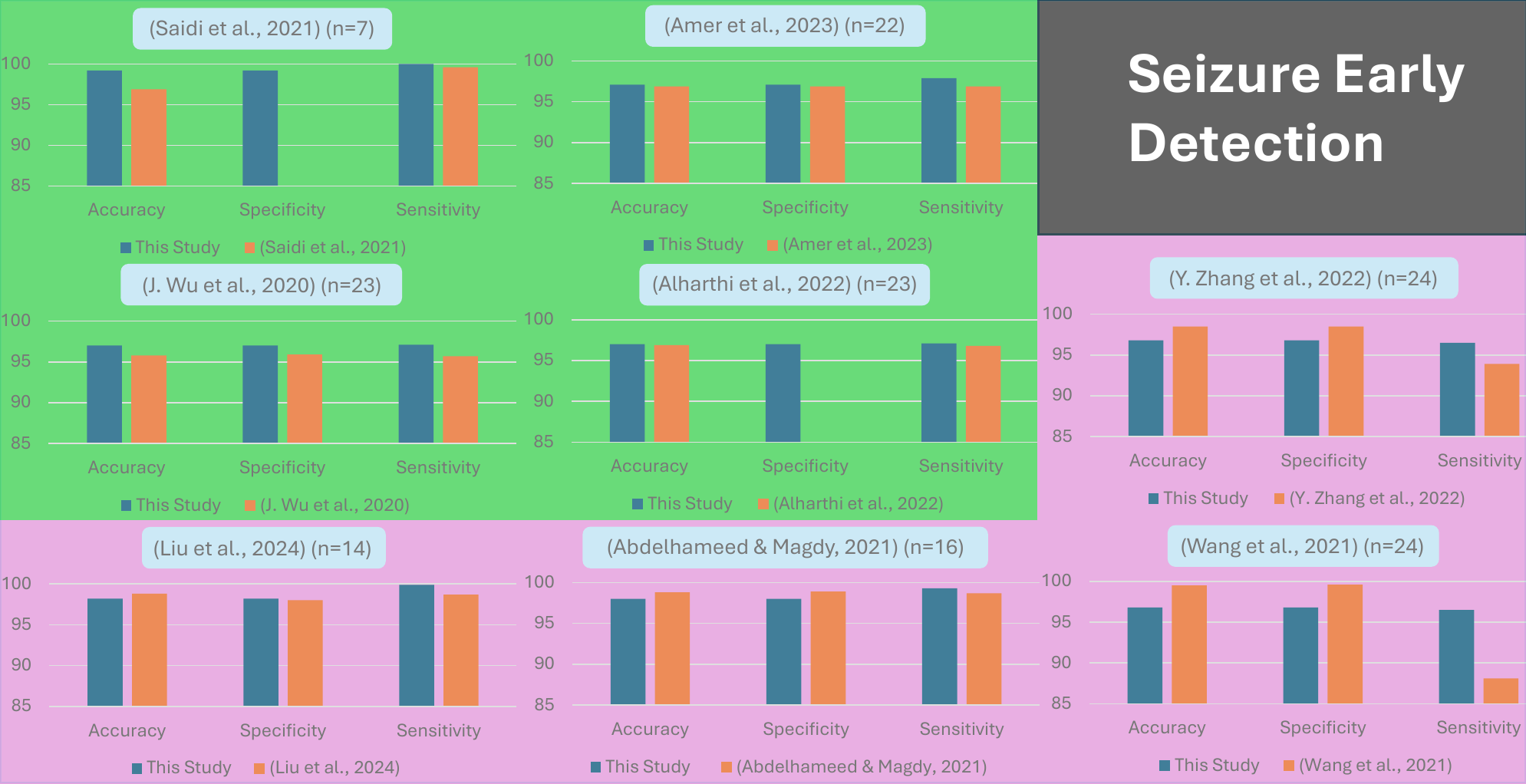}
	\caption{Seizure early detection benchmark: i) Superior performance across all metrics (green background); ii) Superior performance on sensitivity (purple background).}
	\label{fig:benchmark1}
\end{figure}

A critical aspect of these comparisons is the number of patients included in each
study. Several references report results on restricted subsets of patients,
often selecting cases that yield higher performance. In contrast, methods that
include all available patients provide a more realistic assessment of robustness
across a heterogeneous population.
To ensure fair comparison, each plot reports performance averaged over the top
$n_k$ patients, where $n_k$ matches the number of patients used in benchmark study
$k$, enabling direct comparison under matched cohort sizes.


As shown in Fig.~\ref{fig:benchmark1}, our approach outperforms benchmarks across all metrics (green background). While some methods (purple background) show marginal gains in accuracy or specificity, our model excels in the clinically critical metric of sensitivity. In this sense, our method demonstrates superior robustness and clinical viability.

\begin{figure}[t]
	\centering
	\includegraphics[scale=0.45]{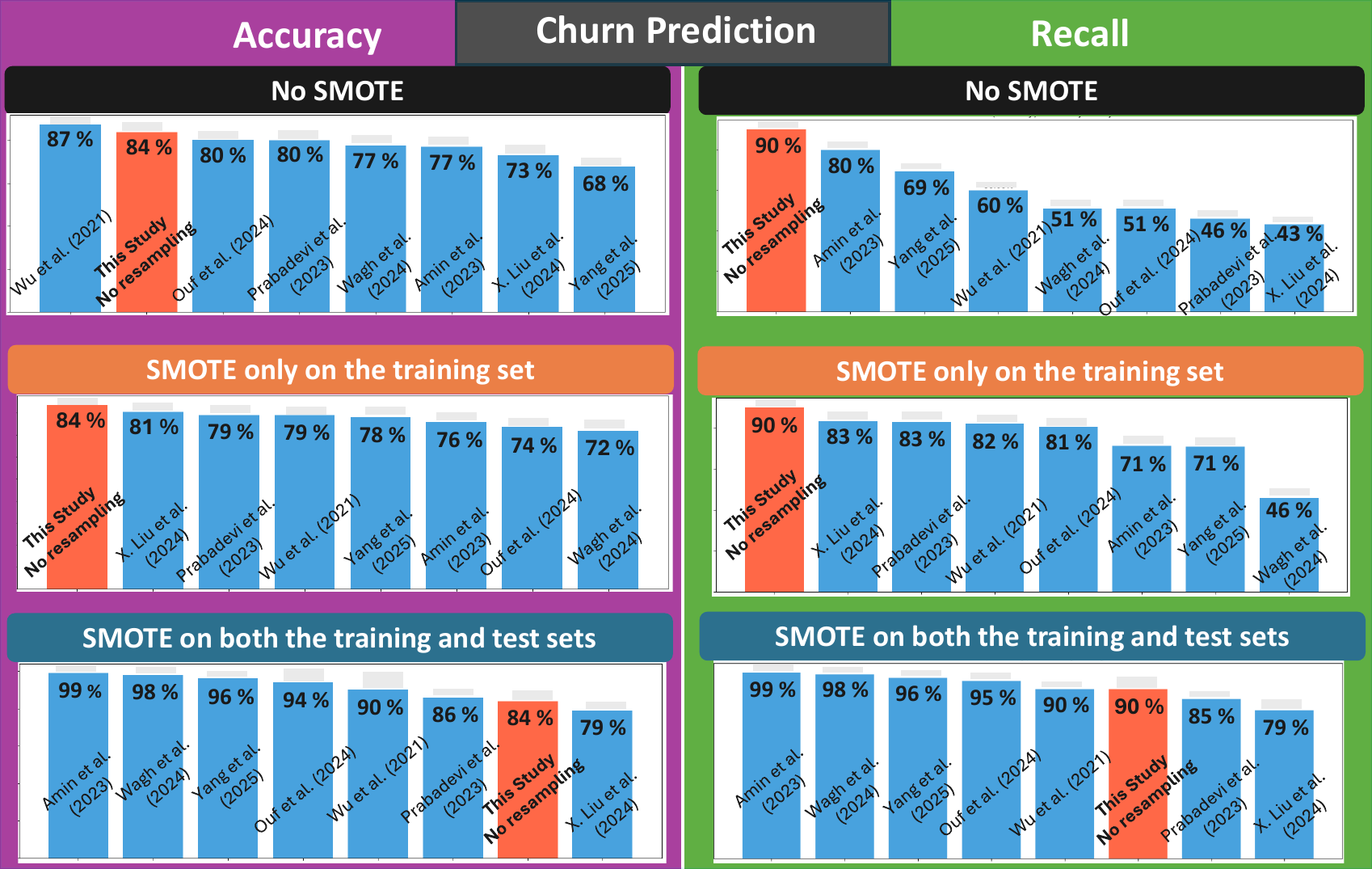}
	\caption{Accuracy and recall performance for churn prediction against benchmark references.}
	\label{fig:accuracy_churn}
\end{figure}


Similar trends appear in Fig.~\ref{fig:accuracy_churn} for customer churn. Unlike benchmarks that often rely on synthetic rebalancing—risking statistical leakage or inflated results—our method uses no artificial samples. Even when compared to benchmarks restricted to training-only rebalancing, our approach yields superior or comparable performance, notably in the business-critical metric of recall--critical for identifying
at-risk customers.

As shown in Fig.~\ref{fig:raw}, baseline models trained on raw data—using matched architectures and protocols—underperform significantly. This performance gap highlights the need of incorporating structural information in the feature design, as the models cannot sufficiently learn these representations from raw input alone. In appendix~C, we provide further experiments.

\begin{figure}[t]
	\centering
	\includegraphics[scale=0.5]{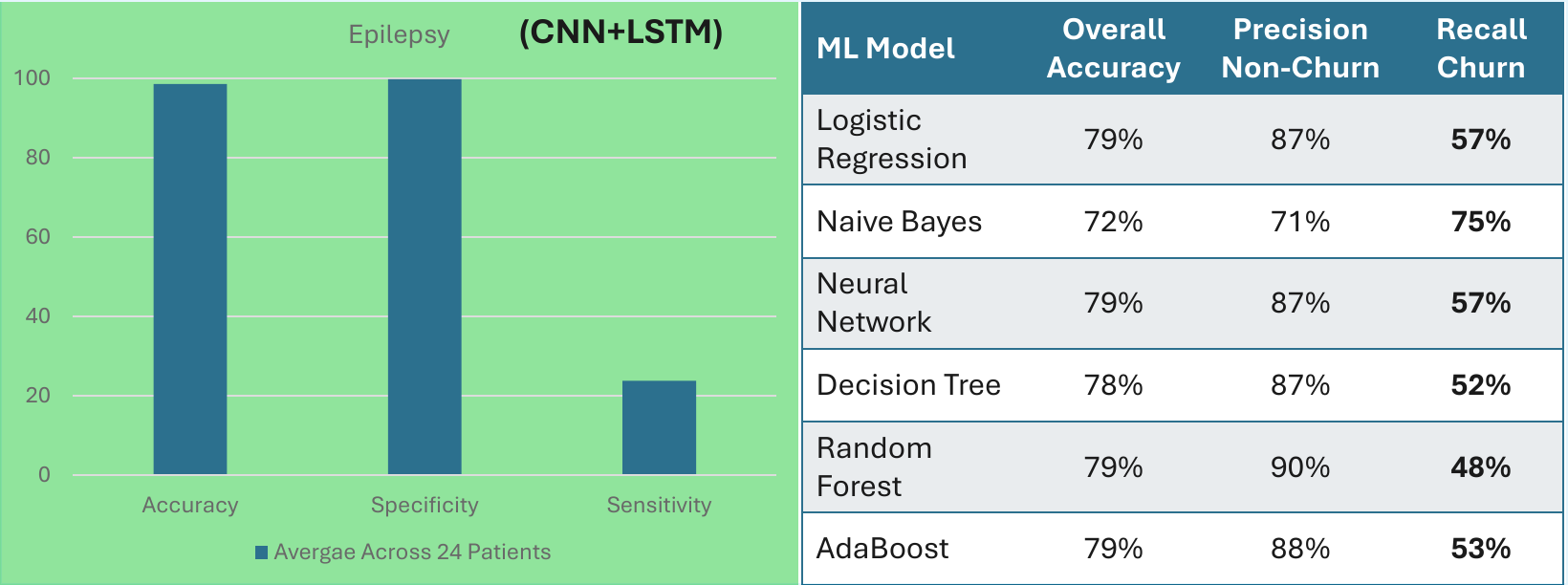}
	\caption{Performance of ML models for seizure detection
	(left) and churn prediction (right) trained directly on the raw data.}
	\label{fig:raw}
\end{figure}

Overall, these results demonstrate that structure-informed covariance power
representations yield strong and stable predictive performance across distinct
domains, without relying on task-specific heuristics or artificial data augmentation.

\subsection{Learned Feature Identifiability}

To explain the predictive performance of the learned representations, we investigate their \emph{identifiability}:
features corresponding to different classes should be
well separated, while features within the same class should form compact clusters.
Since covariance-power features are high-dimensional objects, direct visualization
is impractical. We therefore assess identifiability quantitatively via pairwise
distances.

Specifically, we compare:
(i) average distances between features belonging to the same class (intra-class),
and (ii) average distances between features belonging to different classes
(inter-class).
Because covariance matrices and their powers naturally live on the
Riemannian manifold of symmetric positive definite (SPD) matrices, distances are
computed using the affine-invariant Riemannian (AIR) metric.

Fig.~\ref{fig:id_seizures} reports these distances for the seizure detection task.
Across all patients, the inter-class distance (healthy vs.\ seizure, shown in
green) is consistently larger than the corresponding intra-class distances.
This clear separation indicates that the learned covariance-power features induce
well-separated class representations despite patient heterogeneity and partial
observability.
A similar pattern is observed for the variance of intra- and inter-class distances,
suggesting tighter clustering within classes than across classes.

\begin{figure}[t]
	\centering
	\includegraphics[scale=0.5]{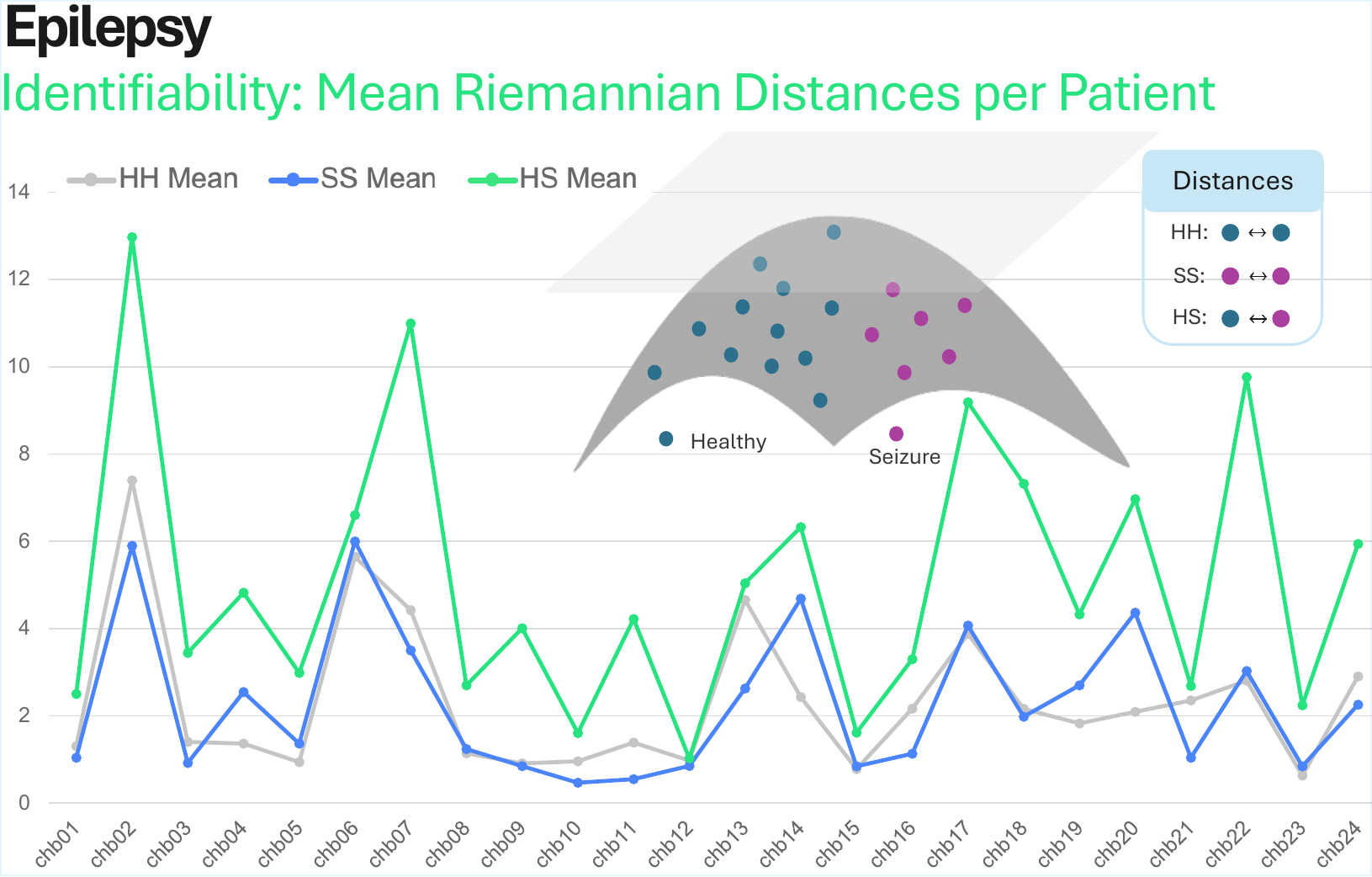}
	\caption{Average affine-invariant Riemannian distances between covariance-power
	features. Inter-class distances (green) are consistently larger than intra-class
	distances across all patients.}
	\label{fig:id_seizures}
\end{figure}

For customer churn, a similar separation is observed. The average AIR distances are
$14.7$ within the non-churn class, $11.18$ within the churn class, and $22.7$
between classes, again indicating that the learned representations are both
compact and discriminative.

Together, these results provide empirical evidence that the proposed
structure-informed features are not only predictive, but also yield identifiable
and clustered representations.

\subsection{Structural Signatures}

We now investigate whether the learned optimal
representations encode interpretable \emph{structural signatures} that reflect
distinct underlying interaction patterns across classes.
To this end, we analyze the covariance-power features corresponding to the
learned exponent $\beta^\star$ and attempt to extract a structural
backbone from them.

Specifically, we threshold the absolute values of the entries of the learned
feature matrices. Rather than fixing this threshold manually, we estimate it automatically
using a Gaussian mixture model (GMM) fitted to the empirical distribution of
feature entries magnitudes.
An illustration of this process for customer churn is shown in
Fig.~\ref{fig:struct_signatures}.

The resulting thresholded matrices can be interpreted as adjacency patterns
revealing class-specific interaction structures.
As shown on the right of Fig.~\ref{fig:struct_signatures}, the extracted graphs
exhibit markedly distinct connectivity patterns across classes, suggesting the
presence of characteristic structural identities. 


\begin{figure}[t]
	\centering
	\includegraphics[scale=0.5]{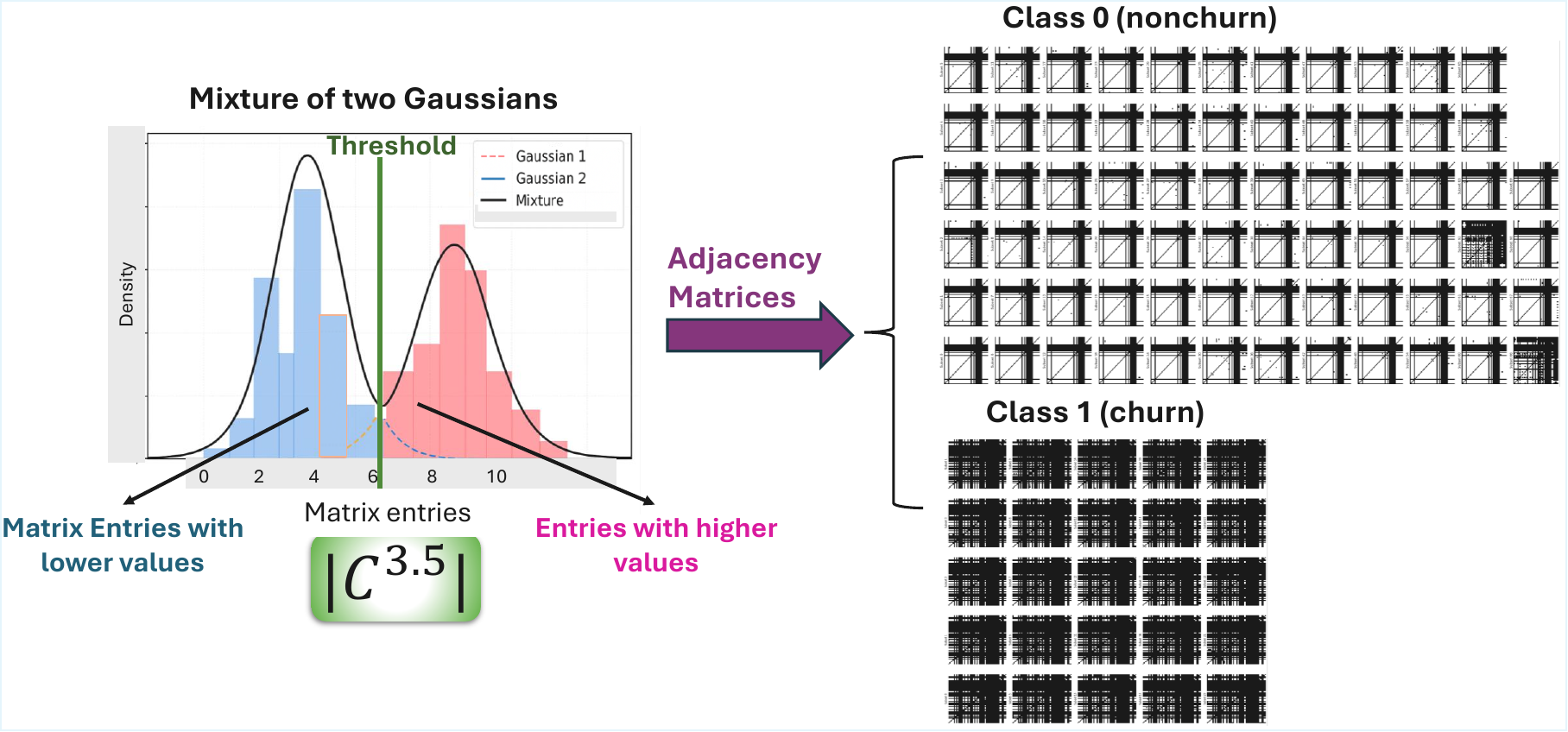}
	\caption{Distilling structural signatures from churn data via GMM-based
	thresholding. Distinct patterns emerge for different classes.}
	\label{fig:struct_signatures}
\end{figure}

We observe a similar phenomenon in the seizure detection setting.
Fig.~\ref{fig:struct_signatures_seiz} displays a representative example for a
single patient, where the learned structural signature highlights coherent
connectivity patterns that differ markedly between interictal and preictal
states.

\begin{figure}[t]
	\centering
	\includegraphics[scale=0.7]{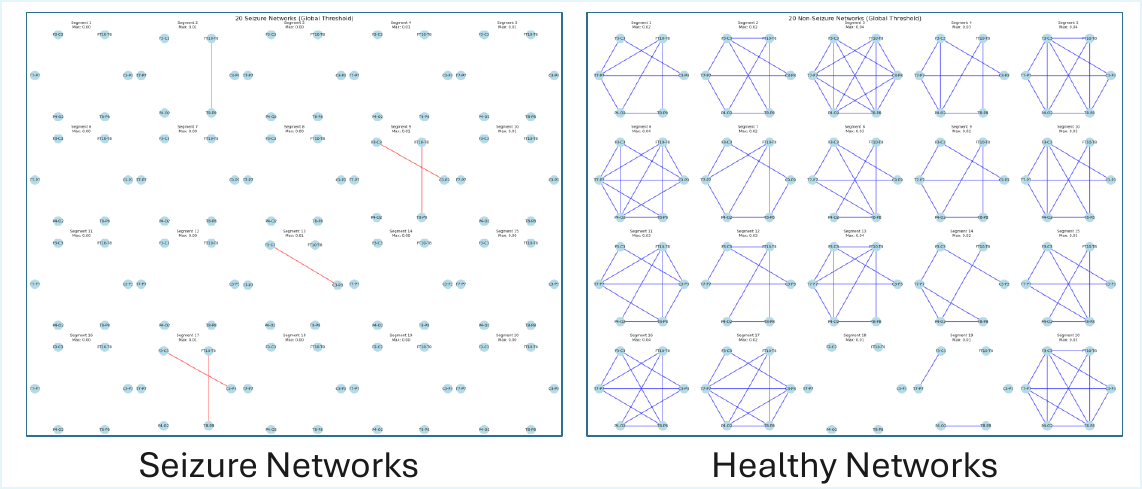}
	\caption{Representative structural signatures extracted for seizure detection
	(CHB-MIT, patient chb19).}
	\label{fig:struct_signatures_seiz}
\end{figure}

While our primary objective is not exact graph recovery, these results provide
qualitative empirical evidence that the covariance-power representations capture
non-trivial and class-dependent interaction structure.


\section{Concluding Remarks}

Beyond the specific applications considered here, our results point to a broader
principle: emergent phenomena across heterogeneous domains may be detected by
learning representations that align with latent interaction structure rather than
task-specific signal characteristics alone.
The same covariance–power framework applies seamlessly to biological,
socio-economic, and technological systems, despite their distinct data modalities
and generative mechanisms.
This apparent universality is not imposed by design, but rather emerges as a
consequence of the structural consistency guarantees established in this work and
their empirical realization.
We believe this perspective opens new avenues for adaptive structure-aware learning in
settings where the underlying dynamics are only partially observed, weakly
specified, or fundamentally unknown.

\section*{Acknowledgments}
Work partially supported by NSF Grant CCF-2327905.

\appendix
\onecolumn

\section{Preliminary Formalism}
\label{sec:oscandall}
In this section, we establish notation and collect a few auxiliary bounds that will be
invoked throughout the paper. This separates technical conventions from the main
development, and ensures that all subsequent derivations can be followed more transparently.

\subsection{Notation}

Vectors and random variables are rendered in bold lower-case, whereas matrices are represented with capital letters (and bold if random). Calligraphic letters are reserved for sets, except for standard sets such as the real line $\mathbb{R}$. The notation $\big|\mathcal{S}\big|$ denotes the cardinality of the set $\mathcal{S}$.

\paragraph{Norms and eigenvalues.}
For a general matrix $B\in\mathbb{R}^{N\times N}$,
we denote by
\begin{equation}
\|B\| \;=\; \max_{\|\mathbf{u}\|_2=1,\ \|\mathbf{v}\|_2=1}
\big|\mathbf{u}^{\top} B \mathbf{v}\big| = \max_{\|\mathbf{x}\|_2=1}
\|B \mathbf{x}\|_2 
\end{equation}
the spectral norm (or operator $2$-norm) of $B$. We refer to $\sigma(B):=\left\{\lambda_1(B),\lambda_2(B),\ldots,\lambda_N(B)\right\}$ as the spectrum of $B$, where $\lambda_i(B)$ is the $i$th eigenvalue of $B$.
The smallest eigenvalue of $B$ is written as $\lambda_{\min}(B)=\min\left\{\lambda\,:\,\lambda \in \sigma(B)\right\}$.
The spectral radius of $B$ is $\rho(B)=\max\left\{|\lambda|\,:\,\lambda \in\ \sigma(B)\right\}$.
For general $B\in \mathbb{R}^{N\times N}$, we have $\rho(B)\leq\|B\|$ with equality holding whenever $B$ is symmetric, $\rho(B)=\|B\|$.
A matrix is said to be \emph{stable} if $\rho(B)<1$.
When the underlying matrix is clear from the context, we simply write it as $\rho$.

\paragraph{Identity, canonical basis, and all-ones vector.}
$I_N\in\mathbb{R}^{N\times N}$ denotes the $N\times N$ identity matrix
(we drop the subscript when the dimension is clear).
$\mathbf{e}_i\in\mathbb{R}^N$ denotes the $i$th canonical basis vector, i.e., all entries zero except for the $i$th entry, which equals one.
$\mathbf{1}_N\in\mathbb{R}^N$ is the vector of all ones
(again, we drop the subscript if the dimension is clear). 
Observe that $\mathbf{1}_{N}\mathbf{1}_N^{\top}$ is the $N\times N$ matrix of all ones. 

\paragraph{Projections and submatrices.}
Given an index subset $\mathcal{S}=\{s_1,s_2,\dots,s_{|\mathcal{S}|}\}\subset\{1,2,\dots,N\}$,
we denote by $\Pi_{\mathcal{S}}\in\mathbb{R}^{|\mathcal{S}|\times N}$
the coordinate projection matrix.
For any vector $\mathbf{v}\in\mathbb{R}^N$, the projection is
$\Pi_{\mathcal{S}}\mathbf{v}=\mathbf{v}_{\mathcal{S}}
=(v_{s_1},\dots,v_{s_{|\mathcal{S}|}})$.
For a matrix $B\in\mathbb{R}^{N\times N}$, the principal submatrix supported on $\mathcal{S}$ is
\begin{equation}
[B]_{\mathcal{S}} = \Pi_{\mathcal{S}} B \Pi_{\mathcal{S}}^{\top}.
\end{equation}
For simplicity, we sometimes write it as $B_{\mathcal{S}}$.

\subsection{Auxiliary Bounds}

We now record two simple bounds that will be repeatedly used. First, we define an important map tied with the concept of structural consistency of a matrix-valued estimator.

\paragraph{Off--diagonal oscillation.}
For a matrix $M\in\mathbb{R}^{N\times N}$, define the \emph{off-diagonal oscillation} map $\operatorname{Osc}\,:\,\mathbb{R}^{N\times N}\longrightarrow \mathbb{R}_{+}$
\begin{equation}
\operatorname{Osc}(M) := \max_{i\neq j} M_{ij}\ -\ \min_{i\neq j} M_{ij}.
\end{equation}
In words, $\operatorname{Osc}(M)$ measures the gap between the greatest and the smallest off-diagonal entries of the matrix $M$, i.e., the \emph{spread} or \emph{flatness} of the off-diagonal submatrix component. For example, $\operatorname{Osc}(\beta\mathbf{1}\mathbf{1}^{\top}+D) = 0$, for any $\beta\in\mathbb{R}$, where $D$ is any arbitrary diagonal matrix. We remark that the map $\operatorname{Osc}\,:\,\mathbb{R}^{N\times N}\rightarrow \mathbb{R}_{+}$ is invariant under \emph{flat-shifts} 
\begin{equation}
\operatorname{Osc}\left(M+\beta \mathbf{1}\mathbf{1}^{\top}+D\right)=\operatorname{Osc}\left(M\right), 
\end{equation}
where $D$ is again any diagonal matrix. In other words, $\operatorname{Osc}(M)$ of a matrix $M$ is invariant upon uniformly shifting all of its off-diagonal entries. Refer to~\cite{SantosA,SMachado} for further properties of this map.

\textbf{Example.} \emph{To illustrate, consider the matrix
\begin{equation}
M = \begin{bmatrix}
1 & 2 & 5 \\
3 & 4 & 6 \\
7 & 8 & 9
\end{bmatrix}.
\end{equation}
The off--diagonal entries are $\{2, 3, 5, 6, 7, 8\}$. The greatest off--diagonal entry is $ \max_{i\neq j} M_{ij} = 8$ and the smallest is $ \min_{i\neq j} M_{ij}= 2$.
Thus,
\begin{equation}
\operatorname{Osc}(M) = \max_{i\neq j} M_{ij}\ -\ \min_{i\neq j} M_{ij} = 8 - 2 = 6.
\end{equation}
Now, apply a flat-shift to the off-diagonals of $M$ by $\beta=10$ (and arbitrary $D$), yielding
\begin{equation}
\widetilde{M} = M + \beta\mathbf{1}\mathbf{1}^{\top}+D = \begin{bmatrix}
111 & 12 & 15 \\
13 & 12 & 16 \\
17 & 18 & 59
\end{bmatrix}.
\end{equation}
The greatest off-diagonal entry of $\widetilde{M}$ is $\max_{i\neq j} \widetilde{M}_{ij} = 18$ and the smallest is $\min_{i\neq j} \widetilde{M}_{ij} = 12$.
Thus,
\begin{equation}
\begin{array}{ccl}
\operatorname{Osc}(\widetilde{M}) & = & \max_{i\neq j} \widetilde{M}_{ij}\ -\ \min_{i\neq j} \widetilde{M}_{ij}\\
& = & 18 - 12 \\
& = & \max_{i\neq j} (M_{ij}+\beta) -\ \min_{i\neq j} (M_{ij}+\beta)\\
& = & \max_{i\neq j} M_{ij} -\ \min_{i\neq j} M_{ij}\\
& = & \operatorname{Osc}(M) = 6
\end{array}.
\end{equation}
As expected, $\operatorname{Osc}(\widetilde{M})=\operatorname{Osc}(M)$.}

\paragraph{Spectral norm control.}
Two foundational bounds will be invoked momentarily
\begin{equation}\label{eq:entry-norm}
\max_{i\neq j} |M_{ij}| \;\le\; \|M\|,
\end{equation}
and
\begin{equation}\label{eq:Osc-norm}
\operatorname{Osc}(M+\beta \mathbf{1}\mathbf{1}^{\top}+D)\ \le\ 2\,\|M\|,
\end{equation}
for any $M\in\mathbb{R}^{N\times N}$, scalar $\beta\in\mathbb{R}$, and diagonal matrix $D\in\mathbb{R}^{N\times N}$.

Indeed, the first inequality~\eqref{eq:entry-norm}, ascertaining that the off-diagonal entries are bounded by the spectral norm, follows from the definition of spectral norm
\begin{equation}\label{eq:break_entry_norm}
|M_{ij}| = |\mathbf{e}_i^\top M \mathbf{e}_j|\leq \max_{\|\mathbf{u}\|=1,\|\mathbf{v}\|=1} \left|\mathbf{u}^{\top}M \mathbf{v}\right| = \|M\|,
\end{equation}
for all $i\neq j$, whereas the second~\eqref{eq:Osc-norm} resorts to this inequality to conclude that 
\begin{equation}
\begin{array}{ccl}
\operatorname{Osc}(M+\beta \mathbf{1}\mathbf{1}^{\top}+D) & \overset{(a)}= & \operatorname{Osc}(M)\\
& &\\
& \overset{(b)}= &\max_{i\neq j} M_{ij}\ -\ \min_{i\neq j} M_{ij}\\
& & \\
& = & \max_{i\neq j} M_{ij}\ + \max_{i\neq j} -M_{ij}\\
& & \\
& \leq & 2\max_{i\neq j} |M_{ij}|  \overset{(c)}\leq 2 \|M\|
\end{array},
\label{eq:norm_osc}
\end{equation}
where the identity $(a)$ follows from the invariance of $\operatorname{Osc}$ under flat-shifts, the identity $(b)$ conforms to the definition of $\operatorname{Osc}$, and the latter inequality $(c)$ holds in view of~\eqref{eq:entry-norm}.

These conventions and inequalities will be used throughout the paper without further comment.

\newcommand{\bbR}{\mathbb{R}}
\newcommand{\osc}{\mathrm{osc}}
\newcommand{\I}{I}

%
%

\section{Structural Consistency for Fractional Fields}
\label{app:frac-consistency}

In this section, we establish structural consistency for the fractional
inverse of the observed covariance for a general Matérn--type model~\eqref{eq:GraphMatern} with covariance
\begin{equation}
C \;=\; \sigma^2 \left(I - \overline{A}\right)^{-\alpha},
\qquad
\overline{A} = \overline{A}^\top,\quad \rho(\overline{A}) < 1,\quad \alpha\neq 0,
\label{eq:frac-model-appendix}
\end{equation}
where $\rho(\overline{A})$ denotes the spectral radius of $\overline{A}$ and $\overline{A}:= I-\kappa^2 D - L$, i.e., the parametric details of the model~\eqref{eq:GraphMatern} are absorbed in $\overline{A}$. Remark that the off-diagonal elements of $\overline{A}$ coincide with those of $A$, i.e., ${\sf Off}\left(\overline{A}\right)={\sf Off}\left(A\right)$, where $A$ is the ground truth interaction matrix defining the Laplacian $L=D-A$. Therefore, the graph support of $\overline{A}$ (without the self-loops) coincides with that of the underlying unknown ground truth graph structure (support of $A$). 

We partition the node set as $\{1,\dots,N\} = \mathcal S \cup \mathcal S'$,
$\mathcal S\cap\mathcal S'=\emptyset$, and write
\[
A =
\begin{bmatrix}
A_{\mathcal S} & A_{\mathcal S\mathcal S'}\\
A_{\mathcal S'\mathcal S} & A_{\mathcal S'}
\end{bmatrix},
\qquad
C =
\begin{bmatrix}
C_{\mathcal S} & C_{\mathcal S\mathcal S'}\\
C_{\mathcal S'\mathcal S} & C_{\mathcal S'}
\end{bmatrix},
\]
where $\mathcal{S}$ stands for the set of observed nodes and $\mathcal{S}'$ and the complement corresponds to the latent counterpart. 

Our goal is to prove that $(C_{\mathcal S})^{-1/\alpha}$ is structurally consistent: all entries of the matrix $(C_{\mathcal S})^{-1/\alpha}$ associated with disconnected pairs lie below the smallest entry associated with connected pairs. The importance of the consistency property is twofold: i) it means in that we can recover the support of $A_{\mathcal S}$ via appropriately thresholding the entries of $(C_{\mathcal S})^{-1/\alpha}$; ii) full information about the network structure linking the observed nodes is entailed in the estimator~$(C_{\mathcal S})^{-1/\alpha}$ (despite the presence of latent confounders) and therefore, this is a potential candidate for structure-informed feature in machine learning pipelines for inference in partially observed systems (virtually the case for most complex systems).

As we will establish, this structural consistency holds under certain conditions on the cross--block $A_{\mathcal S\mathcal S'}$ linking the observed and the latent parts. We start by establishing the result for powers $\beta\in\left(0,1\right)$ within the unit interval. Then, in Appendix~\ref{app:frac-consistency-dt}, we extend our result to general powers $\beta\in \mathbb{R}$. These will demand distinct sufficient conditions on $A_{\mathcal S\mathcal S'}$.

\subsection{Fractional Integral Representation: $\beta\in\left(0,1\right)$.}
Let $\beta := 1/\alpha \in (0,1)$.
For any Symmetric Positive-Definite (SPD) matrix $X\succ 0$, the fractional inverse admits the integral representation
\begin{equation}
X^{-\beta}
=
\frac{\sin(\pi\beta)}{\pi}
\int_0^\infty \lambda^{-\beta}
(X + \lambda I)^{-1}\, d\lambda.
\label{eq:stieltjes}
\end{equation}
The integral expression~\eqref{eq:stieltjes} offers a representation of a fractional power of a matrix (or operator) in terms of the inverse of the matrix. This will be useful for our purposes. 

Define the error between the fractional inverse of the observed block and
the corresponding principal submatrix of the full fractional inverse:
\begin{equation}
\Delta_{-\beta}
:= (C_{\mathcal S})^{-\beta} - [C^{-\beta}]_{\mathcal S},
\label{eq:gap}
\end{equation}
which can be cast as the error matrix committed when computing $(C_{\mathcal S})^{-\beta}$ (the covariance power computed upon partially observed samples) instead of $[C^{-\beta}]_{\mathcal S} = \sigma^{-2/\alpha}\left(I-\overline{A}_{\mathcal S}\right)$ which contains full structural information since 
\begin{equation}
-{\sf Off}\left([C^{-\beta}]_{\mathcal S}\right)= {\sf Off}\left(\overline{A}_{\mathcal{S}}\right)={\sf Off}\left(A_{\mathcal{S}}\right),
\end{equation}
but cannot be estimated under partial observability as to estimate the full matrix $C$ we need all observables.

To grant structural consistency of~$(C_{\mathcal S})^{-\beta}$ (the one we can estimate from the observed samples), we need the error matrix term to be small enough, namely~\cite{SantosA}
\begin{equation}
{\sf Osc}\left(\Delta_{-\beta}\right)<\frac{a_{\min}}{2\sigma^{2\beta}},
\label{eq:thresh}
\end{equation}
where $a_{\min}$ is the smallest non-zero off-diagonal entry of $A_{\mathcal{S}}$, i.e.,
\begin{equation}
a_{\min}
:=
\min\{A_{ij}>0:\,i\neq j,\ i,j\in\mathcal S\},
\end{equation}
and ${\sf Osc}$ is defined in Appendix~\ref{sec:oscandall}. The condition~\eqref{eq:thresh} ascertains that even though the error perturbs the entries of~$[C^{-\beta}]_{\mathcal S}$ (which fully contains the network structure), the structure is preserved in $(C_{\mathcal S})^{-\beta}$ (and not destroyed), namely, all entries of $(C_{\mathcal S})^{-\beta}$ associated with disconnected pairs are smaller than the smallest entry associated with a connected pair. In other words, there is a threshold to consistently cluster disconnected from connected pairs in view of the entries of $(C_{\mathcal S})^{-\beta}$. Another way to rephrase it: under condition~\eqref{eq:thresh}, projection and matrix exponentiation commute in a structural sense--they both entail the same structural information even though their entries may be distinct since $(C_{\mathcal S})^{-\beta}\neq \left[C^{-\beta}\right]_{\mathcal S}$.     

Therefore, to grant structural consistency of~$(C_{\mathcal S})^{-\beta}$, ${\sf Osc}\left(\Delta_{-\beta}\right)$ needs to be small enough. It is thus important to control the \emph{magnitude} of~$\Delta_{-\beta}$ and we do it via introducing the resolvent difference
\begin{equation}
R_{\mathcal S}(\lambda)
:=
(C_{\mathcal S}+\lambda I)^{-1}
-
[(C+\lambda I)^{-1}]_{\mathcal S}.
\label{eq:gap2}
\end{equation}
Now, in view of the integral characterization~\eqref{eq:stieltjes} and the definitions~\eqref{eq:gap}-\eqref{eq:gap2}, we have
\begin{equation}
\Delta_{-\beta}
=
\frac{\sin(\pi\beta)}{\pi}
\int_0^\infty \lambda^{-\beta}
R_{\mathcal S}(\lambda)\,d\lambda,
\end{equation}
and therefore
\begin{equation}
\|\Delta_{-\beta}\|
\;\le\;
\frac{\sin(\pi\beta)}{\pi}
\int_0^\infty \lambda^{-\beta}
\|R_{\mathcal S}(\lambda)\|\,d\lambda,
\label{eq:delta-beta-bound}
\end{equation}
where $\|\cdot\|$ is the $L_2$ operator norm and we recall that $\beta\in\left(0,1\right)$ so that $\sin\left(\beta \pi\right)>0$.

The inequality~\eqref{eq:delta-beta-bound} allows us to control $\|\Delta_{-\beta}\|$ (so to yield structural consistency) via controlling the resolvent $\|R_{\mathcal S}(\lambda)\|$ which is a construct with fixed power (namely the inverse) and can be bounded via Schur complement as we will see next.

\paragraph{Schur Complement and Neumann Expansion.}
We now bound $R_{\mathcal S}(\lambda)$ via characterizing the underlying Schur complement and then, bounding the Neumann
series.

For convenience, and for $\lambda>0$, let us write
\[
C + \lambda I
=
\begin{bmatrix}
B(\lambda) & C_{\mathcal S\mathcal S'}\\
C_{\mathcal S'\mathcal S} & D(\lambda)
\end{bmatrix},
\qquad
B(\lambda) := C_{\mathcal S} + \lambda I,\ 
D(\lambda) := C_{\mathcal S'} + \lambda I.\ 
\]
Schur complement~\cite{MatrixAnalysis} yields
\begin{equation}
[(C+\lambda I)^{-1}]_{\mathcal S}
=
(B(\lambda) - H(\lambda))^{-1},
\end{equation}
where we have defined $H(\lambda) := C_{\mathcal{S}\mathcal{S}'} D(\lambda)^{-1} C_{\mathcal{S}'\mathcal{S}}$. Thus,
\begin{equation}
R_{\mathcal S}(\lambda)
=
B(\lambda)^{-1} - (B(\lambda) - H(\lambda))^{-1}
=
-(B(\lambda)-H(\lambda))^{-1} H(\lambda) B(\lambda)^{-1}.
\label{eq:RS_Schur}
\end{equation}

We now derive a general bound on the $L_2$-norm of $R_{\mathcal S}(\lambda)$ that will be useful to bound the error term $\Delta_{-\beta}$ and thus, for establishing structural consistency of $\left(C_{\mathcal{S}}\right)^{-\beta}$.

\begin{lemma}[Bound on $R_{\mathcal S}(\lambda)$]
\label{lem:R-bound}
Assume $C_{\mathcal S},C_{\mathcal S'}\succ 0$ and define
\[
\theta :=
\frac{\|C_{\mathcal S\mathcal S'}\|^2}
{\lambda_{\min}(C_{\mathcal S})\,\lambda_{\min}(C_{\mathcal S'})}.
\]
If $\theta<1$, then for all $\lambda>0$,
\begin{equation}
\|R_{\mathcal S}(\lambda)\|
\;\le\;
\frac{\|C_{\mathcal S\mathcal S'}\|^2}
{(1-\theta)\,(\lambda_{\min}(C)+\lambda)^3},
\end{equation}
where $\lambda_{\min}(C)$ denotes the smallest eigenvalue of the correlation matrix~$C$.
\end{lemma}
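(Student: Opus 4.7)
My plan is to start from the Schur complement identity for $R_{\mathcal S}(\lambda)$ already derived in~\eqref{eq:RS_Schur}, split the inverse $(B-H)^{-1}$ via a Neumann-series factorization, bound each resulting factor in the operator norm uniformly in $\lambda$, and finally invoke Cauchy interlacing to replace subblock minimum eigenvalues by the global minimum eigenvalue of $C$.

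First, I would factor $B(\lambda) - H(\lambda) = B(\lambda)\bigl(I - B(\lambda)^{-1}H(\lambda)\bigr)$, which together with~\eqref{eq:RS_Schur} gives
\begin{equation}
R_{\mathcal S}(\lambda) \;=\; -\bigl(I - B(\lambda)^{-1}H(\lambda)\bigr)^{-1}\, B(\lambda)^{-1}\, H(\lambda)\, B(\lambda)^{-1}.
\end{equation}
Submultiplicativity of $\|\cdot\|$ then reduces the estimate to controlling three factors. Since $B(\lambda) = C_{\mathcal S}+\lambda I \succeq (\lambda_{\min}(C_{\mathcal S})+\lambda)I$, and analogously for $D(\lambda)$, I would obtain $\|B(\lambda)^{-1}\| \le 1/(\lambda_{\min}(C_{\mathcal S})+\lambda)$, and applying submultiplicativity to $H(\lambda)=C_{\mathcal S\mathcal S'}D(\lambda)^{-1}C_{\mathcal S'\mathcal S}$, $\|H(\lambda)\| \le \|C_{\mathcal S\mathcal S'}\|^2 / (\lambda_{\min}(C_{\mathcal S'})+\lambda)$.

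Second, I would establish convergence of the Neumann series for $\bigl(I - B^{-1}H\bigr)^{-1}$. Combining the previous two bounds,
\begin{equation}
\|B(\lambda)^{-1}H(\lambda)\| \;\le\; \frac{\|C_{\mathcal S\mathcal S'}\|^2}{(\lambda_{\min}(C_{\mathcal S})+\lambda)(\lambda_{\min}(C_{\mathcal S'})+\lambda)} \;\le\; \theta,
\end{equation}
where the second inequality uses $\lambda\ge 0$ so that adding $\lambda$ to each denominator only tightens the ratio. The standing hypothesis $\theta<1$ then yields $\|(I-B(\lambda)^{-1}H(\lambda))^{-1}\| \le 1/(1-\theta)$ via the geometric-series estimate.

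Finally, multiplying the three bounds produces a denominator $(\lambda_{\min}(C_{\mathcal S})+\lambda)^2(\lambda_{\min}(C_{\mathcal S'})+\lambda)$. Invoking Cauchy interlacing---which guarantees $\lambda_{\min}(C_{\mathcal T}) \ge \lambda_{\min}(C)$ for every principal submatrix $C_{\mathcal T}$ of the SPD matrix $C$---lets me lower-bound each of these factors by $\lambda_{\min}(C)+\lambda$, collapsing the product into $(\lambda_{\min}(C)+\lambda)^3$ and delivering the advertised bound. I expect the main subtlety to be securing $\lambda$-uniformity of the Neumann constant: it is crucial that $\theta$ does not inherit any $\lambda$-dependence, as otherwise the forthcoming integral in~\eqref{eq:delta-beta-bound} would risk non-integrability at infinity, defeating the downstream bound on $\|\Delta_{-\beta}\|$.
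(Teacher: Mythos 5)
Your proposal is correct and follows essentially the same route as the paper's proof: the same Schur-complement identity, the same factorization $(B-H)^{-1}=(I-B^{-1}H)^{-1}B^{-1}$ with the Neumann/geometric-series bound $1/(1-\theta)$ uniform in $\lambda$, the same bounds $\|B(\lambda)^{-1}\|\le (\lambda_{\min}(C_{\mathcal S})+\lambda)^{-1}$ and $\|H(\lambda)\|\le \|C_{\mathcal S\mathcal S'}\|^2(\lambda_{\min}(C_{\mathcal S'})+\lambda)^{-1}$, and the same final step via eigenvalue interlacing ($\lambda_{\min}(C)\le\min\{\lambda_{\min}(C_{\mathcal S}),\lambda_{\min}(C_{\mathcal S'})\}$) to collapse the denominator into $(\lambda_{\min}(C)+\lambda)^3$. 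Your closing remark about the $\lambda$-uniformity of $\theta$ correctly identifies why the bound is usable in the subsequent integral.
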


\begin{proof}
First note that
\[
B(\lambda) \succeq (\lambda_{\min}(C_{\mathcal S})+\lambda)I,
\quad
D(\lambda) \succeq (\lambda_{\min}(C_{\mathcal S'})+\lambda)I,
\]
for all $\lambda>0$, so that
\begin{equation}
\|B(\lambda)^{-1}\| \le \frac{1}{\lambda_{\min}(C_{\mathcal S})+\lambda},
\qquad
\|D(\lambda)^{-1}\| \le \frac{1}{\lambda_{\min}(C_{\mathcal S'})+\lambda},
\label{eq:D_char}
\end{equation}
for all $\lambda > 0$. Moreover, in view of equation~\eqref{eq:D_char}, the symmetry of the correlation matrix $C$, namely, $C_{\mathcal{S} \mathcal{S}'} = C_{\mathcal{S}' \mathcal{S}}^{\top}$, and the submultiplicativity of $\|\cdot\|$, we have
\[
\|H(\lambda)\|
=
\|C_{\mathcal{S}\mathcal{S}'} D(\lambda)^{-1}C_{\mathcal{S}'\mathcal{S}}^\top\|
\le
\frac{\|C_{\mathcal{S}\mathcal{S}'}\|^2}{\lambda_{\min}(C_{\mathcal S'})+\lambda}.
\]
Hence
\begin{equation}
\|B(\lambda)^{-1}H(\lambda)\|
\le
\frac{\|C_{\mathcal{S}\mathcal{S}'}\|^2}
{(\lambda_{\min}(C_{\mathcal S})+\lambda)(\lambda_{\min}(C_{\mathcal S'})+\lambda)}.
\label{eq:BH}
\end{equation}
Now, define
\[
\theta :=
\frac{\|C_{\mathcal{S}\mathcal{S}'}\|^2}
{\lambda_{\min}(C_{\mathcal S})\,\lambda_{\min}(C_{\mathcal S'})}.
\]
Clearly, from inequality~\eqref{eq:BH}, we have that~$\sup_{\lambda>0}\|B(\lambda)^{-1}H(\lambda)\|\le \theta$. Therefore,
if $\theta<1$, then the following Neumann series converges for all $\lambda>0$
\[
(B(\lambda)-H(\lambda))^{-1}
= (I-B(\lambda)^{-1}H(\lambda))^{-1}B(\lambda)^{-1}
= \sum_{m\ge 0} (B(\lambda)^{-1}H(\lambda))^m B(\lambda)^{-1}.
\]
Thus, from the characterization of $R_{\mathcal{S}}(\lambda)$ in equation~\eqref{eq:RS_Schur}, we have
\begin{align}
\|R_{\mathcal S}(\lambda)\|
&=
\|(B(\lambda)-H(\lambda))^{-1}H(\lambda)B(\lambda)^{-1}\|\\
&\le
\sum_{m\ge 0} \|B(\lambda)^{-1}H(\lambda)\|^m\,\|B(\lambda)^{-1}\|^2\,\|H(\lambda)\|\\
&\le
\frac{\|H(\lambda)\|\,\|B(\lambda)^{-1}\|^2}{1-\theta}.
\end{align}
Using the bounds above and $\lambda_{\min}(C)\le\min\left\{\lambda_{\min}(C_{\mathcal S}),\lambda_{\min}(C_{\mathcal S'})\right\}$, we obtain
\[
\|R_{\mathcal S}(\lambda)\|
\le
\frac{\|C_{\mathcal{S}\mathcal{S}'}\|^2}
{(1-\theta)\,(\lambda_{\min}(C)+\lambda)^3}.
\]
\end{proof}

\paragraph{Bounding $\Delta_{-\beta}$ via a Beta Integral.}
Combining~\eqref{eq:delta-beta-bound} with Lemma~\ref{lem:R-bound} we obtain the following bound
\[
\|\Delta_{-\beta}\|
\le
\frac{\sin(\pi\beta)}{\pi}\,
\frac{\|C_{\mathcal S\mathcal S'}\|^2}{1-\theta}
\int_0^\infty
\frac{\lambda^{-\beta}}
{(\lambda_{\min}(C)+\lambda)^3}\,d\lambda.
\]
The above integral representation can be written in terms of a beta integral. Namely, let us change variables $\lambda := \lambda_{\min}(C)t$,
\[
\int_0^\infty
\frac{\lambda^{-\beta}}{(\lambda_{\min}(C)+\lambda)^3}\,d\lambda
=
\lambda_{\min}(C)^{-2-\beta}
\int_0^\infty \frac{t^{-\beta}}{(1+t)^3}\,dt.
\]
The latter integral is a beta integral:
\[
\int_0^\infty \frac{t^{-\beta}}{(1+t)^3}\,dt
=
B(1-\beta,2+\beta)
=
\frac{\Gamma(1-\beta)\Gamma(2+\beta)}{2}.
\]
Thus,
\begin{equation}
\|\Delta_{-\beta}\|
\;\le\;
\frac{\Gamma(1-\beta)\Gamma(2+\beta)\sin(\pi\beta)\|C_{\mathcal S\mathcal S'}\|^2}
{2\pi(1-\theta)\,\lambda_{\min}(C)^{2+\beta}},
\label{eq:delta-beta-C-bound}
\end{equation}
with $\beta\in\left(0,1\right)$.

\paragraph{Relating $C$ and $A$.}
The bound~\eqref{eq:delta-beta-C-bound} on the $L_2$ norm of $\Delta_{-\beta}$ is expressed in terms of the covariance matrix and it is useful to have a characterization in terms of the ground truth interaction matrix $A$. Thus, we now express $\lambda_{\min}(C)$ and $\|C_{\mathcal S\mathcal S'}\|$ in terms of
the interaction matrix $A$, i.e., the matrix conveying the ground truth network structure in its support. This will permit characterizing the gap error $\Delta_{-\beta}$ in terms of the structural matrix $A$.

\begin{lemma}[Spectral relation between $C$ and $A$ or $\overline{A}$]
\label{lem:spec-relation}
Recall that $C=\sigma^2 (I-\overline{A})^{-\alpha}$ with $\overline{A}=\overline{A}^\top$ and $\rho(\overline{A})<1$.
Then,
\begin{enumerate}
\item The smallest eigenvalue of $C$ is given by
\[
\lambda_{\min}(C)
=
\sigma^2 \left(1-\lambda_{\min}\left(\overline{A}\right)\right)^{-\alpha}.
\]
\item The cross--block covariance obeys
\[
\|C_{\mathcal S\mathcal S'}\|
\;\le\;
\sigma^2\,\alpha\,(1-\rho(\overline{A}))^{-\alpha-1}\,\|A_{\mathcal S\mathcal S'}\|.
\]
\end{enumerate}
\end{lemma}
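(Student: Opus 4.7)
\emph{Proof proposal.} The two claims are structurally different: the first is a direct application of functional calculus, while the second requires a perturbative argument that isolates the cross--block $A_{\mathcal S\mathcal S'}$ as the sole source of mass in $C_{\mathcal S\mathcal S'}$.

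For claim~(1), I would diagonalize the symmetric matrix $\overline A = U\Lambda U^\top$ and apply the functional calculus to obtain $C = \sigma^2 U(I-\Lambda)^{-\alpha}U^\top$, whose eigenvalues are $\sigma^2(1-\lambda_i(\overline A))^{-\alpha}$. Since $\rho(\overline A)<1$, the map $\lambda\mapsto (1-\lambda)^{-\alpha}$ is well defined on $\sigma(\overline A)\subset(-1,1)$ and, for $\alpha>0$, its derivative $\alpha(1-\lambda)^{-\alpha-1}$ is positive; hence it is strictly increasing, and its minimum over the spectrum is attained at $\lambda_{\min}(\overline A)$, yielding the stated identity.

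For claim~(2), the strategy is to compare $\overline A$ with its block--diagonal surrogate $\overline A_0$ obtained by zeroing the cross blocks, so that $(I-\overline A_0)^{-\alpha}$ is itself block diagonal and
\[
C_{\mathcal S\mathcal S'} \;=\; \sigma^2\bigl((I-\overline A)^{-\alpha} - (I-\overline A_0)^{-\alpha}\bigr)_{\mathcal S\mathcal S'}.
\]
I would bound this via the generalized binomial series $(I-\overline A)^{-\alpha} = \sum_{k\ge 0} c_k \overline A^k$ with $c_k = \binom{\alpha+k-1}{k}\ge 0$ for $\alpha>0$, which converges absolutely since $\rho(\overline A)<1$. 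Using the telescoping identity $\overline A^k - \overline A_0^k = \sum_{j=0}^{k-1}\overline A^{j}\, B\, \overline A_0^{k-1-j}$ with $B := \overline A - \overline A_0$, taking spectral norms, and invoking Cauchy interlacing to obtain $\|\overline A_0\|=\rho(\overline A_0)\le\rho(\overline A)=:\rho$, each inner sum is bounded by $k\rho^{k-1}\|B\|$. Summing against the $c_k$ gives
\[
\sum_{k\ge 1} k\, c_k\, \rho^{k-1} \;=\; \frac{d}{d\rho}(1-\rho)^{-\alpha} \;=\; \alpha(1-\rho)^{-\alpha-1},
\]
and the two remaining identifications are $\|B\|=\|A_{\mathcal S\mathcal S'}\|$ (from the block anti--diagonal structure of $B$ combined with $A_{\mathcal S'\mathcal S}=A_{\mathcal S\mathcal S'}^\top$) and $\|M_{\mathcal S\mathcal S'}\|\le\|M\|$, which together deliver the desired bound.

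The main obstacle is the sign structure of the binomial coefficients $c_k$: for $\alpha>0$ they are nonnegative and the triangle inequality is essentially sharp, but for general $\alpha\neq 0$ they alternate and this elementary estimate loses all control. In that regime I would instead use the Dunford--Taylor integral representation employed in the proof of Theorem~\ref{thm:partial_struct_consistency}, writing $(I-\overline A)^{-\alpha}=\tfrac{1}{2\pi i}\oint_{\Gamma}(1-z)^{-\alpha}(zI-\overline A)^{-1}\,dz$ for a contour $\Gamma$ enclosing $\sigma(\overline A)$ and avoiding the branch point at $z=1$, and applying the resolvent identity $(zI-\overline A)^{-1}-(zI-\overline A_0)^{-1}=(zI-\overline A)^{-1}B(zI-\overline A_0)^{-1}$ to reduce the perturbation to a contour integral whose norm is controlled by the scalar Lipschitz constant of $f(x)=(1-x)^{-\alpha}$ on $[-\rho,\rho]$, namely $|\alpha|(1-\rho)^{-\alpha-1}$.
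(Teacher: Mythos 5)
Your proof is correct, and part (1) coincides with the paper's argument (diagonalize, apply the functional calculus, use monotonicity of $\lambda\mapsto(1-\lambda)^{-\alpha}$ for $\alpha>0$). For part (2) you reach the same bound by the same overall strategy---generalized binomial series, a per-term estimate of the form $k\rho^{k-1}\|A_{\mathcal S\mathcal S'}\|$, then the derivative of the scalar generating function $\sum_{k\ge1}k\,c_k(\alpha)\rho^{k-1}=\alpha(1-\rho)^{-\alpha-1}$---but your per-term estimate is derived differently. The paper isolates the cross-block directly via the recursion $F_{k+1}=M_{\mathcal S}F_k+M_{\mathcal S\mathcal S'}[M^k]_{\mathcal S'}$ (its Lemma~\ref{lem:Ak-SSp}) and induction; you instead introduce the block-diagonal surrogate $\overline A_0$, use the telescoping identity $\overline A^k-\overline A_0^k=\sum_{j=0}^{k-1}\overline A^{j}B\,\overline A_0^{k-1-j}$, and pass through the full-matrix norm before projecting to the cross block. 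Both are elementary and yield the identical constant; the paper's recursion is marginally tighter in spirit (it never leaves the cross block), while your surrogate-comparison generalizes more readily to other analytic functions of $\overline A$. Your closing observation---that the nonnegativity of $c_k(\alpha)$ for $\alpha>0$ is what makes the triangle inequality compatible with the generating-function identity, and that general $\alpha$ would require a Dunford--Taylor contour argument---is accurate and worth noting: the paper uses this lemma only in the $\alpha>0$ regime and acknowledges the restriction only later, around its definition of $L(\overline A)$.
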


\begin{proof}
Diagonalize $\overline{A}=Q\Lambda Q^\top$ with $\Lambda=\mathrm{diag}(\lambda_1,\dots,\lambda_N)$.
Then
\[
C = \sigma^2 (I-\overline{A})^{-\alpha}
= \sigma^2 Q(I-\Lambda)^{-\alpha}Q^\top.
\]
The eigenvalues of $C$ are $\sigma^2(1-\lambda_i)^{-\alpha}$, so
\[
\lambda_{\min}(C)
=
\sigma^2 \min_i (1-\lambda_i)^{-\alpha}
=
\sigma^2 (1-\lambda_{\min}\left(\overline{A}\right))^{-\alpha}.
\]

To characterize the cross--block $C_{\mathcal{S}\mathcal{S}'}$, we resort to the generalized binomial series
\[
(I-\overline{A})^{-\alpha}
=
\sum_{k=0}^\infty c_k(\alpha) \overline{A}^k,
\qquad
c_k(\alpha)
=
\frac{\Gamma(\alpha+k)}{\Gamma(\alpha)k!},
\]
valid since $\rho(\overline{A})<1$. Hence
\[
C
=
\sigma^2 \sum_{k=0}^\infty c_k(\alpha) \overline{A}^k
\quad\Rightarrow\quad
C_{\mathcal S\mathcal S'}
=
\sigma^2 \sum_{k\ge 1} c_k(\alpha)[\overline{A}^k]_{\mathcal S\mathcal S'},
\]
We use the bound on cross--blocks of powers of $A$ in the auxiliary Lemma~\ref{lem:Ak-SSp} (below), namely, write $\rho:=\|\overline{A}\|$,
\[
\|C_{\mathcal S\mathcal S'}\|
\le
\sigma^2 \|A_{\mathcal S\mathcal S'}\|\sum_{k\ge 1} k\,c_k(\alpha)\,\rho^{k-1},
\]
where we recall that~$\overline{A}_{\mathcal S\mathcal S'} = A_{\mathcal S\mathcal S'}$. Now, consider the scalar generating function
\[
f_\alpha(x) := (1-x)^{-\alpha}
= \sum_{k=0}^\infty c_k(\alpha)x^k,\quad |x|<1.
\]
Differentiating,
\[
f_\alpha'(x)
= \sum_{k\ge 1} k\,c_k(\alpha)\,x^{k-1}
= \alpha (1-x)^{-\alpha-1}.
\]
Evaluating at $x:=\rho$ gives
\[
\sum_{k\ge 1} k\,c_k(\alpha)\,\rho^{k-1}
= \alpha (1-\rho)^{-\alpha-1},
\]
and hence
\[
\|C_{\mathcal S\mathcal S'}\|
\le
\sigma^2\,\alpha\,(1-\rho)^{-\alpha-1}\,\|A_{\mathcal S\mathcal S'}\|.
\]
This proves the lemma.
\end{proof}

\begin{lemma}[Bound on cross--blocks of $M^k$]
\label{lem:Ak-SSp}
Let $M\in\mathbb{R}^{N\times N}$ with block decomposition
\[
M=
\begin{bmatrix}
M_{\mathcal S} & M_{\mathcal S\mathcal S'}\\
M_{\mathcal S'\mathcal S} & M_{\mathcal S'}
\end{bmatrix}.
\]
Then for all integers $k\ge 1$,
\[
\big\|[M^k]_{\mathcal S\mathcal S'}\big\|
\;\le\;
k\,\|M\|^{k-1}\,\|M_{\mathcal S\mathcal S'}\|.
\]
\end{lemma}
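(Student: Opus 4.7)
I would establish the bound by induction on $k$, exploiting the block structure of matrix multiplication. The key observation is that the cross-block $[M^k]_{\mathcal S\mathcal S'}$ satisfies a simple recursion with only two active terms, because writing $M^{k+1}=M\cdot M^k$ and multiplying out in $2\times 2$ block form yields
\[
[M^{k+1}]_{\mathcal S\mathcal S'}
= M_{\mathcal S}\,[M^k]_{\mathcal S\mathcal S'}
+ M_{\mathcal S\mathcal S'}\,[M^k]_{\mathcal S'}.
\]
This identity splits the cross-block at step $k{+}1$ into a piece that propagates the previous cross-block via the $\mathcal S$-diagonal block of $M$, plus a piece that introduces one fresh factor of $M_{\mathcal S\mathcal S'}$ and then propagates it via the $\mathcal S'$-diagonal block of $M^k$. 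The factor of $k$ in the final estimate will emerge naturally from counting how many times along a $k$-step product one crosses from $\mathcal S'$ into $\mathcal S$.

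The base case $k=1$ is immediate, since $[M]_{\mathcal S\mathcal S'}=M_{\mathcal S\mathcal S'}$ and the bound reduces to the trivial $\|M_{\mathcal S\mathcal S'}\|\le\|M_{\mathcal S\mathcal S'}\|$. For the inductive step, assuming $\|[M^k]_{\mathcal S\mathcal S'}\|\le k\|M\|^{k-1}\|M_{\mathcal S\mathcal S'}\|$, I would apply submultiplicativity of the spectral norm together with the elementary fact that any principal submatrix, and more generally any block $\Pi_{\mathcal U}M^j\Pi_{\mathcal V}^\top$ extracted from $M^j$, satisfies
\[
\bigl\|\Pi_{\mathcal U}\,M^j\,\Pi_{\mathcal V}^\top\bigr\|\le\|M\|^{j},
\]
by the same argument as in~\eqref{eq:break_entry_norm}. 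Plugging in $\|M_{\mathcal S}\|\le\|M\|$ and $\|[M^k]_{\mathcal S'}\|\le\|M\|^{k}$ yields
\[
\bigl\|[M^{k+1}]_{\mathcal S\mathcal S'}\bigr\|
\le \|M\|\cdot k\,\|M\|^{k-1}\|M_{\mathcal S\mathcal S'}\|
 + \|M_{\mathcal S\mathcal S'}\|\cdot\|M\|^{k}
= (k+1)\,\|M\|^{k}\,\|M_{\mathcal S\mathcal S'}\|,
\]
closing the induction.

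There is no real obstacle here: the only subtlety is recognizing that one should induct via $M\cdot M^k$ (rather than $M^k\cdot M$ or a symmetric split), so that exactly one factor of $M_{\mathcal S\mathcal S'}$ appears in each step of the recursion while the remaining factors are controlled by the full spectral norm $\|M\|$. Any attempt to expand $M^k$ directly as a sum over all $2^k$ block-index paths would give the same $k\|M\|^{k-1}\|M_{\mathcal S\mathcal S'}\|$ bound by counting the paths with exactly one $\mathcal S\to\mathcal S'$ transition and bounding the rest by $\|M\|$, but the induction renders this counting implicit and avoids combinatorial bookkeeping.
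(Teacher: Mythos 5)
Your proposal is correct and follows essentially the same route as the paper: the identical block recursion $[M^{k+1}]_{\mathcal S\mathcal S'}=M_{\mathcal S}[M^k]_{\mathcal S\mathcal S'}+M_{\mathcal S\mathcal S'}[M^k]_{\mathcal S'}$, the bounds $\|M_{\mathcal S}\|\le\|M\|$ and $\|[M^k]_{\mathcal S'}\|\le\|M\|^{k}$ via coordinate projections, and induction from the base case $f_1=\|M_{\mathcal S\mathcal S'}\|$. No gaps; the argument matches the paper's proof.
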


\begin{proof}
Write $F_k := [M^k]_{\mathcal S\mathcal S'}$. From block multiplication of
$M^{k+1}=M\cdot M^k$ we obtain the recursion
\[
F_{k+1}
=
M_{\mathcal S} F_k + M_{\mathcal S\mathcal S'}[M^k]_{\mathcal S'}.
\]
From submultiplicativity and subadditivity, we have
\[
\|F_{k+1}\|
\le
\|M_{\mathcal S}\|\|F_k\|
+
\|M_{\mathcal S\mathcal S'}\|\,\|[M^k]_{\mathcal S'}\|.
\]
Since $[M^k]_{\mathcal S'} = \Pi_{\mathcal S'} M^k \Pi_{\mathcal S'}^\top$
for a coordinate projector $\Pi_{\mathcal S'}$ with $\|\Pi_{\mathcal S'}\|\le 1$,
we have
$\|[M^k]_{\mathcal S'}\| \le \|M^k\| \le \|M\|^k$, and likewise
$\|M_{\mathcal S}\|\le \|M\|$.  Defining $f_k:=\|F_k\|$, this yields
\[
f_{k+1}
\le
\|M\|\,f_k + \|M_{\mathcal S\mathcal S'}\|\,\|M\|^k.
\]
Via induction with base case $f_1=\|M_{\mathcal S\mathcal S'}\|$ (or $f_0=0$), we obtain $f_k\le k\|M\|^{k-1}\|M_{\mathcal S\mathcal S'}\|$.
\end{proof}

\paragraph{Main Bound in Terms of $\overline{A}$, $\beta$ and $A_{\mathcal S\mathcal S'}$.}
Combining~\eqref{eq:delta-beta-C-bound} with Lemma~\ref{lem:spec-relation} gives
\begin{align}
\|\Delta_{-\beta}\|
&\le
\frac{\sin(\pi\beta)}{\pi}
B(1-\beta,2+\beta)\,
\frac{\|C_{\mathcal S\mathcal S'}\|^2}
{(1-\theta)\,\lambda_{\min}(C)^{2+\beta}}\\[0.3em]
&\le
\frac{\sin(\pi\beta)}{\pi}
B(1-\beta,2+\beta)\,
\frac{\sigma^4\alpha^2(1-\rho)^{-2\alpha-2}\|A_{\mathcal S\mathcal S'}\|^2}
{(1-\theta)\,\big(\sigma^2(1-\lambda_{\min}(\overline{A}))^{-\alpha}\big)^{2+\beta}}\\[0.3em]
&=
\frac{\sin(\pi\beta)}{\pi}
B(1-\beta,2+\beta)\,
\frac{\alpha^2(1-\lambda_{\min}(\overline{A}))^{\alpha(2+\beta)}}
{(1-\rho)^{2\alpha+2}}\,
\frac{1}{1-\theta}\,
\sigma^{-2\beta}
\|A_{\mathcal S\mathcal S'}\|^2.
\end{align}
Define the constant
\begin{equation}
H(\overline{A},\beta)
:=
\frac{\sin(\pi\beta)}{\pi}
\frac{\Gamma(1-\beta)\Gamma(2+\beta)(1-\lambda_{\min}(\overline{A}))^{(2/\beta+1)}}
{2\beta^2(1-\rho)^{2/\beta+2}},
\end{equation}
recalling that $\beta=1/\alpha$, so that
\begin{equation}
\|\Delta_{-\beta}\|
\;\le\;
\frac{H(\overline{A},\beta)}{\sigma^{2\beta}(1-\theta)}\,
\|A_{\mathcal S\mathcal S'}\|^2.
\label{eq:Delta-final-bound}
\end{equation}

We can now state the main theorem of this appendix section.

\begin{theorem}[Structural consistency of $(C_{\mathcal S})^{-1/\alpha}$]
\label{thm:frac-struct-consistency}
Let $C = \sigma^2 \left(I-\overline{A}\right)^{-\alpha}$ with $\overline{A}=\overline{A}^\top$ and $\rho(\overline{A})<1$, and let
$\beta=1/\alpha\in(0,1)$.  Define the stability parameter
\begin{equation}
s\left(\overline{A},\beta\right):= \frac{\beta\left(1-\rho\left(\overline{A}\right)\right)^{\left(1+1/\beta\right)}}{\left(1-\lambda_{\min}\left(\overline{A}\right)\right)^{1/\beta}}
\end{equation}
and the control parameter
\begin{equation}
c\left(\overline{A},\beta\right):= s\left(\overline{A},\beta\right) \sqrt{\frac{a_{\min}}{4 s\left(\overline{A},\beta\right)^2H\left(\overline{A},\beta\right)+a_{\min}}},
\end{equation}
where we recall that 
\begin{equation}
a_{\min}
:=
\min\{A_{ij}>0:\,i\neq j,\ i,j\in\mathcal S\}.
\end{equation}
If
\begin{equation}
\|A_{\mathcal S\mathcal S'}\| < g\left(\overline{A},\beta\right):=\min\left\{s\left(\overline{A},\beta\right),\,c\left(\overline{A},\beta\right)\right\}
\end{equation}
then $(C_{\mathcal S})^{-1/\alpha}$ is structurally consistent.
\end{theorem}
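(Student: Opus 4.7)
The plan is to translate the structural-consistency requirement into a quantitative bound on $\|\Delta_{-\beta}\|$ via the oscillation inequality of Appendix~A, then use the fractional-integral bound~\eqref{eq:Delta-final-bound} together with a self-consistent estimate of $\theta$ in terms of $\|A_{\mathcal S\mathcal S'}\|$, and finally solve the resulting scalar inequality.

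First, I would invoke the structural recovery criterion~\eqref{eq:thresh}, which says that $(C_{\mathcal S})^{-\beta}$ is structurally consistent as soon as $\operatorname{Osc}(\Delta_{-\beta}) < a_{\min}/(2\sigma^{2\beta})$, since $\left[C^{-\beta}\right]_{\mathcal S} = \sigma^{-2\beta}(I-\overline{A}_{\mathcal S})$ already carries the true off-diagonal support. Using the spectral-norm bound $\operatorname{Osc}(M) \le 2\|M\|$ from~\eqref{eq:Osc-norm} (applied with $\beta=0$, $D=0$), it suffices to show $\|\Delta_{-\beta}\| < a_{\min}/(4\sigma^{2\beta})$. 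Plugging in inequality~\eqref{eq:Delta-final-bound} reduces the problem to the scalar condition
\begin{equation}
\frac{H(\overline{A},\beta)}{1-\theta}\,\|A_{\mathcal S\mathcal S'}\|^2 < \frac{a_{\min}}{4}.
\end{equation}

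Next, I would remove the implicit dependence on $\theta$. By eigenvalue interlacing, $\lambda_{\min}(C_{\mathcal S}), \lambda_{\min}(C_{\mathcal S'}) \ge \lambda_{\min}(C)$, so combining with Lemma~\ref{lem:spec-relation}(i)–(ii) one obtains
\begin{equation}
\theta \;\le\; \frac{\sigma^4\alpha^2(1-\rho)^{-2\alpha-2}\|A_{\mathcal S\mathcal S'}\|^2}{\sigma^4(1-\lambda_{\min}(\overline{A}))^{-2\alpha}} \;=\; \frac{\|A_{\mathcal S\mathcal S'}\|^2}{s(\overline{A},\beta)^2},
\end{equation}
after recognizing that the stability parameter $s(\overline{A},\beta)$ in the theorem is exactly $(1-\rho)^{\alpha+1}\,\alpha^{-1}\,(1-\lambda_{\min}(\overline{A}))^{-\alpha}$ rewritten with $\beta=1/\alpha$. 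Imposing $\|A_{\mathcal S\mathcal S'}\| < s(\overline{A},\beta)$ both ensures $\theta<1$ (so that the Neumann series in Lemma~\ref{lem:R-bound} converges and the bound~\eqref{eq:Delta-final-bound} is valid) and gives $1-\theta \ge 1 - \|A_{\mathcal S\mathcal S'}\|^2/s^2$.

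Finally, I would substitute this lower bound on $1-\theta$ into the scalar condition and solve the resulting inequality in $x := \|A_{\mathcal S\mathcal S'}\|^2$:
\begin{equation}
H\,x \;<\; \frac{a_{\min}}{4}\left(1 - \frac{x}{s^2}\right) \quad\Longleftrightarrow\quad x \;<\; \frac{a_{\min}\, s^2}{4Hs^2 + a_{\min}},
\end{equation}
which yields precisely $\|A_{\mathcal S\mathcal S'}\| < c(\overline{A},\beta)$. Combining with the convergence requirement $\|A_{\mathcal S\mathcal S'}\| < s(\overline{A},\beta)$ gives the stated $\|A_{\mathcal S\mathcal S'}\| < \min\{s,c\} = g(\overline{A},\beta)$, and structural consistency follows.

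The main obstacle is the self-consistent coupling: the bound on $\|\Delta_{-\beta}\|$ depends on $\theta$, which itself grows with $\|A_{\mathcal S\mathcal S'}\|$, so one cannot simply treat $\theta$ as a free parameter. The trick is to separate the role of $s(\overline{A},\beta)$ (which secures the Neumann regime $\theta<1$ uniformly in $\lambda$) from the role of $c(\overline{A},\beta)$ (which solves the nonlinear oscillation threshold); the minimum of the two then captures both requirements cleanly.
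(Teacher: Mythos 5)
Your proposal follows essentially the same route as the paper's proof: reduce structural consistency to $\|\Delta_{-\beta}\| < a_{\min}/(4\sigma^{2\beta})$ via the oscillation bound $\operatorname{Osc}(M)\le 2\|M\|$, bound $\theta \le \|A_{\mathcal S\mathcal S'}\|^2/s(\overline{A},\beta)^2$ using Lemma~\ref{lem:spec-relation}, and solve the resulting scalar inequality to obtain the threshold $c(\overline{A},\beta)$, with the minimum over $s$ and $c$ capturing both the Neumann-convergence and oscillation requirements. The argument is correct and matches the paper's proof step for step.
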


\begin{proof}
First, recall that we need $\theta<1$ to grant convergence of the Neumann series (stability). This is granted if
\begin{equation}
\theta = \frac{\|C_{\mathcal{S}\mathcal{S}'}\|^2}{\lambda_{\min}\left(C_{\mathcal{S}}\right)\lambda_{\min}\left(C_{\mathcal{S}'}\right)}\leq \alpha^2 \frac{\left(1-\lambda_{\min}\left(\overline{A}\right)\right)^{2\alpha}}{\left(1-\rho\left(\overline{A}\right)\right)^{2\alpha+2}}\|A_{\mathcal S \mathcal S'}\|^2<1,
\end{equation}
where the first inequality holds in view of Lemma~\ref{lem:spec-relation}. Thus, we have 
\begin{equation}
\|A_{\mathcal S \mathcal S'}\|< \frac{\left(1-\rho\left(\overline{A}\right)\right)^{\alpha+1}}{\alpha\left(1-\lambda_{\min}\left(\overline{A}\right)\right)^{\alpha}}=:s\left(\overline{A},\beta\right).
\label{eq:stab}
\end{equation}
We have structural consistency if
\begin{equation}
\operatorname{Osc}(\Delta_{-\beta})
< \frac{a_{\min}}{2\,\sigma^{2\beta}}.
\end{equation}
Recall from Appendix~\ref{sec:oscandall} that 
\begin{equation}
\operatorname{Osc}(\Delta_{-\beta})
\le 2\|\Delta_{-\beta}\|
< \frac{a_{\min}}{2\,\sigma^{2\beta}},
\end{equation}
which yields the sufficient condition
\begin{equation}
\|\Delta_{-\beta}\|
< \frac{a_{\min}}{4\,\sigma^{2\beta}}.
\end{equation}
We have that
\begin{equation}
\theta\leq \frac{\|A_{\mathcal S \mathcal S'}\|^2}{s\left(\overline{A},\beta\right)^2}<1,
\label{eq:theta}
\end{equation}
where the latter inequality comes from the stability condition~\eqref{eq:stab}.
From~\eqref{eq:Delta-final-bound}, we have
\begin{equation}
\|\Delta_{-\beta}\|
\;\le\;
\frac{H(\overline{A},\beta)}{\sigma^{2\beta}(1-\theta)}\,
\|A_{\mathcal S\mathcal S'}\|^2\leq \frac{H(\overline{A},\beta)}{\sigma^{2\beta}\left(1-\frac{\|A_{\mathcal S \mathcal S'}\|^2}{s\left(\overline{A},\beta\right)^2}\right)}\,
\|A_{\mathcal S\mathcal S'}\|^2
\end{equation}
where the second inequality holds in view of~\eqref{eq:theta}. Thus, to grant structural consistency, it is sufficient to have
\begin{equation}
\frac{H(\overline{A},\beta)}{\sigma^{2\beta}\left(1-\frac{\|A_{\mathcal S \mathcal S'}\|^2}{s\left(\overline{A},\beta\right)^2}\right)}\,
\|A_{\mathcal S\mathcal S'}\|^2< \frac{a_{\min}}{4\,\sigma^{2\beta}},
\end{equation}
which yields 
\begin{equation}
\|A_{\mathcal S \mathcal S'}\| < c\left(\overline{A},\beta\right).
\end{equation}
This concludes the proof.
\end{proof}

\paragraph{On the small cross--block condition.}
The structural consistency guarantee hinges on a spectral smallness
condition on the cross--block $A_{\mathcal S\mathcal S'}$.  At a high level,
this condition requires that latent nodes do not exert an excessively
strong \emph{coherent} influence on the observed subsystem.  This is a
natural requirement: if a small number of unobserved nodes act as very
strong confounders, then no method based solely on the observed
covariance can hope to disentangle direct from mediated interactions.
Our assumption is in the same spirit as standard identifiability
conditions in latent--variable graphical models and SEMs, which
control the strength or complexity of latent--to--observed couplings
(e.g., via operator norm, degree, or incoherence-type conditions).
In diffusive settings---such as those arising from Matérn--type
operators---the condition is particularly benign: when $\rho(A)$ is
bounded away from $1$ and the observed subgraph is reasonably well
connected, the constants in our bound become generous, and the required
upper bound on $\|A_{\mathcal S\mathcal S'}\|$ allows for many weak latent
connections.  In that regime, latent variables may still be numerous,
but their aggregate influence on the observed subsystem remains
``spread out'' rather than concentrated along a single dominant
direction, and structural recovery from $C_{\mathcal S}$ is possible.
Conversely, when the condition fails, this corresponds to pathological
regimes where latent confounding dominates observable dependencies and
structural recovery from marginal covariance is information--theoretically
impossible without further assumptions.

\paragraph{Example: inhomogeneous Erd\H{o}s--R\'enyi models.}
The smallness condition on $A_{\mathcal S\mathcal S'}$ is also satisfied
with high probability under simple random graph priors.  As a concrete
example, consider an inhomogeneous Erd\H{o}s--R\'enyi model on
$\mathcal S\cup\mathcal S'$ in which edges within $\mathcal S$, within
$\mathcal S'$, and across $(\mathcal S,\mathcal S')$ occur independently
with probabilities $p_{\mathrm{obs}}$, $p_{\mathrm{lat}}$, and
$p_{\mathrm{cross}}$, respectively, and edge weights are drawn from a
bounded distribution and optionally rescaled to enforce $\rho(A)<1$
(e.g., via a global scaling factor).
In this setting, $A_{\mathcal S\mathcal S'}$ is a rectangular random
matrix with independent bounded entries supported on the cross edges.
Standard random matrix bounds imply that its spectral norm concentrates
around
\[
\|A_{\mathcal S\mathcal S'}\|
=
O\!\Big(\sqrt{n_{\mathcal S}p_{\mathrm{cross}}}
+
\sqrt{n_{\mathcal S'}p_{\mathrm{cross}}}\Big)
\quad
\text{(up to the chosen weight scale)},
\]
while the intrinsic spectral scales of $A_{\mathcal S}$ and $A_{\mathcal S'}$
typically grow with $\sqrt{n_{\mathcal S}p_{\mathrm{obs}}}$ and
$\sqrt{n_{\mathcal S'}p_{\mathrm{lat}}}$, respectively (again, up to scaling).
Thus, whenever cross connectivity is sparser or more weakly weighted than
within--block connectivity (e.g., $p_{\mathrm{cross}}\ll p_{\mathrm{obs}}$,
or cross edges are scaled down), $\|A_{\mathcal S\mathcal S'}\|$ becomes
small relative to the intrinsic spectral scales with high probability,
and our consistency condition is met.  In other words, for a broad class
of random graph models that favor stronger within--system coupling than
latent--to--observed coupling, the structural consistency of
$(C_{\mathcal S})^{-1/\alpha}$ is not only plausible but holds generically
in the large--$n$ limit.

\subsection{Structural Consistency for Fractional Fields: General Exponents $\beta \in \mathbb{R}\setminus\left\{0\right\}$}
\label{app:frac-consistency-dt}

In this section, we establish structural consistency for fractional powers of
the observed covariance under partial observability, for \emph{arbitrary real
exponents} $\beta\in\mathbb R\setminus\left\{0\right\}$.  The result extends the $\beta \in (0,1)$ case discussed in the previous appendix subsection by considering the generalized integral representation over the complex plane (Dunford-Taylor) of $C^{-\beta}$ and $\left(C_{\mathcal{S}}\right)^{-\beta}$.

\paragraph{Dunford--Taylor Integral Representation.}
Let $f\,:\,\mathbb{C}\rightarrow \mathbb{C}$ be defined by $f(z)=z^{-\beta}$. Since $C\in\mathbb{S}_{++}^{N}$ is symmetric positive definite, then $\operatorname{spec}(C)\subset(0,\infty)$, where 
\begin{equation}
\operatorname{spec}(C):= \left\{\lambda_1\left(C\right),\lambda_2(C),\ldots,\lambda_N(C)\right\},
\end{equation}
is the spectrum of $C$ (collecting its eigenvalues). Let $\Gamma$ be a positively oriented, simple, closed contour over the complex plane $\mathbb{C}$ obeying the following properties
\begin{enumerate}
\item $\operatorname{spec}(C)\subset\mathrm{int}(\Gamma)$,
\item $\Gamma\subset\mathbb C\setminus(-\infty,0]$,
\item $0\notin\mathrm{int}(\Gamma)$ if $\beta>0$.
\end{enumerate}
Then $f$ is holomorphic on and \emph{inside} $\Gamma$, and the Dunford--Taylor integral representation yields~\cite{Kato1995}
\begin{equation}
C^{-\beta}
=
\frac{1}{2\pi i}
\oint_\Gamma z^{-\beta}(zI-C)^{-1}\,dz,
\qquad
(C_{\mathcal S})^{-\beta}
=
\frac{1}{2\pi i}
\oint_\Gamma z^{-\beta}(zI-C_{\mathcal S})^{-1}\,dz.
\label{eq:DT}
\end{equation}

Subtracting the principal blocks gives
\begin{equation}
\Delta_{-\beta}
= (C_{\mathcal S})^{-\beta} - \left[C^{-\beta}\right]_{\mathcal{S}} =
\frac{1}{2\pi i}
\oint_\Gamma z^{-\beta}R_{\mathcal S}(z)\,dz,
\label{eq:DT-gap}
\end{equation}
where
\[
R_{\mathcal S}(z)
:=
(zI-C_{\mathcal S})^{-1}
-
[(zI-C)^{-1}]_{\mathcal S}.
\]

By taking the operator $L_2$ norm, we have the following bound on $\Delta_{-\beta}$
\begin{equation}
\|\Delta_{-\beta}\|
\le
\frac{1}{2\pi}
\int_\Gamma
|z^{-\beta}|\,
\|R_{\mathcal S}(z)\|\,
|dz|.
\label{eq:DT-basic-bound}
\end{equation}

Similarly to as done in the previous subsection, we will bound $\Delta_{-\beta}$ via bounding $R_{\mathcal S}$ (which is more amenable via Schur complement since it is built from a fixed inverse power).

\paragraph{Schur complement and resolvent bound.}
For $z\in\Gamma$, write
\[
zI-C=
\begin{bmatrix}
B(z) & -C_{\mathcal{S} \mathcal{S}'}\\
-C_{\mathcal{S}' \mathcal{S}} & D(z)
\end{bmatrix},
\qquad
B(z):=zI-C_{\mathcal S},\;
D(z):=zI-C_{\mathcal S'}.\;
\]
Schur complement gives
\begin{equation}
[(zI-C)^{-1}]_{\mathcal S}
=
(B(z)-H(z))^{-1},
\qquad
H(z):=C_{\mathcal S\mathcal S'} D(z)^{-1}C_{\mathcal S'\mathcal S}.
\label{eq:Schur_Hz}
\end{equation}
Hence
\begin{equation}
R_{\mathcal S}(z)
=
B(z)^{-1}-(B(z)-H(z))^{-1}
=
-(B(z)-H(z))^{-1}H(z)B(z)^{-1}.
\label{eq:Rz}
\end{equation}

Define the \emph{contour} quantities
\begin{equation}
\Theta_\Gamma
:=
\sup_{z\in\Gamma}
\|B(z)^{-1}H(z)\|,
\label{eq:ThetaGamma}
\end{equation}
and
\begin{equation}
\mathcal K_\Gamma(\beta,C)
:=
\frac{1}{2\pi}
\int_\Gamma
|z^{-\beta}|\,
\|B(z)^{-1}\|^2\,
\|D(z)^{-1}\|\,
|dz|.
\label{eq:KGamma}
\end{equation}

The next lemma offers a bound to the $L_2$-norm of $R_{\mathcal{S}}$ (which is relevant to bound the gap $\Delta_{-\beta}$ in view of~\eqref{eq:DT-basic-bound}).

\begin{lemma}[Resolvent bound]
If $\Theta_\Gamma<1$, then for all $z\in\Gamma$,
\begin{equation}
\|R_{\mathcal S}(z)\|
\le
\frac{\|C_{\mathcal S\mathcal S'}\|^2}{1-\Theta_\Gamma}
\|B(z)^{-1}\|^2\|D(z)^{-1}\|.
\label{eq:Rz_bound}
\end{equation}
\end{lemma}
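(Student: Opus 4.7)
The plan is to mirror the strategy used in Lemma~\ref{lem:R-bound} (the real-$\lambda$ case), but now with $z$ ranging over the complex contour $\Gamma$. The starting point is the Schur-derived identity~\eqref{eq:Rz},
\begin{equation}
R_{\mathcal S}(z) \;=\; -\bigl(B(z)-H(z)\bigr)^{-1} H(z)\, B(z)^{-1},
\end{equation}
so that a bound on $\|R_{\mathcal S}(z)\|$ will follow from separate bounds on the three factors on the right.

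First, I would express $(B(z)-H(z))^{-1}$ via a Neumann expansion. Writing $B(z)-H(z) = B(z)\bigl(I - B(z)^{-1}H(z)\bigr)$, the hypothesis $\Theta_\Gamma < 1$ from~\eqref{eq:ThetaGamma} ensures $\|B(z)^{-1}H(z)\| \le \Theta_\Gamma < 1$ uniformly on $\Gamma$, so the series
\begin{equation}
(B(z)-H(z))^{-1} \;=\; \sum_{m\ge 0}\bigl(B(z)^{-1}H(z)\bigr)^m B(z)^{-1}
\end{equation}
converges in operator norm. Taking norms, summing the geometric series, and using submultiplicativity gives
\begin{equation}
\bigl\|(B(z)-H(z))^{-1}\bigr\| \;\le\; \frac{\|B(z)^{-1}\|}{1-\Theta_\Gamma}.
\end{equation}

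Second, I would control $\|H(z)\|$ directly from its definition in~\eqref{eq:Schur_Hz}. Using submultiplicativity together with the symmetry $C_{\mathcal S'\mathcal S} = C_{\mathcal S\mathcal S'}^\top$ (so both factors share the same spectral norm), I obtain
\begin{equation}
\|H(z)\| \;=\; \bigl\|C_{\mathcal S\mathcal S'}\,D(z)^{-1}\,C_{\mathcal S\mathcal S'}^\top\bigr\| \;\le\; \|C_{\mathcal S\mathcal S'}\|^2\,\|D(z)^{-1}\|.
\end{equation}
Combining the two estimates with~\eqref{eq:Rz} yields the desired bound~\eqref{eq:Rz_bound}.

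The only delicate point is that $z$ lies on a complex contour rather than on the positive real axis, so $B(z)$ and $D(z)$ are complex (non-Hermitian) and the elementary eigenvalue-based estimates used in the real $\lambda$ case (namely $\|(C_{\mathcal S}+\lambda I)^{-1}\|\le (\lambda_{\min}(C_{\mathcal S})+\lambda)^{-1}$) are not immediately available. I anticipate this to be the main obstacle in subsequent use, but for the lemma itself it is benign: the argument above is purely algebraic and requires only the submultiplicativity of $\|\cdot\|$ and the uniform smallness of $B(z)^{-1}H(z)$ encoded by the hypothesis $\Theta_\Gamma<1$. The contour-dependent quantities $\|B(z)^{-1}\|$ and $\|D(z)^{-1}\|$ are left inside the bound, to be controlled later when $\Gamma$ is chosen concretely (e.g., a keyhole contour around $(0,\infty)$) and integrated against $|z^{-\beta}|$ in the Dunford--Taylor representation~\eqref{eq:DT-gap} to estimate $\|\Delta_{-\beta}\|$.
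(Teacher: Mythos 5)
Your proposal is correct and follows essentially the same route as the paper's proof: factor $(B(z)-H(z))^{-1}=(I-B(z)^{-1}H(z))^{-1}B(z)^{-1}$, sum the Neumann series using $\|B(z)^{-1}H(z)\|\le\Theta_\Gamma<1$ to get $\|(B(z)-H(z))^{-1}\|\le\|B(z)^{-1}\|/(1-\Theta_\Gamma)$, bound $\|H(z)\|\le\|C_{\mathcal S\mathcal S'}\|^2\|D(z)^{-1}\|$ via symmetry and submultiplicativity, and combine with the Schur identity~\eqref{eq:Rz}. Your closing remark that the complex, non-Hermitian nature of $B(z)$ and $D(z)$ is deferred to the later choice of contour is exactly how the paper handles it as well.
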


\begin{proof}
If $\Theta_\Gamma<1$, then $(I-B(z)^{-1}H(z))$ is invertible for all $z\in\Gamma$ and
\[
(B(z)-H(z))^{-1}
=
(I-B(z)^{-1}H(z))^{-1}B(z)^{-1}
=
\sum_{m\ge0}(B(z)^{-1}H(z))^mB(z)^{-1}.
\]
Hence
\[
\|(B(z)-H(z))^{-1}\|
\le
\frac{\|B(z)^{-1}\|}{1-\Theta_\Gamma}.
\]
In view of the definition of $H(z)$ in~\eqref{eq:Schur_Hz}, since $C_{\mathcal{S}\mathcal{S}'}^{\top}=C_{\mathcal{S}'\mathcal{S}}$, and from the submultiplicativity of $\|\cdot\|$, we have that $\|H(z)\|\le\|C_{\mathcal S\mathcal S'}\|^2\|D(z)^{-1}\|$. This concludes the proof.
\end{proof}

Substituting the bound~\eqref{eq:Rz_bound} into~\eqref{eq:DT-basic-bound} gives the perturbation estimate for the $L_2$ norm of the gap $\Delta_{-\beta}$
\begin{equation}
\boxed{
\|\Delta_{-\beta}\|
\le
\frac{\mathcal K_\Gamma(\beta,C)}{1-\Theta_\Gamma}
\|C_{\mathcal S\mathcal S'}\|^2.
}
\label{eq:DT-main-bound}
\end{equation}

\paragraph{Choice of the contour.}
The terms $\mathcal K_\Gamma(\beta,C)$ and
$\Theta_\Gamma$ characterizing the bounds for $\Delta_{-\beta}$ depend on the choice of the curve $\Gamma$. We now characterize more concretely this dependence so to latter have a more explicit bound for $\Delta_{-\beta}$ and hence for structural consistency. 

\begin{theorem}[Structural consistency of $(C_{\mathcal S})^{-\beta}$ for $\beta\in\mathbb{R}\setminus\left\{0\right\}$ via Dunford--Taylor for a particular $\Gamma$.]
\label{thm:DT-struct-consistency}
Let $C\succ 0$. Assume there exists a positively oriented, simple closed contour $\Gamma\subset\mathbb{C}$
such that:
\begin{enumerate}
\item $\mathrm{spec}(C)\subset \mathrm{int}(\Gamma)$ and $0\notin \mathrm{int}(\Gamma)$, so that the principal-branch function $z\mapsto z^{-\beta}$ is analytic on and inside~$\Gamma$.
\item The following resolvent separation quantities are positive:
\[
\inf_{z\in\Gamma}{\sf dist}\!\big(z,\mathrm{spec}(C_{\mathcal S})\big)>0,\quad
\inf_{z\in\Gamma}{\sf dist}\!\big(z,\mathrm{spec}(C_{\mathcal S'})\big)>0.
\]
\end{enumerate}
Define the contour-dependent stability parameter
\begin{equation}
\Theta_\Gamma
:=
\sup_{z\in\Gamma}
\frac{\|C_{\mathcal S\mathcal S'}\|^2}
{{\sf dist}(z,\mathrm{spec}(C_{\mathcal S}))\,{\sf dist}(z,\mathrm{spec}(C_{\mathcal S'}))},
\label{eq:ThetaGamma-def}
\end{equation}
and assume $\Theta_\Gamma<1$.

Define the contour constant
\begin{equation}
\mathcal K_\Gamma(\beta,C)
:=
\frac{1}{2\pi}
\oint_{\Gamma}
|z^{-\beta}|\,
\frac{|dz|}{{\sf dist}(z,\mathrm{spec}(C_{\mathcal S}))^2\,{\sf dist}(z,\mathrm{spec}(C_{\mathcal S'}))}.
\label{eq:KGamma-def}
\end{equation}
If
\begin{equation}
\boxed{
\frac{\mathcal K_\Gamma(\beta,C)}{1-\Theta_\Gamma}
\|C_{\mathcal S\mathcal S'}\|^2
\;<\;
\frac{a_{\min}}{4\sigma^{2\beta}},}
\label{eq:DT-consistency-condition}
\end{equation}
then $(C_{\mathcal S})^{-\beta}$ is structurally consistent.
\end{theorem}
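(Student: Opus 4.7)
The strategy is to mirror the proof of Theorem~\ref{thm:frac-struct-consistency} (the $\beta\in(0,1)$ case) but replace the Stieltjes integral \eqref{eq:stieltjes} with the Dunford--Taylor contour representation \eqref{eq:DT}, which is the correct analytic tool once $\beta$ is allowed to be any nonzero real. Most ingredients are already in place in the excerpt: the contour representation of $\Delta_{-\beta}$ in \eqref{eq:DT-gap}, the Schur--complement identity for $R_{\mathcal{S}}(z)$ in \eqref{eq:Rz}, and the Neumann--series bound \eqref{eq:Rz_bound}. What remains is (a) to specialize the abstract contour constants $\Theta_\Gamma$ and $\mathcal{K}_\Gamma$ defined in \eqref{eq:ThetaGamma}--\eqref{eq:KGamma} to the distance--based expressions \eqref{eq:ThetaGamma-def}--\eqref{eq:KGamma-def}, and (b) to convert the resulting spectral bound on $\Delta_{-\beta}$ into a structural--consistency conclusion using the $\operatorname{Osc}$ machinery of Appendix~A.

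First I would check that hypotheses (i)--(ii) make the principal branch of $z\mapsto z^{-\beta}$ analytic on and inside $\Gamma$, so that \eqref{eq:DT} and the blockwise subtraction \eqref{eq:DT-gap} are valid. The key bookkeeping step is then to upgrade the generic operator--norm bounds on $B(z)^{-1}$ and $D(z)^{-1}$ to equalities: since $C_{\mathcal{S}}$ and $C_{\mathcal{S}'}$ are principal submatrices of the symmetric SPD matrix $C$, they are real symmetric and hence $B(z)=zI-C_{\mathcal{S}}$ and $D(z)=zI-C_{\mathcal{S}'}$ are normal for every $z\in\mathbb{C}$, giving
\begin{equation}
\|B(z)^{-1}\| \;=\; \frac{1}{{\sf dist}\!\bigl(z,\operatorname{spec}(C_{\mathcal{S}})\bigr)}, \qquad \|D(z)^{-1}\| \;=\; \frac{1}{{\sf dist}\!\bigl(z,\operatorname{spec}(C_{\mathcal{S}'})\bigr)}.
\end{equation}
Substituting these identities into \eqref{eq:ThetaGamma}--\eqref{eq:KGamma} yields precisely the distance--based \eqref{eq:ThetaGamma-def}--\eqref{eq:KGamma-def}, and the resolvent separation in hypothesis (ii) guarantees the integrand is well defined on $\Gamma$.

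Next, the assumption $\Theta_\Gamma<1$ makes $I-B(z)^{-1}H(z)$ uniformly invertible on $\Gamma$, the Neumann series converges, and \eqref{eq:Rz_bound} holds pointwise. Integrating along $\Gamma$ in \eqref{eq:DT-basic-bound} and pulling the spectral norm through the contour integral delivers the master bound $\|\Delta_{-\beta}\|\le \mathcal{K}_\Gamma(\beta,C)\,\|C_{\mathcal{S}\mathcal{S}'}\|^2/(1-\Theta_\Gamma)$. To close the argument I would invoke \eqref{eq:Osc-norm}, giving $\operatorname{Osc}(\Delta_{-\beta})\le 2\|\Delta_{-\beta}\|$, together with the structural--consistency threshold analogous to \eqref{eq:thresh}, $\operatorname{Osc}(\Delta_{-\beta})<a_{\min}/(2\sigma^{2\beta})$; this reduces to the sufficient condition $\|\Delta_{-\beta}\|<a_{\min}/(4\sigma^{2\beta})$, which is exactly hypothesis \eqref{eq:DT-consistency-condition}.

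The main obstacle, and where this result is less routine than its $\beta\in(0,1)$ analog, is not the calculation but the \emph{existence and geometry of an admissible contour} $\Gamma$. The four requirements are in tension: shrinking $\Gamma$ toward $\operatorname{spec}(C)$ inflates both $\mathcal{K}_\Gamma$ and $\Theta_\Gamma$ through the inverse--distance factors, while enlarging $\Gamma$ inflates $|z^{-\beta}|$ (especially for $\beta<0$) and the arc length; for $\beta>0$ one must additionally keep $0$ outside $\Gamma$, forcing a keyhole or ellipse construction around the spectrum. Consequently, turning the theorem into a quantitatively useful statement for a specific model requires exhibiting a concrete $\Gamma$ (for instance a circle centered at $(\lambda_{\min}(C)+\lambda_{\max}(C))/2$ with tuned radius) that balances these opposing effects and keeps the left--hand side of \eqref{eq:DT-consistency-condition} small enough to accommodate the latent cross--coupling $\|C_{\mathcal{S}\mathcal{S}'}\|$; the proof itself is then a direct contour--integral extension of the $\beta\in(0,1)$ argument.
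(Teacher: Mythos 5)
Your proposal is correct and follows essentially the same route as the paper's proof: the Dunford--Taylor representation of $\Delta_{-\beta}$, the Schur-complement identity and Neumann bound for $R_{\mathcal S}(z)$, the normality of $zI-C_{\mathcal S}$ and $zI-C_{\mathcal S'}$ to turn the resolvent norms into exact inverse spectral distances, integration along $\Gamma$ to get $\|\Delta_{-\beta}\|\le \mathcal K_\Gamma(\beta,C)\|C_{\mathcal S\mathcal S'}\|^2/(1-\Theta_\Gamma)$, and the $\operatorname{Osc}$ threshold $\|\Delta_{-\beta}\|<a_{\min}/(4\sigma^{2\beta})$ to conclude. Your closing remarks on the tension in choosing $\Gamma$ are a useful observation but not part of the proof, and the paper addresses that separately via the circular-contour corollary.
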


\begin{proof}
Using $(B-H)^{-1}=(I-B^{-1}H)^{-1}B^{-1}$ and the Neumann bound
$\|(I-B^{-1}H)^{-1}\|\le (1-\|B^{-1}H\|)^{-1}$ (valid when $\|B^{-1}H\|<1$), we obtain
\begin{align}
\|R_{\mathcal S}(z)\|
&\le
\frac{\|B(z)^{-1}\|^2\,\|H(z)\|}{1-\|B(z)^{-1}H(z)\|}.
\label{eq:R-bound-pre}
\end{align}
Now,
\[
\|H(z)\|\le \|C_{\mathcal S \mathcal S'}\|^2\,\|D(z)^{-1}\|,
\qquad
\|B(z)^{-1}H(z)\|
\le \|B(z)^{-1}\|\,\|C_{\mathcal S \mathcal S'}\|^2\,\|D(z)^{-1}\|.
\]
Since $\| (zI-C_{\mathcal S})^{-1}\|=1/{\sf dist}(z,\mathrm{spec}(C_{\mathcal S}))$ (and similarly for $C_{\mathcal S'}$),
we have
\[
\|B(z)^{-1}\|=\frac{1}{{\sf dist}(z,\mathrm{spec}(C_{\mathcal S}))},
\qquad
\|D(z)^{-1}\|=\frac{1}{{\sf dist}(z,\mathrm{spec}(C_{\mathcal S'}))}.
\]
Thus,
\[
\|B(z)^{-1}H(z)\|
\le
\frac{\|C_{\mathcal S \mathcal S'}\|^2}{{\sf dist}(z,\mathrm{spec}(C_{\mathcal S}))\,{\sf dist}(z,\mathrm{spec}(C_{\mathcal S'}))}.
\]
Taking the supremum over $z\in\Gamma$ gives exactly $\Theta_\Gamma$ in \eqref{eq:ThetaGamma-def},
and the assumption $\Theta_\Gamma<1$ implies the Neumann bound holds uniformly on~$\Gamma$.
Plugging these estimates into \eqref{eq:R-bound-pre} yields, for all $z\in\Gamma$,
\[
\|R_{\mathcal S}(z)\|
\le
\frac{\|C_{\mathcal S \mathcal S'}\|^2}{1-\Theta_\Gamma}\,
\frac{1}{{\sf dist}(z,\mathrm{spec}(C_{\mathcal S}))^2\,{\sf dist}(z,\mathrm{spec}(C_{\mathcal S'}))}.
\]
Therefore,
\[
\|\Delta_{-\beta}\|
\le
\frac{\|C_{\mathcal S \mathcal S'}\|^2}{1-\Theta_\Gamma}\,
\frac{1}{2\pi}\oint_\Gamma
|z^{-\beta}|\,
\frac{|dz|}{{\sf dist}(z,\mathrm{spec}(C_{\mathcal S}))^2\,{\sf dist}(z,\mathrm{spec}(C_{\mathcal S'}))}.
\]
Recognizing $\mathcal K_\Gamma(\beta,C)$ from \eqref{eq:KGamma-def}, inequality~\eqref{eq:DT-consistency-condition} implies structural consistency.
\end{proof}

Now, since $C\succ0$ and $\operatorname{spec}(C)\subset[\lambda_{\min},\lambda_{\max}]$,
a possible particular choice for the curve $\Gamma$ is a \emph{circle} enclosing the spectrum of $C$, avoiding the negative real
axis and the origin. This offers a more explicit bound for $\mathcal K_\Gamma(\beta,C)$ and
$\Theta_\Gamma$ in terms of $\lambda_{\min}$,
$\lambda_{\max}$, $|\beta|$, and the contour radius.

\begin{corollary}[Circular contour]
\label{cor:K-circle-explicit}
Assume $\mathrm{spec}(C)\subset[m,M]\subset(0,\infty)$.
Fix $\varepsilon\in(0,m)$ and define the circle
\[
\Gamma_\varepsilon := \{z:\ |z-c|=R\},
\qquad
c:=\frac{m+M}{2},
\quad
R:=\frac{M-m}{2}+\varepsilon.
\]
Then for all $z\in\Gamma_\varepsilon$,
\[
{\sf dist}(z,\mathrm{spec}(C_{\mathcal S}))\ge \varepsilon,
\qquad
{\sf dist}(z,\mathrm{spec}(C_{\mathcal S'}))\ge \varepsilon,
\qquad
\min_{z\in\Gamma_\varepsilon}|z|=m-\varepsilon,
\quad
\max_{z\in\Gamma_\varepsilon}|z|=M+\varepsilon.
\]
Hence
\begin{equation}
\boxed{
\Theta_{\Gamma_\varepsilon}\;\le\;\frac{\|C_{\mathcal S\mathcal S'}\|^2}{\varepsilon^2}.}
\label{eq:Theta-circle}
\end{equation}
Moreover, the contour constant admits the explicit bound
\begin{equation}
\boxed{
\mathcal K_{\Gamma_\varepsilon}(\beta,C)
\;\le\;
R\,\varepsilon^{-3}\times
\begin{cases}
(m-\varepsilon)^{-\beta}, & \beta\ge 0,\\[0.2em]
(M+\varepsilon)^{-\beta}, & \beta<0.
\end{cases}}
\label{eq:K-circle}
\end{equation}
Therefore, if $\Theta_{\Gamma_\varepsilon}<1$ and
\[
\frac{\mathcal K_{\Gamma_\varepsilon}(\beta,C)}{1-\Theta_{\Gamma_\varepsilon}}\,
\|C_{\mathcal S\mathcal S'}\|^2
<\frac{a_{\min}}{4\sigma^{2\beta}},
\]
then $(C_{\mathcal S})^{-\beta}$ is structurally consistent.
\end{corollary}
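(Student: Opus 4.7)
The plan is to verify the bounds on $\Theta_{\Gamma_\varepsilon}$ and $\mathcal K_{\Gamma_\varepsilon}(\beta,C)$ claimed in the corollary by direct geometric estimation on the circle $\Gamma_\varepsilon$, and then invoke Theorem~\ref{thm:DT-struct-consistency} to obtain structural consistency. The result is essentially a specialization: once each factor in the definitions \eqref{eq:ThetaGamma-def} and \eqref{eq:KGamma-def} is bounded by an explicit function of $m$, $M$, $\varepsilon$, the statement follows.

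The first step is to control the resolvent distances ${\sf dist}(z,\mathrm{spec}(C_{\mathcal S}))$ and ${\sf dist}(z,\mathrm{spec}(C_{\mathcal S'}))$ for $z\in\Gamma_\varepsilon$. I would invoke Cauchy interlacing for principal submatrices of a symmetric matrix to conclude that $\mathrm{spec}(C_{\mathcal S}),\mathrm{spec}(C_{\mathcal S'})\subset[m,M]$. Then, for any $\lambda\in[m,M]$ and any $z\in\Gamma_\varepsilon$ (so $|z-c|=R$), the reverse triangle inequality gives $|z-\lambda|\ge R-|c-\lambda|\ge R-(M-m)/2=\varepsilon$, since $|c-\lambda|$ is maximized by the endpoints of $[m,M]$. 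Taking minima over $\lambda$ in each spectrum yields the two distance lower bounds in the corollary. Plugging these into \eqref{eq:ThetaGamma-def} gives $\Theta_{\Gamma_\varepsilon}\le \|C_{\mathcal S\mathcal S'}\|^2/\varepsilon^2$, which is \eqref{eq:Theta-circle}.

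The second step is to estimate $|z^{-\beta}|$ on $\Gamma_\varepsilon$. Since $c>0$ and $R=(M-m)/2+\varepsilon$, the circle is contained in the right half-plane (indeed $\min_{z\in\Gamma_\varepsilon}|z|=|c|-R=m-\varepsilon>0$ and $\max_{z\in\Gamma_\varepsilon}|z|=|c|+R=M+\varepsilon$), so $z\mapsto z^{-\beta}$ may be defined via the principal branch and $|z^{-\beta}|=|z|^{-\beta}$ for real $\beta$. Monotonicity in $|z|$ of $t\mapsto t^{-\beta}$ then gives $\sup_{z\in\Gamma_\varepsilon}|z^{-\beta}|=(m-\varepsilon)^{-\beta}$ when $\beta\ge 0$, and $(M+\varepsilon)^{-\beta}$ when $\beta<0$. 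Combining this with the distance bounds $\ge\varepsilon$ in the denominator of \eqref{eq:KGamma-def}, and using that $\Gamma_\varepsilon$ has length $2\pi R$, yields $\mathcal K_{\Gamma_\varepsilon}(\beta,C)\le R\,\varepsilon^{-3}\cdot\sup_{z\in\Gamma_\varepsilon}|z^{-\beta}|$, which is precisely \eqref{eq:K-circle}. Substituting both bounds into \eqref{eq:DT-consistency-condition} closes the argument.

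The step demanding most care is the geometric verification that the principal-branch $z^{-\beta}$ is analytic on and inside $\Gamma_\varepsilon$, and that the hypotheses of Theorem~\ref{thm:DT-struct-consistency} (namely, $0\notin\mathrm{int}(\Gamma_\varepsilon)$ together with strict separation from the spectra of $C_{\mathcal S}$ and $C_{\mathcal S'}$) hold simultaneously. Both reduce to the single condition $\varepsilon\in(0,m)$: it guarantees $m-\varepsilon>0$ so the origin and the negative axis are excluded, and it ensures the uniform separation $\ge\varepsilon$ from the submatrix spectra via Cauchy interlacing. No further smallness of $\varepsilon$ is needed for the bounds themselves, though practical tightening of the sufficient condition would involve optimizing $\varepsilon$ against the growth of $\varepsilon^{-3}$ versus the shrinkage of $\|C_{\mathcal S\mathcal S'}\|^2/\varepsilon^2$ in the stability constraint $\Theta_{\Gamma_\varepsilon}<1$.
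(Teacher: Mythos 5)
Your proposal is correct and follows essentially the same route as the paper's proof: Cauchy interlacing to confine $\mathrm{spec}(C_{\mathcal S})$ and $\mathrm{spec}(C_{\mathcal S'})$ to $[m,M]$, the geometric distance bound $\ge\varepsilon$ from the circle to that interval (which you make explicit via the reverse triangle inequality, where the paper simply asserts it ``by construction''), the contour-length factor $2\pi R$ together with monotonicity of $t\mapsto t^{-\beta}$ on $[m-\varepsilon,M+\varepsilon]$ to bound $\mathcal K_{\Gamma_\varepsilon}$, and finally substitution into the sufficient condition of Theorem~\ref{thm:DT-struct-consistency}. No gaps; your added care about the principal branch and the placement of $\Gamma_\varepsilon$ in the right half-plane is a welcome, if minor, sharpening of the paper's argument.
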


\begin{proof}
Since $\mathrm{spec}(C)\subset[m,M]$ and the eigenvalues of any principal submatrix interlace, then
$\mathrm{spec}(C_{\mathcal S})\subset[m,M]$ and $\mathrm{spec}(C_{\mathcal S'})\subset[m,M]$.
By construction of $\Gamma_\varepsilon$, the distance from the circle to the interval $[m,M]$ is at least $\varepsilon$,
hence ${\sf dist}(z,\mathrm{spec}(C_{\mathcal S}))\ge\varepsilon$ and ${\sf dist}(z,\mathrm{spec}(C_{\mathcal S'}))\ge\varepsilon$ for all $z\in\Gamma_\varepsilon$.
This implies \eqref{eq:Theta-circle}.

Next, the definition \eqref{eq:KGamma-def} and the bounds above give
\begin{equation}
\mathcal K_{\Gamma_\varepsilon}(\beta,C)
\le
\frac{1}{2\pi}\oint_{\Gamma_\varepsilon}
\sup_{z\in\Gamma_\varepsilon}|z^{-\beta}|\,
\varepsilon^{-3}\,|dz|
=\frac{|\Gamma_\varepsilon|}{2\pi}\,\varepsilon^{-3}\,\sup_{z\in\Gamma_\varepsilon}|z^{-\beta}|
=
R\,\varepsilon^{-3}\,\sup_{z\in\Gamma_\varepsilon}|z^{-\beta}|.
\end{equation}
Finally, since $|z|$ ranges between $m-\varepsilon$ and $M+\varepsilon$ on the circle,
$\sup |z^{-\beta}|$ equals $(m-\varepsilon)^{-\beta}$ if $\beta\ge 0$ and $(M+\varepsilon)^{-\beta}$ if $\beta<0$,
proving \eqref{eq:K-circle}.
\end{proof}

From the Matérn model, we have $C=\sigma^2(I-\overline{A})^{-\alpha}$ with $\overline{A}=\overline{A}^\top$, $\rho(\overline{A})<1$ and $\overline{A}=I-\kappa^2 D-L$. Namely, recall that the off-diagonal entries of $A$ coincide with the ones of $\overline{A}$.
Then, $\mathrm{spec}(C)\subset[m,M]$ with
\begin{equation}
m=\sigma^2(1-\rho(\overline{A}))^{-\alpha},
\qquad
M=\sigma^2(1-\lambda_{\min}(\overline{A}))^{-\alpha}.
\label{eq:mM}
\end{equation}

Further, define the covariance-to-graph cross--block amplification factor
\begin{equation}
L(\overline{A})
:=
\sigma^2\,\alpha\,(1-\rho(\overline{A}))^{-\alpha-1},
\qquad\text{so that}\qquad
\|C_{\mathcal S \mathcal S'}\|\le L(\overline{A})\,\|A_{\mathcal S\mathcal S'}\|.
\label{eq:L-def}
\end{equation}
where the latter inequality holds for $\alpha>0$. Hence the sufficient condition in Corollary~\ref{cor:K-circle-explicit} (with the circle $\Gamma_\varepsilon$)
can be expressed explicitly in terms of $\overline{A}$ and $\|A_{\mathcal S \mathcal S'}\|$ by substituting the above identities~\eqref{eq:mM} and the bound \eqref{eq:L-def} into \eqref{eq:K-circle} and~\eqref{eq:Theta-circle}, respectively.

\begin{theorem}[Explicit small cross--block condition in terms of $\overline{A}$ and $\beta>0$ (circular contour)]
\label{cor:DT-minsc}
Define also the contour-dependent magnitude factor
\begin{equation}
Z_{\varepsilon}(\beta;\overline{A})
:=
\sup_{z\in\Gamma_\varepsilon}|z^{-\beta}|
=
\begin{cases}
(m-\varepsilon)^{-\beta}, & \beta\ge 0,\\[0.2em]
(M+\varepsilon)^{-\beta}, & \beta<0,
\end{cases}
\label{eq:Z-eps}
\end{equation}
and set
\begin{equation}
K_{\varepsilon}(\beta;\overline{A})
:=
R\,\varepsilon^{-3}\,Z_{\varepsilon}(\beta;\overline{A}).
\label{eq:Keps-def}
\end{equation}

Now, define the \emph{stability} and \emph{control} parameters
\begin{align}
s_\varepsilon(\overline{A})
&:=
\frac{\varepsilon}{L(\overline{A})},
\label{eq:s-eps}
\\[0.4em]
c_\varepsilon(\overline{A},\beta)
&:=
s_\varepsilon(\overline{A})
\sqrt{
\frac{a_{\min}}{4\varepsilon^2 \sigma^{2\beta} K_{\varepsilon}(\beta;\overline{A}) + a_{\min}}
}.
\label{eq:c-eps}
\end{align}
If
\begin{equation}
\boxed{
\|A_{\mathcal S\mathcal S'}\|
<
g_{\varepsilon}(\overline{A},\beta)
:=
\min\left\{s_\varepsilon(\overline{A},\beta),\ c_\varepsilon(\overline{A},\beta)\right\},
}
\label{eq:g-eps}
\end{equation}
then $(C_{\mathcal S})^{-\beta}$ is structurally consistent.
\end{theorem}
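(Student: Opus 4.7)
The plan is to specialize Corollary~\ref{cor:K-circle-explicit} to the Matérn setting, converting the two sufficient conditions stated there (Neumann stability $\Theta_{\Gamma_\varepsilon}<1$ and the consistency inequality $\mathcal K_{\Gamma_\varepsilon}(\beta,C)\|C_{\mathcal S\mathcal S'}\|^2/(1-\Theta_{\Gamma_\varepsilon})<a_{\min}/(4\sigma^{2\beta})$) into a single explicit upper bound on $\|A_{\mathcal S\mathcal S'}\|$. The three ingredients I would carry over are: the spectral localization $\mathrm{spec}(C)\subset[m,M]$ with $m,M$ given in~\eqref{eq:mM}; the contour bound $\mathcal K_{\Gamma_\varepsilon}(\beta,C)\le K_\varepsilon(\beta;\overline{A})$ from~\eqref{eq:Keps-def}; and the cross-block amplification $\|C_{\mathcal S\mathcal S'}\|\le L(\overline{A})\|A_{\mathcal S\mathcal S'}\|$ from~\eqref{eq:L-def}, which is valid since $\beta>0$ forces $\alpha=1/\beta>0$ and so Lemma~\ref{lem:spec-relation} applies.

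For the stability step, plug the cross-block bound into $\Theta_{\Gamma_\varepsilon}\le\|C_{\mathcal S\mathcal S'}\|^2/\varepsilon^2$ from~\eqref{eq:Theta-circle}. This yields $\Theta_{\Gamma_\varepsilon}\le L(\overline{A})^2\|A_{\mathcal S\mathcal S'}\|^2/\varepsilon^2$, so $\Theta_{\Gamma_\varepsilon}<1$ is guaranteed as soon as $\|A_{\mathcal S\mathcal S'}\|<\varepsilon/L(\overline{A})=s_\varepsilon(\overline{A})$, recovering the first half of the claimed bound. For the consistency step, substitute the three ingredients into the sufficient condition and write $u:=\|A_{\mathcal S\mathcal S'}\|^2$, giving the rational inequality
\[
\frac{K_\varepsilon(\beta;\overline{A})\,L(\overline{A})^2\,u}{1-u/s_\varepsilon(\overline{A})^2}
\;<\;
\frac{a_{\min}}{4\sigma^{2\beta}}.
\]
Cross-multiplying (legitimate because the stability step has already secured $u<s_\varepsilon^2$, i.e.\ positivity of the denominator), collecting the $u$-terms, and exploiting the identity $L(\overline{A})^2=\varepsilon^2/s_\varepsilon(\overline{A})^2$ gives the closed-form solution
\[
u\;<\;\frac{a_{\min}\,s_\varepsilon(\overline{A})^2}{4\sigma^{2\beta}\,K_\varepsilon(\beta;\overline{A})\,\varepsilon^2+a_{\min}},
\]
whose square root is precisely $c_\varepsilon(\overline{A},\beta)$ as defined in~\eqref{eq:c-eps}.

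Combining the two bounds yields $\|A_{\mathcal S\mathcal S'}\|<\min\{s_\varepsilon,c_\varepsilon\}=g_\varepsilon(\overline{A},\beta)$, which simultaneously enforces Neumann stability and the consistency inequality of Corollary~\ref{cor:K-circle-explicit}, completing the proof. The main obstacle I anticipate is bookkeeping rather than conceptual: keeping track of the direction of the inequalities after cross-multiplication (the stability bound must be assumed \emph{first} so that $1-\Theta_{\Gamma_\varepsilon}>0$), and verifying that the algebraic simplification using $L=\varepsilon/s_\varepsilon$ genuinely produces the compact form in~\eqref{eq:c-eps} with all $\alpha$-dependence absorbed into $L(\overline{A})$ and $K_\varepsilon(\beta;\overline{A})$. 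A secondary subtlety worth checking is the tightness of the Neumann majorant $1-u/s_\varepsilon^2$ for $1-\Theta_{\Gamma_\varepsilon}$: since we only need a sufficient condition, using this conservative lower bound on $1-\Theta_{\Gamma_\varepsilon}$ is safe and produces the final expression without loss.
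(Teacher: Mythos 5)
Your proposal is correct and follows essentially the same route as the paper: invoke Corollary~\ref{cor:K-circle-explicit} for the circular contour, use $\|C_{\mathcal S\mathcal S'}\|\le L(\overline{A})\|A_{\mathcal S\mathcal S'}\|$ to convert the stability condition into $\|A_{\mathcal S\mathcal S'}\|<s_\varepsilon$, and then solve the rational inequality for the squared cross-block norm to obtain $c_\varepsilon$. The only cosmetic difference is that you substitute the amplification bound into the denominator before solving (working in $u=\|A_{\mathcal S\mathcal S'}\|^2$) while the paper solves in $y=\|C_{\mathcal S\mathcal S'}\|^2$ and converts at the end; the identity $L(\overline{A})^2=\varepsilon^2/s_\varepsilon^2$ makes the two computations land on the same threshold.
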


\begin{proof}
From Corollary~\ref{cor:K-circle-explicit}, for the circle $\Gamma_\varepsilon$ we have
\[
\Theta_{\Gamma_\varepsilon}\le \frac{\|C_{\mathcal S \mathcal S'}\|^2}{\varepsilon^2},
\qquad
\mathcal K_{\Gamma_\varepsilon}(\beta,C)\le R\,\varepsilon^{-3}\,\sup_{z\in\Gamma_\varepsilon}|z^{-\beta}|
=K_{\varepsilon}(\beta;\overline{A}).
\]
Using $\|C_{\mathcal S \mathcal S'}\|\le L(\overline{A})\|A_{\mathcal S\mathcal S'}\|$ from~\eqref{eq:L-def}, the stability condition
$\Theta_{\Gamma_\varepsilon}<1$ is ensured by
\[
\frac{L(\overline{A})^2\|A_{\mathcal S\mathcal S'}\|^2}{\varepsilon^2}<1
\quad\Longleftrightarrow\quad
\|A_{\mathcal S\mathcal S'}\|<\frac{\varepsilon}{L(\overline{A})}
=:s_\varepsilon(\overline{A}, \beta).
\]
Under this condition,
Theorem~\ref{thm:DT-struct-consistency} (in its contour form \eqref{eq:DT-consistency-condition}) yields the sufficient requirement for structural consistency
\[
\frac{\mathcal K_{\Gamma_\varepsilon}(\beta,C)}{1-\Theta_{\Gamma_\varepsilon}}\,
\|C_{\mathcal S \mathcal S'}\|^2
<\frac{a_{\min}}{4\sigma^{2\beta}}.
\]
Using the bounds above, it suffices that
\[
\frac{K_{\varepsilon}(\beta;\overline{A})}{1-\frac{\|C_{\mathcal S\mathcal S'}\|^2}{\varepsilon^2}}\,\|C_{\mathcal S \mathcal S'}\|^2
<
\frac{a_{\min}}{4\sigma^{2\beta}}.
\]
Temporarily, let $y:=\|C_{\mathcal S \mathcal S'}\|^2$. Rearranging gives
\[
K_{\varepsilon}(\beta;\overline{A})\,y
<
\frac{a_{\min}}{4\sigma^{2\beta}}\Big(1-\frac{y}{\varepsilon^2}\Big)
\quad\Longleftrightarrow\quad
y\Big(K_{\varepsilon}(\beta;\overline{A})+\frac{a_{\min}}{4\varepsilon^{2}\sigma^{2\beta}}\Big)
<
\frac{a_{\min}}{4\sigma^{2\beta}}.
\]
Hence
\[
\|C_{\mathcal{S} \mathcal{S}'}\|^2
<
\frac{a_{\min}/4\sigma^{2\beta}}{K_{\varepsilon}(\beta;\overline{A})+\frac{a_{\min}}{4\varepsilon^{2}\sigma^{2\beta}}}.
\]
Finally, since $\|C_{\mathcal S \mathcal S'}\|\le L(\overline{A})\|A_{\mathcal S\mathcal S'}\|$, we obtain the sufficient condition
\[
\|A_{\mathcal S\mathcal S'}\|
<
\frac{\varepsilon}{L(\overline{A})}
\sqrt{
\frac{a_{\min}}{4 \varepsilon^2 \sigma^{2\beta} K_{\varepsilon}(\beta;\overline{A}) + a_{\min}}
}
=
c_\varepsilon(\overline{A},\beta),
\]
and combining with the stability bound yields \eqref{eq:g-eps}.
\end{proof}

\paragraph{On the small cross--block condition in the general--$\beta$ setting.}
The structural consistency guarantees obtained above rely on a spectral
smallness condition on the cross--block interaction matrix
$A_{\mathcal S\mathcal S'}$, which appears through the factor
$\|A_{\mathcal S\mathcal S'}\|^2$ in the perturbation bound.  In the
general Dunford--Taylor formulation, all dependence on the exponent
$\beta\in\mathbb{R}$ and on the spectral geometry of the observed
covariance $C_{\mathcal S}$ is captured by the contour--dependent constant
$\mathcal K_\Gamma(\beta,C)$ and the stability factor $(1-\Theta_\Gamma)^{-1}$.
Importantly, these quantities depend only on the spectrum of the observed
subsystem and on the choice of contour $\Gamma$, and are completely
independent of the latent--to--observed coupling.

At a conceptual level, the condition
\[
\mathcal K_\Gamma(\beta,C)\,
\frac{\|A_{\mathcal S\mathcal S'}\|^2}{1-\Theta_\Gamma}
\;\ll\;
a_{\min}
\]
requires that latent nodes do not exert an excessively strong
\emph{coherent} influence on the observed subsystem.  This requirement is
intrinsic rather than technical: if a small number of unobserved nodes
induces a near--rank--one or otherwise dominant perturbation of the
observed covariance, then no estimator based solely on marginal
second--order statistics can reliably disentangle direct interactions
from mediated ones.  The quadratic dependence on
$\|A_{\mathcal S\mathcal S'}\|$ reflects the fact that latent effects enter
the observed precision only through second--order Schur complement terms.

Crucially, the general--$\beta$ extension does not strengthen this
assumption relative to the fractional case $\beta\in(0,1)$.  Rather, it
makes explicit how the allowable magnitude of latent coupling trades off
with the spectral scale of $C$ and the chosen analytic contour through
$\mathcal K_\Gamma(\beta,C)$.  In diffusive regimes---such as those arising
from Matérn--type operators with $\rho(A)$ bounded away from $1$---these
constants remain moderate, and the condition allows for many weak latent
connections whose aggregate influence is ``spread out'' rather than
concentrated along a single dominant direction.

Conversely, when the condition fails, this corresponds precisely to
pathological regimes in which latent confounding dominates observable
dependencies, and identifiability from the marginal covariance
$C_{\mathcal S}$ becomes information--theoretically impossible.  In this
sense, the small cross--block assumption delineates the sharp boundary
between regimes where structural recovery is feasible and those where it
is fundamentally obstructed by latent confounding.

\section{Additional Experiments}

The core architecture for seizure detection is detailed in Fig.~\ref{fig:ML_model} (a similar CNN-based design was employed for churn prediction).
\begin{figure}[t]
	\centering
	\includegraphics[scale=0.45]{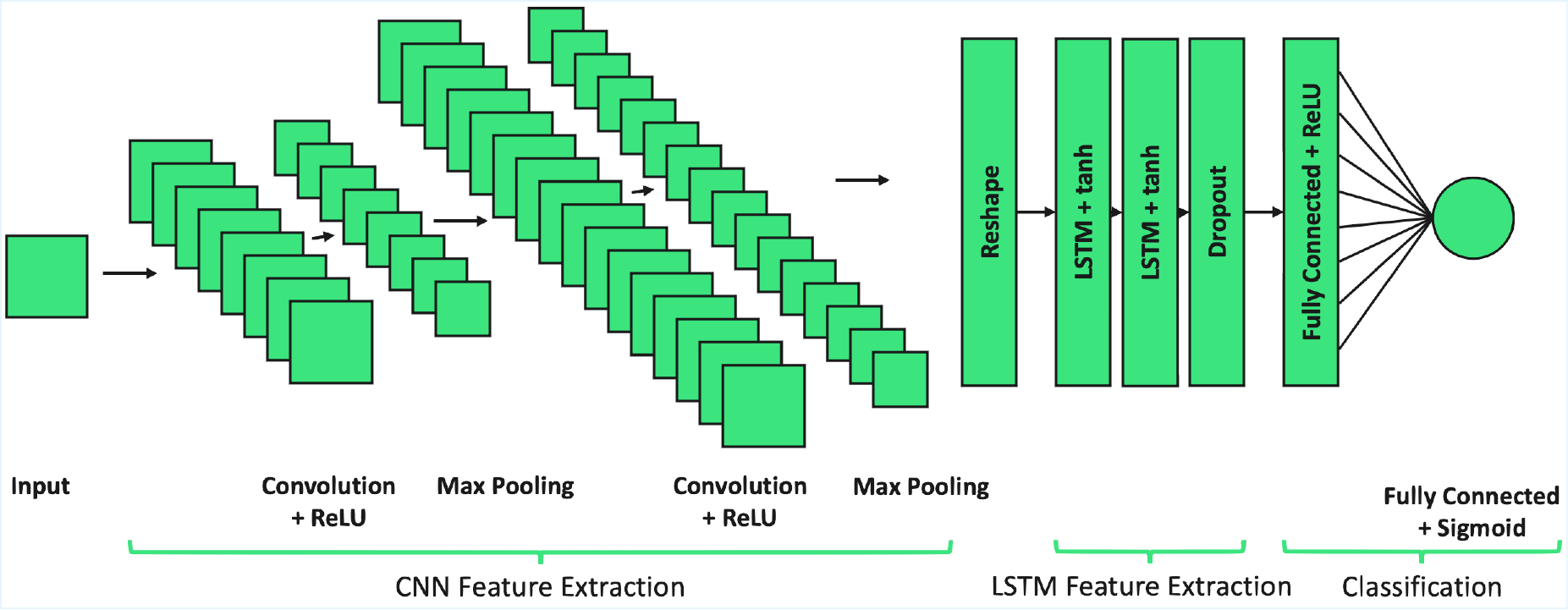}
	\caption{The main architecture ML model used.}
	\label{fig:ML_model}
\end{figure}

In contrast, Fig.~\ref{fig:features_across_pat} demonstrates that the CNN+LSTM hybrid achieves consistently high performance across all patients when utilizing our engineered structure-informed features. To further isolate the impact of these features, we evaluated standalone versions of the CNN (Fig.~\ref{fig:features_across_pat2}) and LSTM (Fig.~\ref{fig:features_across_pat3}). In both cases, the structure-informed features ensure stable, high-accuracy results regardless of the specific architecture. These findings underscore the significant performance boost provided by our feature engineering over raw data.
\begin{figure}[t]
	\centering
	\includegraphics[scale=0.7]{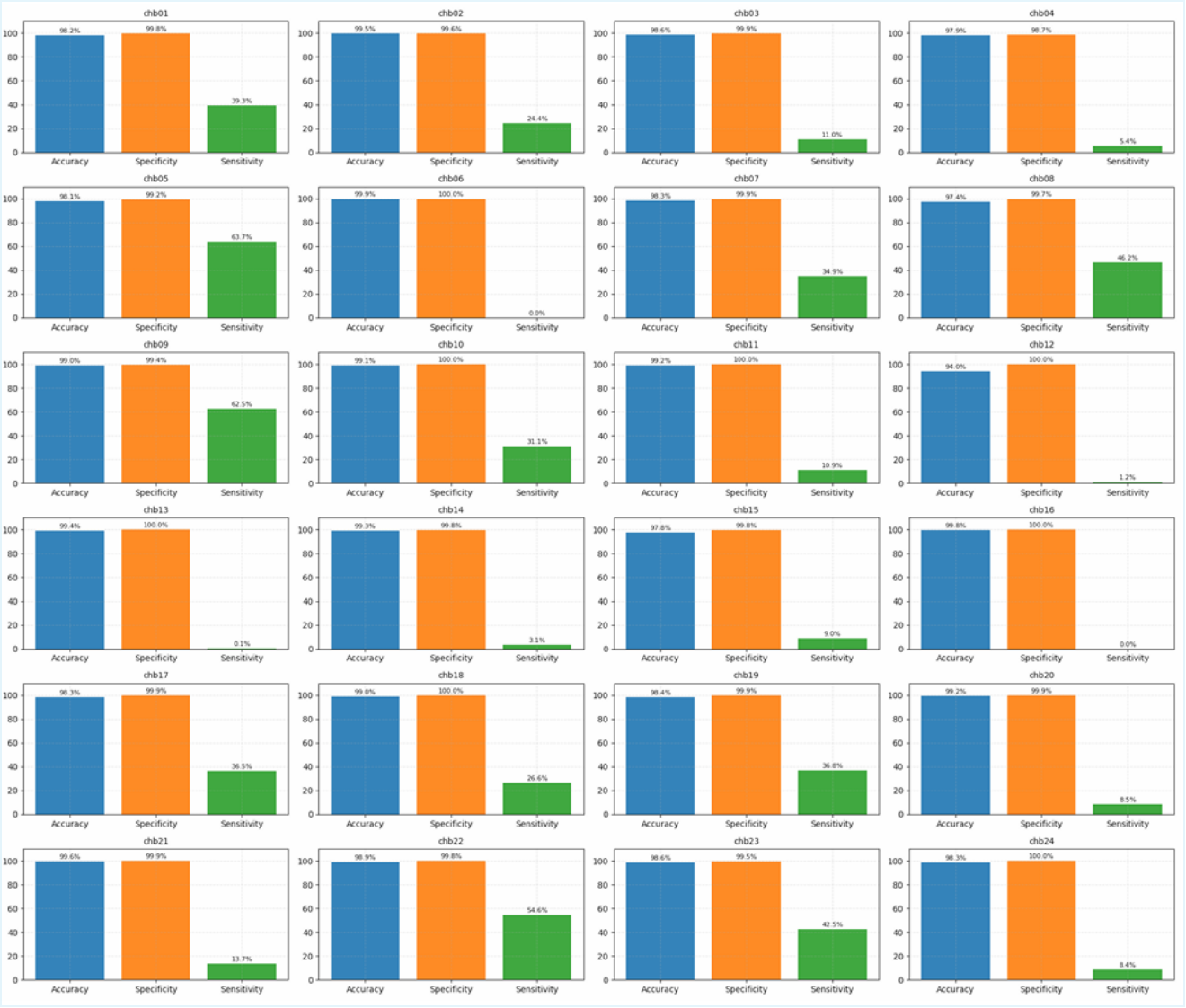}
	\caption{Performance of the core architecture (CNN+LSTM) trained on raw data for distinct patients.}
	\label{fig:raw_across_pat}
\end{figure}

\begin{figure}[t]
	\centering
	\includegraphics[scale=0.6]{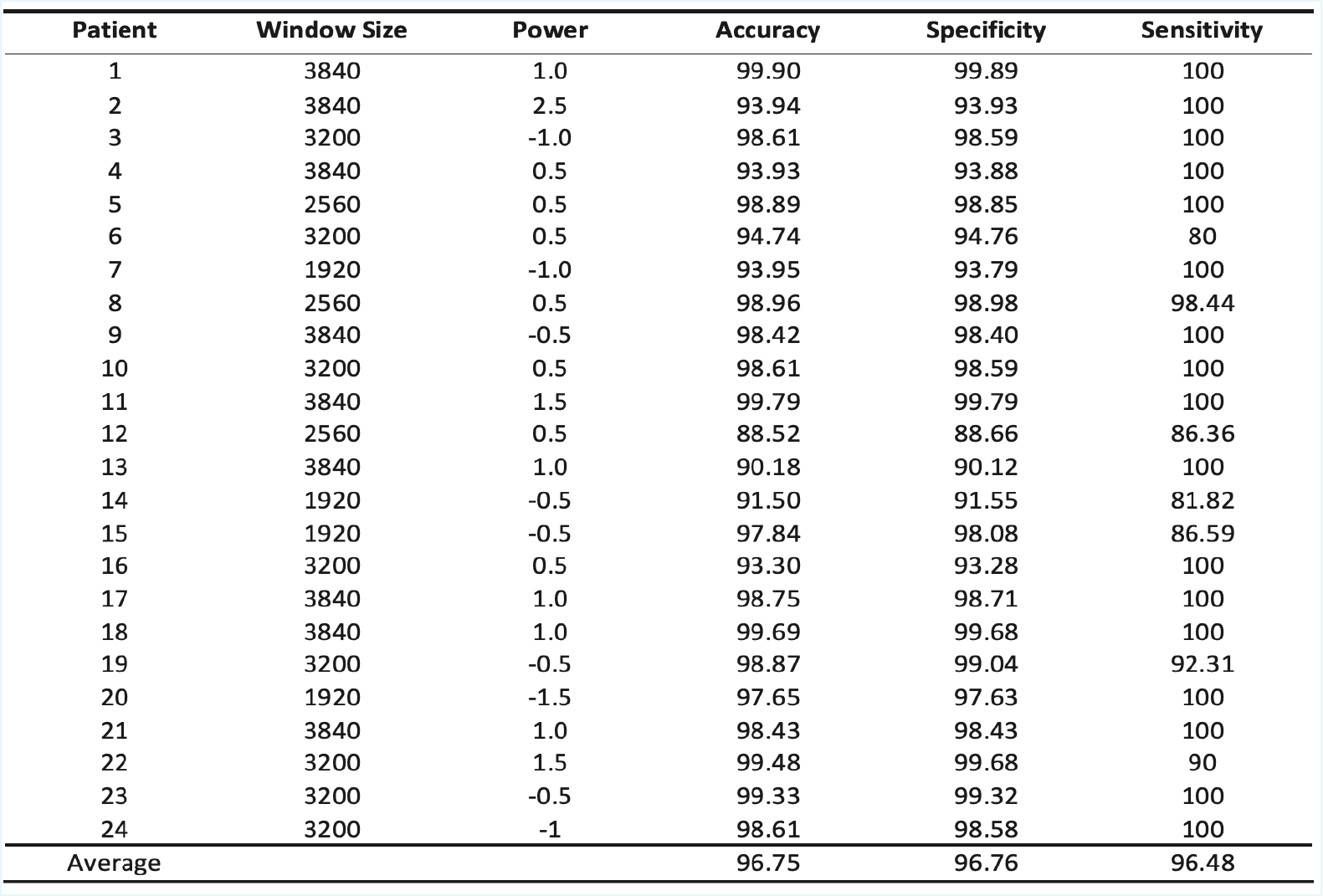}
	\caption{Performance of the core architecture (CNN+LSTM) trained with our features across distinct patients.}
	\label{fig:features_across_pat}
\end{figure}

\begin{figure}[t]
	\centering
	\includegraphics[scale=0.6]{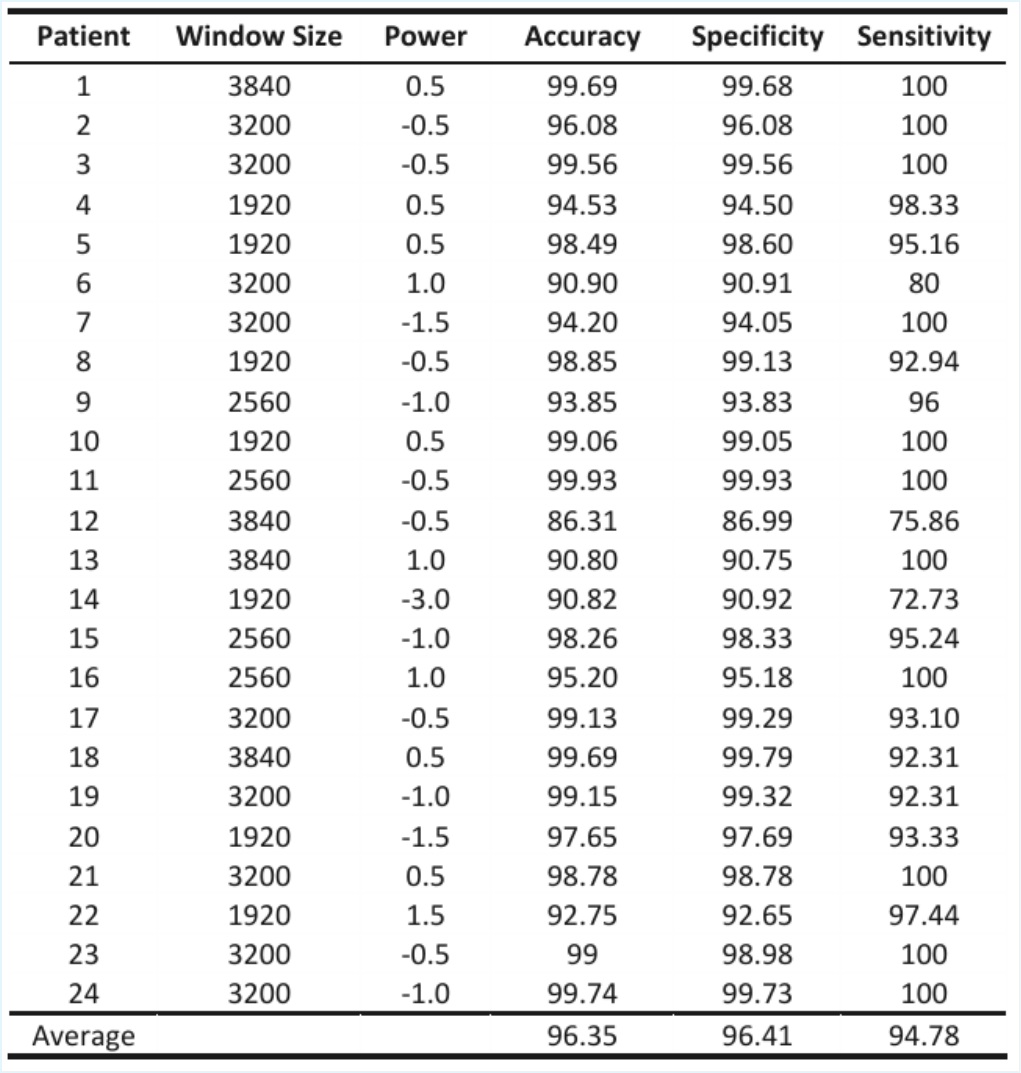}
	\caption{Performance for a stand alone CNN trained with our features across distinct patients.}
	\label{fig:features_across_pat2}
\end{figure}

\begin{figure}[t]
	\centering
	\includegraphics[scale=0.6]{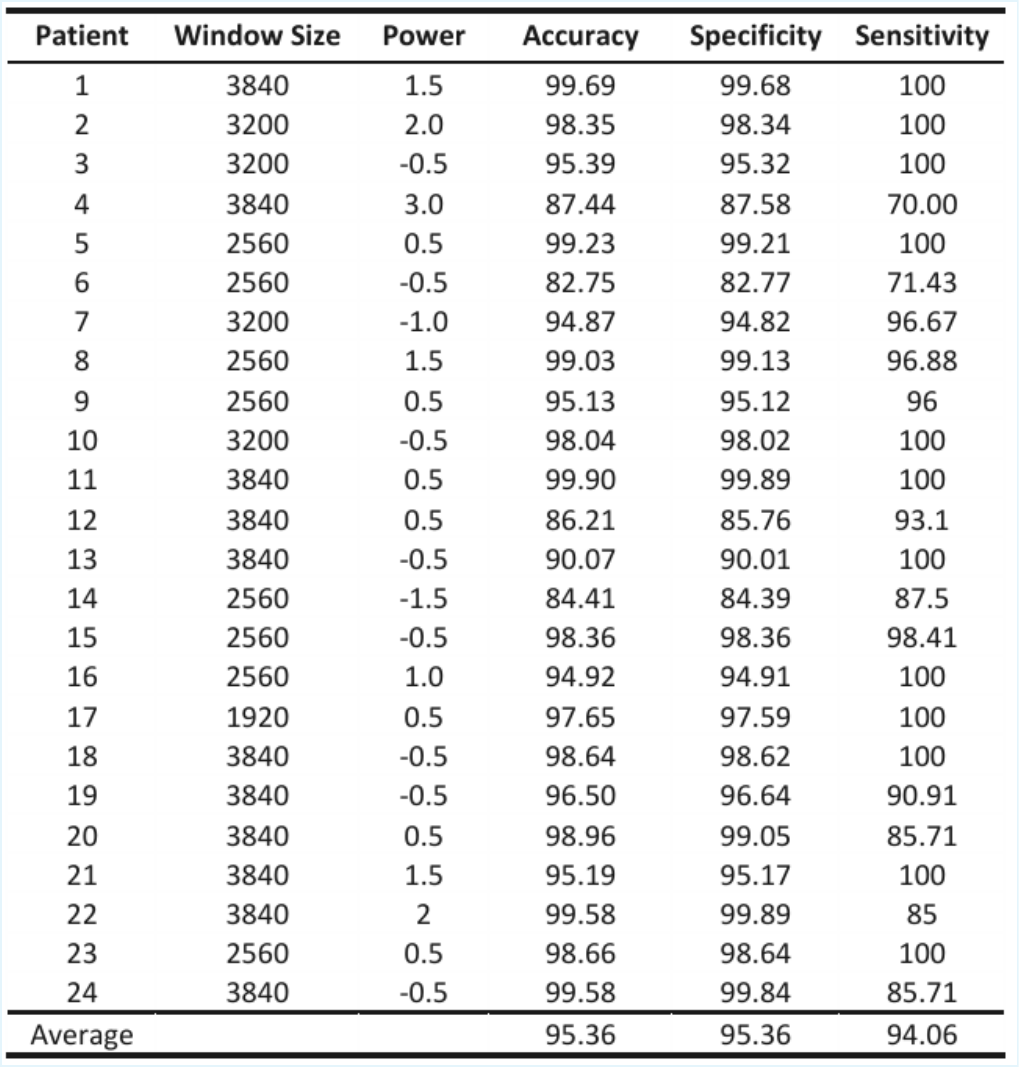}
	\caption{Performance for an LSTM trained with our features across distinct patients.}
	\label{fig:features_across_pat3}
\end{figure}

Finally, Fig.~\ref{fig:learned_exponents} presents the distribution of the learned feature representations (optimum power) across the patient cohort. The absence of a universal optimum exponent supports the hypothesis that individual brain activity is governed by unique dynamical laws, necessitating personalized structural estimators. Consequently, an adaptive approach—such as the one proposed—for selecting the optimum feature representation is critical for robust performance in this scenario.
\begin{figure}[t]
	\centering
	\includegraphics[scale=0.5]{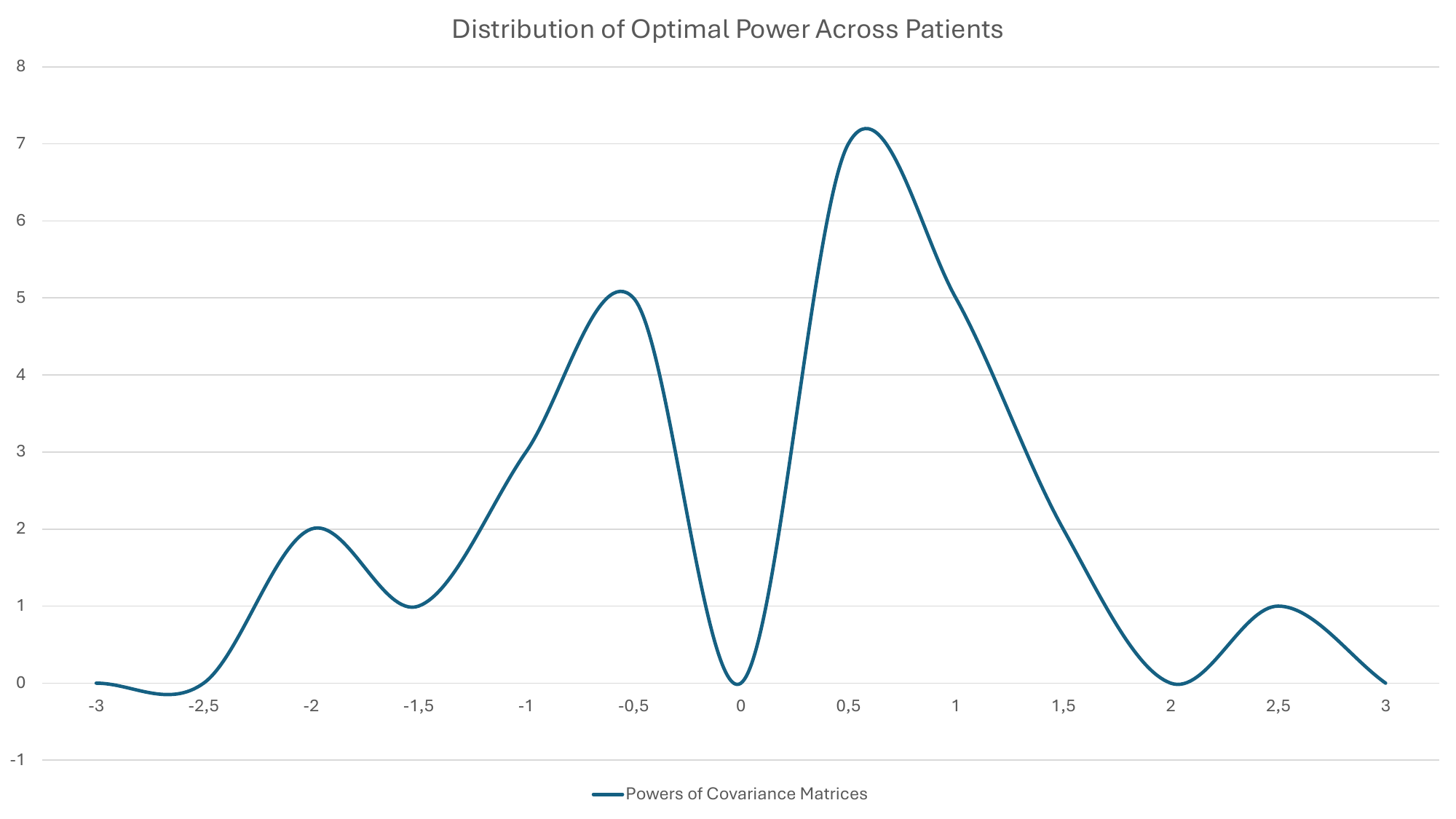}
	\caption{Distribution of the learned exponent across patients.}
	\label{fig:learned_exponents}
\end{figure}

\small
\clearpage
\bibliographystyle{IEEEtran}
\bibliography{IEEEabrv,biblio}

\begin{thebibliography}{10}
\providecommand{\url}[1]{#1}
\csname url@samestyle\endcsname
\providecommand{\newblock}{\relax}
\providecommand{\bibinfo}[2]{#2}
\providecommand{\BIBentrySTDinterwordspacing}{\spaceskip=0pt\relax}
\providecommand{\BIBentryALTinterwordstretchfactor}{4}
\providecommand{\BIBentryALTinterwordspacing}{\spaceskip=\fontdimen2\font plus
\BIBentryALTinterwordstretchfactor\fontdimen3\font minus
  \fontdimen4\font\relax}
\providecommand{\BIBforeignlanguage}[2]{{%
\expandafter\ifx\csname l@#1\endcsname\relax
\typeout{** WARNING: IEEEtran.bst: No hyphenation pattern has been}%
\typeout{** loaded for the language `#1'. Using the pattern for}%
\typeout{** the default language instead.}%
\else
\language=\csname l@#1\endcsname
\fi
#2}}
\providecommand{\BIBdecl}{\relax}
\BIBdecl

\bibitem{emergence_phenomena}
\BIBentryALTinterwordspacing
A.~K. Rizi, ``What is emergence, after all?'' 2025. [Online]. Available:
  \url{https://arxiv.org/abs/2507.04951}
\BIBentrySTDinterwordspacing

\bibitem{Seizure}
\BIBentryALTinterwordspacing
R.~C. Scott, ``Brains, complex systems and therapeutic opportunities in
  epilepsy,'' \emph{Seizure}, vol.~90, pp. 155--159, 2021, lASSE – 15th
  Anniversary of Latin American Convergent Achievements in Epileptology.
  [Online]. Available:
  \url{https://www.sciencedirect.com/science/article/pii/S1059131121000340}
\BIBentrySTDinterwordspacing

\bibitem{pandemics_complex_systems}
O.~Artime and M.~De~Domenico, ``From the origin of life to pandemics: Emergent
  phenomena in complex systems,'' \emph{Philosophical Transactions of the Royal
  Society A: Mathematical, Physical and Engineering Sciences}, vol. 380, 05
  2022.

\bibitem{Vesp_Complex}
\BIBentryALTinterwordspacing
R.~Pastor-Satorras, C.~Castellano, P.~Van~Mieghem, and A.~Vespignani,
  ``Epidemic processes in complex networks,'' \emph{Rev. Mod. Phys.}, vol.~87,
  pp. 925--979, Aug 2015. [Online]. Available:
  \url{https://link.aps.org/doi/10.1103/RevModPhys.87.925}
\BIBentrySTDinterwordspacing

\bibitem{augustoasilo}
A.~Santos, J.~M.~F. Moura, and J.~Xavier, ``Sufficient condition for survival
  of the fittest in a bi-virus epidemics,'' in \emph{2015 49th Asilomar
  Conference on Signals, Systems and Computers}, Nov 2015, pp. 1323--1327.

\bibitem{ogura}
\BIBentryALTinterwordspacing
M.~Ogura and V.~M. Preciado, ``Epidemic processes over adaptive state-dependent
  networks,'' \emph{Phys. Rev. E}, vol.~93, p. 062316, Jun 2016. [Online].
  Available: \url{http://link.aps.org/doi/10.1103/PhysRevE.93.062316}
\BIBentrySTDinterwordspacing

\bibitem{Market_Crashes}
\BIBentryALTinterwordspacing
D.~Sornette, ``Critical market crashes,'' \emph{Physics Reports}, vol. 378,
  no.~1, pp. 1--98, 2003. [Online]. Available:
  \url{https://www.sciencedirect.com/science/article/pii/S0370157302006348}
\BIBentrySTDinterwordspacing

\bibitem{Churn_Emergence}
S.~L. Vargo, L.~Peters, H.~Kjellberg, K.~Koskela-Huotari, S.~Nenonen,
  F.~Polese, D.~Sarno, and C.~Vaughan, ``Emergence in marketing: an
  institutional and ecosystem framework,'' \emph{Journal of the Academy of
  Marketing Science}, vol.~51, no.~1, pp. 2--22, 2023.

\bibitem{emergent_behavior_ML}
\BIBentryALTinterwordspacing
S.~S. Dahia and C.~Szabo, ``Detecting emergent behavior in complex systems: A
  machine learning approach,'' in \emph{Proceedings of the 38th ACM SIGSIM
  Conference on Principles of Advanced Discrete Simulation}, ser. SIGSIM-PADS
  '24.\hskip 1em plus 0.5em minus 0.4em\relax New York, NY, USA: Association
  for Computing Machinery, 2024, p. 81–87. [Online]. Available:
  \url{https://doi.org/10.1145/3615979.3656064}
\BIBentrySTDinterwordspacing

\bibitem{paper:CDC}
A.~Santos and J.~M.~F. Moura, ``Emergent behavior in large scale networks,'' in
  \emph{2011 50th IEEE Conference on Decision and Control and European Control
  Conference {(CDC-ECC)}}, December 2011, pp. 4485 --4490.

\bibitem{SantosA}
A.~Santos, D.~Rente, R.~Seabra, and J.~M.~F. Moura, ``Learning the causal
  structure of networked dynamical systems under latent nodes and structured
  noise,'' in \emph{Proceedings of the 38th AAAI Coference on Artificial
  Intelligence}.\hskip 1em plus 0.5em minus 0.4em\relax AAAI, 2024.

\bibitem{Kuramoto_Seizure}
\BIBentryALTinterwordspacing
A.~Ranjan and S.~R. Gandhi, ``Propagation of transient explosive
  synchronization in a mesoscale mouse brain network model of epilepsy,''
  \emph{Network Neuroscience}, vol.~8, no.~3, pp. 883--901, 10 2024. [Online].
  Available: \url{https://doi.org/10.1162/netn\_a\_00379}
\BIBentrySTDinterwordspacing

\bibitem{synchronization_sensitivity}
J.~Hart, J.~Pade, T.~Pereira, T.~Murphy, and R.~Roy, ``Adding connections can
  hinder network synchronization of time-delayed oscillators,'' \emph{Physical
  review. E, Statistical, nonlinear, and soft matter physics}, vol.~92, p.
  022804, 09 2015.

\bibitem{pearl_2009}
J.~Pearl, \emph{Causality: Models, Reasoning, and Inference}, 2nd~ed.\hskip 1em
  plus 0.5em minus 0.4em\relax Cambridge, UK: Cambridge University Press, 2009.

\bibitem{SMOTE}
N.~V. Chawla, K.~W. Bowyer, L.~O. Hall, and W.~P. Kegelmeyer, ``Smote:
  synthetic minority over-sampling technique,'' \emph{J. Artif. Int. Res.},
  vol.~16, no.~1, p. 321–357, Jun. 2002.

\bibitem{spd_manifold}
Y.~Thanwerdas and X.~Pennec, ``O(n)-invariant riemannian metrics on spd
  matrices,'' \emph{Linear Algebra and its Applications}, vol. 661, pp.
  163--201, 2023.

\bibitem{science_brain_communi}
\BIBentryALTinterwordspacing
M.~T. de~Schotten and S.~J. Forkel, ``The emergent properties of the connected
  brain,'' \emph{Science}, vol. 378, no. 6619, pp. 505--510, 2022. [Online].
  Available: \url{https://www.science.org/doi/abs/10.1126/science.abq2591}
\BIBentrySTDinterwordspacing

\bibitem{brain_communic}
\BIBentryALTinterwordspacing
N.~Williams, A.~Ojanperä, F.~Siebenhühner, B.~Toselli, S.~Palva, G.~Arnulfo,
  S.~Kaski, and J.~Palva, ``The influence of inter-regional delays in
  generating large-scale brain networks of phase synchronization,''
  \emph{NeuroImage}, vol. 279, p. 120318, 2023. [Online]. Available:
  \url{https://www.sciencedirect.com/science/article/pii/S105381192300469X}
\BIBentrySTDinterwordspacing

\bibitem{homofilia}
\BIBentryALTinterwordspacing
L.~Ma, R.~Krishnan, and A.~L. Montgomery, ``Latent homophily or social
  influence? an empirical analysis of purchase within a social network,''
  \emph{Management Science}, vol.~61, no.~2, pp. 454--473, 2015. [Online].
  Available: \url{http://dx.doi.org/10.1287/mnsc.2014.1928}
\BIBentrySTDinterwordspacing

\bibitem{GRN_net}
\BIBentryALTinterwordspacing
A.~Hegde, T.~Nguyen, and J.~Cheng, ``Machine learning methods for gene
  regulatory network inference,'' \emph{Briefings in Bioinformatics}, vol.~26,
  no.~5, p. bbaf470, 09 2025. [Online]. Available:
  \url{https://doi.org/10.1093/bib/bbaf470}
\BIBentrySTDinterwordspacing

\bibitem{Javidi}
T.~J. A.~Lalitha and A.~D. Sarwate, ``Social learning and distributed
  hypothesis testing,'' \emph{IEEE Transactions on Information Theory},
  vol.~64, pp. 6161--6179, September 2018.

\bibitem{tomo_socialicassp}
V.~Matta, A.~Santos, and A.~H. Sayed, ``Exponential collapse of social beliefs
  over weakly-connected heterogeneous networks,'' in \emph{Proc. IEEE
  International Conference on Acoustics, Speech, and Signal Processing
  (ICASSP)}, Brighton, UK, May 2019, pp. 1--5.

\bibitem{social_learning_book}
V.~Matta, V.~Bordignon, and A.~Sayed, \emph{Social Learning: Opinion Formation
  and Decision-Making over Graphs}, 01 2025.

\bibitem{amin2023adaptive_churn_nb_ec}
A.~Amin, A.~Adnan, and S.~Anwar, ``An adaptive learning approach for customer
  churn prediction in the telecommunication industry using evolutionary
  computation and na{\"i}ve bayes,'' \emph{Applied Soft Computing}, vol. 137,
  p. 110103, 2023.

\bibitem{wagh2024churn_telecom_ml}
S.~K. Wagh, A.~A. Andhale, K.~S. Wagh, J.~R. Pansare, S.~P. Ambadekar, and
  S.~H. Gawande, ``Customer churn prediction in the telecom sector using
  machine learning techniques,'' \emph{Results in Control and Optimization},
  vol.~14, p. 100342, 2024.

\bibitem{prabadevi2023customer_churn_ml}
B.~Prabadevi, R.~Shalini, and B.~R. Kavitha, ``Customer churning analysis using
  machine learning algorithms,'' \emph{International Journal of Intelligent
  Networks}, vol.~4, pp. 145--154, 2023.

\bibitem{verbeke2012new2}
W.~Verbeke, D.~Martens, C.~{Mues}, and B.~{Baesens}, ``New insights into churn
  prediction in the telecommunication sector: A profit driven data mining
  approach,'' \emph{European Journal of Operational Research}, vol. 218, no.~1,
  pp. 211--229, 2012.

\bibitem{churn_pred_mutual}
K.~Ljubičić, A.~Merćep, and Z.~Kostanjcar, ``Churn prediction methods based
  on mutual customer interdependence,'' \emph{Journal of Computational
  Science}, vol.~67, p. 101940, 03 2023.

\bibitem{keramati2014improved}
A.~Keramati, R.~Jafari-Marandi, M.~Aliannejadi \emph{et~al.}, ``Improved churn
  prediction in telecommunication industry using data mining techniques,''
  \emph{Applied Soft Computing}, vol.~24, pp. 994--1012, 2014.

\bibitem{manzoor2024review_churn}
\BIBentryALTinterwordspacing
A.~Manzoor, M.~A. Qureshi, E.~Kidney, and L.~Longo, ``A review on machine
  learning methods for customer churn prediction and recommendations for
  business practitioners,'' \emph{IEEE Access}, vol.~12, pp. 70\,434--70\,463,
  2024. [Online]. Available:
  \url{https://ieeexplore.ieee.org/document/10531735}
\BIBentrySTDinterwordspacing

\bibitem{ouf2024hybrid_churn_telecom}
S.~Ouf, K.~T. Mahmoud, and M.~A. Abdel-Fattah, ``A proposed hybrid framework to
  improve the accuracy of customer churn prediction in the telecom industry,''
  \emph{Journal of Big Data}, vol.~11, no.~1, p.~70, 2024.

\bibitem{yang2025cnn_bilstm_attention_churn}
C.~Yang, G.~Xia, L.~Zheng, X.~Zhang, and C.~Yu, ``Customer churn prediction
  based on coordinate attention mechanism with {CNN}-{BiLSTM},''
  \emph{Electronics}, vol.~14, no.~10, p. 1916, 2025.

\bibitem{liu2024hybrid_nn_churn}
X.~Liu, G.~Xia, X.~Zhang, W.~Ma, and C.~Yu, ``Customer churn prediction model
  based on hybrid neural networks,'' \emph{Scientific Reports}, vol.~14, no.~1,
  p. 30707, 2024.

\bibitem{wu2021integrated_churn_segmentation}
S.-C. Wu, W.-C. Yau, T.-S. Ong, and S.-C. Chong, ``Integrated churn prediction
  and customer segmentation framework for telco business,'' \emph{IEEE Access},
  vol.~9, pp. 62\,118--62\,136, 2021.

\bibitem{lee2025tempodegraphnet}
\BIBentryALTinterwordspacing
M.~Lee and J.~Woo, ``Tempodegraphnet: Predicting user churn using dynamic
  social graphs and neural odes,'' \emph{PLoS ONE}, vol.~20, no.~1, p.
  e0321560, 2025. [Online]. Available:
  \url{https://journals.plos.org/plosone/article?id=10.1371/journal.pone.0321560}
\BIBentrySTDinterwordspacing

\bibitem{nimma2025integrated_sentiment_gnn}
D.~Nimma, P.~D. Sawant, S.~Pokhriyal, and S.~Rengaraj, ``Integrated sentiment
  analysis and graph neural networks for understanding consumer behavior,'' in
  \emph{Proceedings of the 5th International Conference on Advances in
  Electrical, Computing, Communication and Sustainable Technologies
  (ICAECT)}.\hskip 1em plus 0.5em minus 0.4em\relax IEEE, 2025, pp. 1--6.

\bibitem{Bassett_Network_Neuroscience}
D.~Bassett and O.~Sporns, ``Network neuroscience,'' \emph{Nature Neuroscience},
  vol.~20, pp. 353--364, 02 2017.

\bibitem{kahn2025network_structure_representations}
A.~E. Kahn, K.~Szymula, S.~Loman, E.~B. Haggerty, N.~Nyema, G.~K. Aguirre, and
  D.~S. Bassett, ``Network structure influences the strength of learned neural
  representations,'' \emph{Nature Communications}, vol.~16, p. 994, 2025.

\bibitem{general_network_emergence_brain}
E.~Gokcen, A.~Jasper, J.~Semedo, A.~Zandvakili, A.~Kohn, C.~Machens, and B.~Yu,
  ``Disentangling the flow of signals between populations of neurons,''
  \emph{Nature Computational Science}, vol.~2, pp. 512--525, 08 2022.

\bibitem{general_brain_network}
\BIBentryALTinterwordspacing
B.~Liu, J.~Sacks, and M.~D. Golub, ``Accurate identification of communication
  between multiple interacting neural populations,'' in \emph{Forty-second
  International Conference on Machine Learning}, 2025. [Online]. Available:
  \url{https://openreview.net/forum?id=O14GjxDAt3}
\BIBentrySTDinterwordspacing

\bibitem{rajkomar2018scalable}
A.~Rajkomar, E.~Oren, K.~Chen, and {et al.}, ``Scalable and accurate deep
  learning with electronic health records,'' \emph{NPJ Digital Medicine},
  vol.~1, p.~18, 2018.

\bibitem{barabasi2011interactome}
M.~Vidal, M.~E. Cusick, and A.-L. Barabási, ``Interactome networks and human
  disease,'' \emph{Cell}, vol. 144, no.~6, pp. 986--998, 2011.

\bibitem{stam2014modern}
C.~J. Stam, ``Modern network science of neurological disorders,'' \emph{Nature
  Reviews Neuroscience}, vol.~15, no.~10, pp. 683--695, 2014.

\bibitem{tomo_isit}
A.~Santos, V.~Matta, and A.~H. Sayed, ``Consistent tomography over diffusion
  networks under the low-observability regime,'' in \emph{Proc. IEEE
  International Symposium on Information Theory}, Colorado, USA, June 2018, pp.
  1--5.

\bibitem{PAMI_Partial}
J.~Yan, S.-W. Zhang, C.~Zhang, W.~Huang, J.~Shi, and L.~Chen, ``Dynamical
  causality under latent confounders for biological network reconstruction,''
  \emph{IEEE Transactions on Pattern Analysis and Machine Intelligence},
  vol.~48, no.~6, pp. 6703--6719, 2026.

\bibitem{Geigeretal15}
P.~Geiger, K.~Zhang, B.~Sch{\"o}lkopf, M.~Gong, and D.~Janzing, ``Causal
  inference by identification of vector autoregressive processes with hidden
  components,'' in \emph{Proc. International Conference on Machine Learning},
  vol.~37, July 2015, pp. 1917--1925.

\bibitem{pmlr-v286-negro25a}
\BIBentryALTinterwordspacing
M.~Negro, A.~Piras, R.~Ahsan, D.~Arbour, and E.~Zheleva, ``Relational causal
  discovery with latent confounders,'' in \emph{Proceedings of the Forty-first
  Conference on Uncertainty in Artificial Intelligence}, ser. Proceedings of
  Machine Learning Research, S.~Chiappa and S.~Magliacane, Eds., vol.
  286.\hskip 1em plus 0.5em minus 0.4em\relax PMLR, 21--25 Jul 2025, pp.
  3123--3154. [Online]. Available:
  \url{https://proceedings.mlr.press/v286/negro25a.html}
\BIBentrySTDinterwordspacing

\bibitem{mormann2007seizure}
F.~Mormann, R.~G. Andrzejak, C.~E. Elger, and K.~Lehnertz, ``Seizure
  prediction: the long and winding road,'' \emph{Brain}, vol. 130, no.~2, pp.
  314--333, 2007.

\bibitem{saidi2021cnn_svm_seizure}
A.~Saidi, S.~Ben~Othman, and S.~Ben~Saoud, ``A novel epileptic seizure
  detection system using scalp {EEG} signals based on hybrid {CNN-SVM}
  classifier,'' in \emph{Proceedings of the 2021 IEEE Symposium on Industrial
  Electronics \& Applications (ISIEA)}.\hskip 1em plus 0.5em minus 0.4em\relax
  IEEE, 2021.

\bibitem{amer2023pft_eeg}
N.~S. Amer, S.~B. Belhaouari, and H.~Bensmail, ``Progressive {F}ourier
  {T}ransform ({PFT}): Enhancing time-frequency representation of {EEG} signals
  for stress and seizure detection,'' in \emph{Proceedings of the 2023 IEEE
  International Conference on Bioinformatics and Biomedicine (BIBM)}.\hskip 1em
  plus 0.5em minus 0.4em\relax IEEE, 2023, pp. 2441--2448.

\bibitem{wu2020eemd_xgboost_seizure}
J.~Wu, T.~Zhou, and T.~Li, ``Detecting epileptic seizures in {EEG} signals with
  complementary ensemble empirical mode decomposition and extreme gradient
  boosting,'' \emph{Entropy}, vol.~22, no.~2, p. 140, 2020.

\bibitem{alharthi2022epileptic_eeg_detection}
M.~K. Alharthi, K.~M. Moria, D.~M. Alghazzawi, and H.~O. Tayeb, ``Epileptic
  disorder detection of seizures using {EEG} signals,'' \emph{Sensors},
  vol.~22, no.~17, p. 6592, 2022.

\bibitem{abdelhameed2021deep_seizure_children}
\BIBentryALTinterwordspacing
A.~Abdelhameed and B.~Magdy, ``A deep learning approach for automatic seizure
  detection in children with epilepsy,'' \emph{Frontiers in Computational
  Neuroscience}, vol.~15, p. 645610, 2021. [Online]. Available:
  \url{https://www.frontiersin.org/articles/10.3389/fncom.2021.645610/full}
\BIBentrySTDinterwordspacing

\bibitem{wang2021one_dimensional_cnn_seizure}
X.~Wang, X.~Wang, W.~Liu, Z.~Chang, T.~K{\"a}rkk{\"a}inen, and F.~Cong,
  ``One-dimensional convolutional neural networks for seizure onset detection
  using long-term scalp and intracranial {EEG},'' \emph{Neurocomputing}, vol.
  459, pp. 212--222, 2021.

\bibitem{zhang2022bidirectional_gru_seizure}
Y.~Zhang, S.~Yao, R.~Yang, X.~Liu, W.~Qiu, L.~Han, W.~Zhou, and W.~Shang,
  ``Epileptic seizure detection based on bidirectional gated recurrent unit
  network,'' \emph{IEEE Transactions on Neural Systems and Rehabilitation
  Engineering}, vol.~30, pp. 135--145, 2022.

\bibitem{liu2024hybrid_cnn_lstm_seizure}
S.~Liu, J.~Wang, S.~Li, and L.~Cai, ``Multi-dimensional hybrid bilinear
  {CNN}-{LSTM} models for epileptic seizure detection and prediction using
  {EEG} signals,'' \emph{Journal of Neural Engineering}, vol.~21, no.~6, p.
  066045, 2024.

\bibitem{wang2020dtf_cnn_seizure}
G.~Wang, D.~Wang, C.~Du, K.~Li, J.~Zhang, Z.~Liu, Y.~Tao, M.~Wang, Z.~Cao, and
  X.~Yan, ``Seizure prediction using directed transfer function and convolution
  neural network on intracranial {EEG},'' \emph{IEEE Transactions on Neural
  Systems and Rehabilitation Engineering}, vol.~28, no.~12, pp. 2711--2720,
  2020.

\bibitem{truong2018convolutional}
N.~Truong, A.~Nguyen, and T.~A. Khoa, ``Convolutional neural networks for
  epileptic seizure detection using eeg signals,'' in \emph{2018 IEEE
  International Conference on Bioinformatics and Biomedicine (BIBM)}, 2018, pp.
  1997--2001.

\bibitem{liu2020deep}
T.~Liu, C.~Zhang, J.~Wang, and J.~Li, ``Deep learning-based
  electroencephalogram analysis for epileptic seizure detection: A review,''
  \emph{IEEE Access}, vol.~8, pp. 113\,909--113\,923, 2020.

\bibitem{covert2019temporal_gcn_seizure}
I.~Covert, J.~Sun, and Y.~Halpern, ``Temporal graph convolutional networks for
  automatic seizure detection,'' in \emph{Proceedings of the Machine Learning
  for Healthcare Conference}, ser. Proceedings of Machine Learning Research,
  vol. 106.\hskip 1em plus 0.5em minus 0.4em\relax PMLR, 2019, pp. 160--176.

\bibitem{yang2022eeg_gnn_seizure}
F.~Yang \emph{et~al.}, ``{EEG} seizure prediction with graph neural networks,''
  \emph{Neural Networks}, vol. 145, pp. 199--210, 2022.

\bibitem{razi2022efficient_gcn_seizure}
A.~Razi, R.~Hussein, D.~Nhu, Q.~Wang, Z.~Zhou, and R.~Cipolla, ``Efficient
  graph convolutional networks for seizure prediction using {EEG} signals,''
  \emph{Frontiers in Neuroscience}, vol.~16, p. 967116, 2022.

\bibitem{quadri2024stacked_cnn_bilstm_seizure}
Z.~F. Quadri, M.~S. Akhoon, and S.~A. Loan, ``Epileptic seizure prediction
  using stacked {CNN}-{BiLSTM}: A novel approach,'' \emph{IEEE Transactions on
  Artificial Intelligence}, 2024.

\bibitem{Automated_Seizure_Detection}
K.~Yan, X.~Luo, L.~Ye, W.~Geng, J.~He, I.~N. Mu, X.~Hou, X.~Zan, J.~Ma, F.~Li,
  L.~Zhang, and X.~Chou, ``Automated seizure detection in epilepsy using a
  novel dynamic temporal-spatial graph attention network,'' \emph{Scientific
  Reports}, vol.~15, 05 2025.

\bibitem{Seizure_ICLR}
S.~Tang, J.~Dunnmon, K.~K. Saab, X.~Zhang, Q.~Huang, F.~Dubost, D.~Rubin, and
  C.~Lee-Messer, ``Self-supervised graph neural networks for improved
  electroencephalographic seizure analysis,'' in \emph{The Tenth International
  Conference on Learning Representations}, ser. ICLR'22.\hskip 1em plus 0.5em
  minus 0.4em\relax JMLR.org, 2022.

\bibitem{Seizure_Volkan_ICML}
A.~Afzal, G.~Chrysos, V.~Cevher, and M.~Shoaran, ``Rest: efficient and
  accelerated eeg seizure analysis through residual state updates,'' in
  \emph{Proceedings of the 41st International Conference on Machine Learning},
  ser. ICML'24.\hskip 1em plus 0.5em minus 0.4em\relax JMLR.org, 2024.

\bibitem{borovitskiy2021graphmatern}
V.~Borovitskiy, A.~Terenin, P.~Mostowsky, and M.~Deisenroth, ``Mat{\'e}rn
  gaussian processes on graphs,'' \emph{Advances in Neural Information
  Processing Systems}, vol.~34, pp. 14\,507--14\,519, 2021.

\bibitem{lindgren2011spde}
F.~Lindgren, H.~Rue, and J.~Lindstr{\"o}m, ``An explicit link between
  {G}aussian fields and {G}aussian markov random fields: the spde approach,''
  \emph{Journal of the Royal Statistical Society: Series B}, vol.~73, no.~4,
  pp. 423--498, 2011.

\bibitem{mejia2020bayesian}
A.~F. Mejia, Y.~Yue, D.~Bolin, F.~Lindgren, and J.~E. Taylor, ``Bayesian
  generalized linear modeling of cortical surface {fMRI},'' \emph{NeuroImage},
  vol. 218, p. 116935, 2020.

\bibitem{spencer2022spatial}
J.~Spencer, A.~F. Mejia, and J.~Taylor, ``Spatial {Bayesian} {GLM} on cortical
  surfaces for {fMRI} data,'' \emph{NeuroImage}, vol. 257, p. 119316, 2022.

\bibitem{siden2021bayesian}
P.~Sid{\'e}n, A.~Eklund, D.~Bolin, and F.~Lindgren, ``Fast {Bayesian}
  whole-brain {fMRI} analysis with a {3D} spatial model,'' \emph{NeuroImage},
  vol. 242, p. 118456, 2021.

\bibitem{lindgren2024tenyears}
F.~Lindgren, D.~Bolin, and H.~Rue, ``The {SPDE} approach for {Gaussian} and
  non-{Gaussian} fields: ten years later,'' \emph{Spatial Statistics}, 2024, in
  press.

\bibitem{brainaugusto}
R.~Liégeois, A.~Santos, V.~Matta, D.~Van De~Ville, and A.~H. Sayed,
  ``Revisiting correlation-based functional connectivity and its relationship
  with structural connectivity,'' \emph{Network Neuroscience}, vol.~4, no.~4,
  pp. 1235--1251, 2020.

\bibitem{prob_graph_modes}
D.~Koller and N.~Friedman, \emph{Probabilistic Graphical Models: Principles and
  Techniques - Adaptive Computation and Machine Learning}.\hskip 1em plus 0.5em
  minus 0.4em\relax The MIT Press, 2009.

\bibitem{Bento2009}
A.~Montanari and J.~Pereira, ``Which graphical models are difficult to learn?''
  in \emph{Advances in Neural Information Processing Systems}, vol.~22,
  Vancouver, Canada, 2009.

\bibitem{SMachado}
S.~Machado, A.~Sridhar, P.~Gil, J.~Henriques, J.~M.~F. Moura, and A.~Santos,
  ``Recovering the graph underlying networked dynamical systems under
  partial-observability: a deep learning approach,'' in \emph{Proceedings of
  the 37th AAAI Coference on Artificial Intelligence}.\hskip 1em plus 0.5em
  minus 0.4em\relax AAAI, 2023.

\bibitem{MatrixAnalysis}
R.~A. Horn and C.~R. Johnson, \emph{Matrix Analysis}, 2nd~ed.\hskip 1em plus
  0.5em minus 0.4em\relax New York, NY, USA: Cambridge University Press, 2012.

\bibitem{Kato1995}
T.~Kato, \emph{Perturbation Theory for Linear Operators}, 2nd~ed., ser.
  Classics in Mathematics.\hskip 1em plus 0.5em minus 0.4em\relax Berlin
  Heidelberg: Springer-Verlag, 1995.

\end{thebibliography}


\end{document}